%% file: main.tex
\documentclass[10pt,letterpaper]{article}
\usepackage{amsmath}

\usepackage{amsfonts}
\usepackage[english]{babel}
\usepackage{amsthm}
\usepackage{algorithm}
\usepackage{tabularray}
\usepackage{caption}
\usepackage{subcaption}
\usepackage{hhline}
\usepackage{array}
\usepackage{multirow}
\usepackage{graphicx}
\usepackage{mathtools}
\usepackage[normalem]{ulem}
\usepackage{url}
\usepackage{xcolor}
\usepackage{hyperref}
\usepackage{booktabs}
\usepackage{siunitx} 
\usepackage{tabularx}
\usepackage{iclr2026_conference}

\iclrfinalcopy

\DeclareMathOperator{\Tr}{Tr}

\theoremstyle{plain}
\newtheorem{theorem}{Theorem}[section]
\newtheorem{lemma}[theorem]{Lemma}

\theoremstyle{definition}
\newtheorem{definition}[theorem]{Definition}

\title{Why Some Models Resist Unlearning: A Linear Stability Perspective}
\author{Wei-Kai Chang \\
Purdue University \\
\texttt{chang986@purdue.edu}
\And
Rajiv Khanna \\
Purdue University \\
\texttt{rajivak@purdue.edu}
}
\date{}
\begin{document}
\maketitle

\begin{abstract}

Machine unlearning—the ability to erase the effect of specific training samples without retraining from scratch—is critical for privacy, regulation, and efficiency. However, most progress in unlearning has been empirical, with little theoretical understanding of when and why unlearning works. We tackle this gap by framing unlearning through the lens of asymptotic linear stability to capture the interaction between optimization dynamics and data geometry. The key quantity in our analysis is data coherence - the cross‑sample alignment of loss‑surface directions near the optimum. We decompose coherence along three axes: within the retain set, within the forget set, and between them, and prove tight stability thresholds that separate convergence from divergence. To further link data properties to forgettability, we study a two‑layer ReLU CNN under a signal‑plus‑noise model and show that stronger memorization makes forgetting easier: when the signal‑to‑noise ratio (SNR) is lower, cross‑sample alignment is weaker, reducing coherence and making unlearning easier; conversely, high‑SNR, highly aligned models resist unlearning. For empirical verification, we show that Hessian tests and CNN heatmaps align closely with the predicted boundary, mapping the stability frontier of gradient‑based unlearning as a function of batching, mixing, and data/model alignment.  Our analysis is grounded in random matrix theory tools and provides the first principled account of the trade-offs between memorization, coherence, and unlearning.
\end{abstract}

\input{intro2}
\input{related_work}
\input{newtheory2}
\input{experiments}
\bibliographystyle{icml2026}   
\bibliography{bib/weikai}
\clearpage
\onecolumn
\appendix
\input{appendix}

\clearpage

\end{document}

%% file: intro2.tex
\section{Introduction}
\label{sec:intro}

Machine unlearning – the ability to erase specific training samples’ influence from a model – is critical for compliance, privacy, and model maintenance. Practically, retraining an $N$-sample model from scratch after removing even one sample incurs prohibitive cost, motivating a flurry of approximate unlearning methods. \citep{shen2024labelagnosticforgettingsupervisionfreeunlearning, shen2024camudisentanglingcausaleffects, hatua2024machineunlearningusingforgetting, bourtoule2020machineunlearning, Cao2015TowardsMS,golatkar2020eternalsunshinespotlessnet, ginart2019makingaiforgetyou, golatkar2021mixedprivacyforgettingdeepnetworks, graves2020amnesiacmachinelearning, sekhari2021rememberwantforgetalgorithms}.  However, despite this rapid progress, most unlearning work remains empirical and ad-hoc. We lack a unifying theoretical framework to predict when and why a given model can be efficiently unlearned. Notably, even initial theoretical treatments (e.g. from a differential-privacy viewpoint providing deletion guarantees \citep{sekhari2021rememberwantforgetalgorithms,chien2025langevinunlearningnewperspective}) do not explain the dynamics of forgetting or the interaction between the forget and retain sets. This gap motivates our work: we seek a principled understanding of the optimization dynamics of unlearning, grounded in the geometry of the model’s loss landscape.

\noindent\textbf{Our approach.} We study unlearning through the lens of asymptotic linear stability, analyzing perturbation dynamics around a pre-trained minimum rather than adopting the standard fine-tuning view \citep{ding2025understandingfinetuningapproximateunlearning}. Unlike training from random initialization, unlearning begins near a local optimum; we characterize whether forgetting updates remain confined (stable, unlearning fails) or induce divergence (unstable, unlearning succeeds). Building on prior stability analyses of SGD near minima \citep{ma2021linearstabilitysgdinputsmoothness,dexter2024precisecharacterizationsgdstability}, this framework yields a tractable criterion for the transition between convergence and catastrophic forgetting. Our analysis is governed by data coherence, which measures cross-sample gradient alignment near the optimum. We decompose coherence into three components, within the retain set, within the forget set, and between retain and forget sets, and derive stability thresholds in terms of these quantities. This reveals a direct link between data geometry and unlearning: high retain–forget coherence causes forgetting updates to damage retained performance, while low inter-coherence admits stable directions that forget target data without harming the rest. Together, these results define a stability frontier in coherence space that separates regimes where gradient-based unlearning succeeds from those where it fails.

Our framework also yields an intriguing insight into the relationship between training memorization and subsequent forgettability. Interestingly and perhaps surprisingly, we find that stronger memorization can make forgetting easier. In our framework, memorization corresponds to a regime of complex data where the model fits idiosyncratic details. We formalize this by adapting a two-layer ReLU CNN signal-plus-noise model from prior work on benign overfitting \citep{kou2023benignoverfittingtwolayerrelu}. Using random matrix theory tools, we prove that when the signal-to-noise ratio (SNR) in the data is lower (i.e. the model has to memorize more spurious noise), the cross-sample alignment of gradients is weaker – reducing the coherence terms, allowing effective forgetting of those samples. In contrast, a model trained on high-SNR data (with strongly aligned, dominant features) has very coherent gradients that push it to the edge of the stability frontier, making it resist unlearning – any attempt to forget one sample’s influence will strongly perturb many others. We analytically identify this coherence-controlled stability boundary, and our experimental results confirm the trend: e.g. Hessian eigenvalue tests and CNN heatmaps of forget vs. retain influence align with the predicted boundary, mapping out how changes in batch size, mixing of forget/retain data, or network alignment affect the outcome of unlearning.

\textbf{Contributions} 
To summarize, our contributions are as follows: (1) We develop the first theoretical framework for machine unlearning based on linear stability analysis to address \emph{what local optimization dynamics govern unlearning?}. We derive precise conditions (in terms of Hessian spectra and data coherence) under which standard gradient-based unlearning converges/diverges. (2) We address \emph{how do the retain and forget sets interact, quantitatively, in determining stability?} Towards this goal, we introduce novel coherence metrics to quantify the retain–forget interaction, and prove how each coherence component (retain-retain, forget-forget, retain-forget) influences the stability of the unlearning process. These results formally characterize the joint role of data geometry and data distribution in forgetting dynamics. (3) To address \emph{how does a model’s propensity to memorize interact with its ability to forget?}, we establish a surprising link between memorization and forgettability: using a two-layer CNN with controllable noise, we prove that increased memorization (lower SNR) expands the range of stable unlearning (making forgetting easier), whereas high SNR (less overfitting) shrinks it (more resistant to forgetting). This is, to our knowledge, the first result to rigorously connect a model’s generalization/memorization properties to its unlearning behavior. (4) Our empirical tests measure stability indicators (e.g. sharpness via Hessian eigenvalues) and unlearning performance under various conditions, and show strong agreement with the theoretical stability frontier. Taken together, our results provide the first principled account of the trade-offs between memorization, data coherence, and unlearning in modern ML models.

%% file: related_work.tex
\section{Related Work}

\textbf{Linear stability} Prior work uses linear stability to characterize the convergence–divergence boundary via alignment between SGD noise and the loss landscape \citep{wu2018sgd, wu2022alignmentpropertysgdnoise, wu2023implicitregularizationdynamicalstability}. These alignment properties have been linked to simplicity bias and implicit regularization in SGD, including extensions to higher-order noise moments \citep{ma2021linearstabilitysgdinputsmoothness}. More recently, \citet{dexter2024precisecharacterizationsgdstability} introduced data coherence, which directly quantifies cross-sample gradient alignment and provides a fine-grained view of sample interactions.

Compared to gradient flow or dynamical systems approximations, linear stability makes explicit connections between model architecture, data distribution, and optimization dynamics. However, unlearning requires analyzing coupled retain–forget interactions, which are absent from classical stability settings. We address this gap by generalizing coherence to mixed retain–forget regimes, yielding new stability criteria for unlearning and the first formal link between memorization and forgetting within the linear stability framework.

\textbf{Theoretical works on Unlearning.} A number of foundational studies analyze machine unlearning by examining the optimization trajectory and its deviation from the original training dynamics. For example, \cite{golatkar2020eternalsunshinespotlessnet} model unlearning under a quadratic loss and characterize the drift in optimization trajectories by comparing the original and unlearned weights in the infinite-time limit. \cite{ding2025understandingfinetuningapproximateunlearning} study approximate unlearning in linear models via weight-space distances, using the loss difference between the fine-tuned and unlearned models as the central metric. The recent work \citep{mavrothalassitis2025ascentfailsforget} analyzes linear logistic regression and expresses forgetting through closed-form weight difference between the original optimum and the unlearned solution.~\cite{thudi2022unrollingsgdunderstandingfactors} unroll the SGD recursion to study unlearning dynamics through linearized gradient-flow approximation, and define unlearning directly in weight space, linking it to membership inference vulnerability. For further discussion, please see Appendix \ref{sec:divergence and unlearning} and Appendix \ref{additional related works}. Our work builds upon this prior theoretical foundation but departs conceptually: rather than characterizing forgetting via distance between solutions, we introduce a stability-based perspective grounded in optimization dynamics. By analyzing asymptotic linear stability of the unlearning update operator, we show how interactions between the retain and forget sets—mediated through curvature and data coherence—govern whether the optimizer will remain near the original minimum or diverge away, thereby determining whether unlearning is feasible.

%% file: newtheory2.tex
\section{Theory}
\label{sec:theory}
\subsection{Background}
\label{sec:background} \vspace{-0.5em}
\noindent\textbf{Linear stability around minima.}
Linear stability provides a principled lens for analyzing the local dynamics of iterative optimization near a critical point (e.g., local minima or saddles) by linearizing the update map and studying the resulting linear time-varying system to characterize convergence/divergence behavior of stochastic iterative algorithms for that critical point. This perspective underlies modern convergence analyses of SGD and its variants and, more recently, has proved effective for characterizing generalization-relevant phenomena such as rapid escape from sharp minima \citep{wu2018sgd,dexter2024precisecharacterizationsgdstability}. In our context, we consider a loss $L(w) = \frac{1}{n}\sum_{i=1}^n \ell_i(w)$ for model parameters $w \in \mathbb{R}^d$. Let $w^*$ be a local minimum. For a small perturbation $\delta$ around $w^*$, a first-order Taylor expansion gives
\[
\nabla L(w^* + \delta) \approx \nabla^2 L(w^*)\, \delta,
\]
since $\nabla L(w^*) = 0$. This linearization suggests that near $w^*$, the gradient is approximately given by the Hessian $H = \nabla^2 L(w^*)$ times the perturbation. Since we are only interested in the dynamics of the optimizer (rather than its absolute position), without loss of generality we take $w^* = 0$ as in prior works. For more discussion regarding assumption used in our work, please refer appendix \ref{sec: assmpution}.

\noindent\textbf{Stochastic gradient updates.}
We are interested in the dynamics of stochastic gradient descent (SGD) near $w^*$. A generic SGD update can be written as
\[
w_{t+1} = w_t - \eta\, g_t,
\]
where $\eta>0$ is the learning rate and $g_t$ is the stochastic gradient at step $t$. In the neighborhood of $w^*$, using the linear approximation, we can write $g_t \approx H_t\, w_t$, where $H_t$ is a random Hessian matrix (a mini-batch estimate of $H$). Thus the update becomes
\begin{equation}\label{eq:lin-update}
    w_{t+1} = w_t - \eta\, H_t\, w_t = (I - \eta H_t)\, w_t\,.
\end{equation}
Here $H_t = \frac{1}{B}\sum_{i\in D_t} H_i$ is the average Hessian over the mini-batch $D_t$ of size $B$, and $H_i = \nabla^2 \ell_i(w^*)$. By construction $H = \frac{1}{n}\sum_{i=1}^n H_i$. Following \citet{dexter2024precisecharacterizationsgdstability}, we model mini-batch sampling via Bernoulli selection (each data point is included in the batch independently with probability $B/n$). 

\textbf{Sample wise gradient.} In the above equation, we assume that sample wise gradient at $w^*$ to be zero i.e.,
\begin{equation}
    \begin{split}
       \nabla l_i(w) &= \nabla l_i (w^*) + H_i (w - w^*). \\
       &= H_i (w - w^*).
    \end{split}
\end{equation}

This means that all gradients are results from curvatures. This assumption follows the standard linear interpolating regime used in prior works such as \citep{dexter2024precisecharacterizationsgdstability, wu2023implicitregularizationdynamicalstability} and verified through empirical and theoretical studies \citep{tang2023dpadambcdpadamactuallydpsgd, chizat2020implicitbiasgradientdescent}. In this regime, the local dynamics are dominated by the Hessian curvature to align with our focus. For more discussion regarding assumption used and limitation in our work, please refer appendix \ref{sec: assmpution}.

\noindent\textbf{Unlearning update rule.}
To analyze machine unlearning (where a subset of the training data, called the \emph{forget set}, is to be “forgotten” while the remaining data in the \emph{retain set} is preserved), we adopt the update rule of \citet{kurmanji2023unboundedmachineunlearning}. In this scheme, each step performs simultaneous gradient descent on the retain set and ascent on the forget set. Intuitively, this means we take a step that decreases loss on retained data while increasing loss on data that should be unlearned. Many gradient-based unlearning algorithms can be viewed as variants of this approach with different weighting of these components. We use $n_f,f_r$ for number of forget and retain samples respectively. In our linearized framework, the update with forget importance hyper-parameter $\alpha\in [0,1]$ is:
\begin{equation}\label{eq:unlearn-update}
    w_{k+1} = w_k - \eta\Big[(1-\alpha)\frac{1}{B}\sum_{i\in D_{r,k}} H_i\,w_k \;-\; \alpha\,\frac{1}{B}\sum_{i\in D_{f,k}} H_i\,w_k\Big]\,,
\end{equation}
where $D_{r,k}$ and $D_{f,k}$ denote the mini-batch of retain-set and forget-set examples at step $k$, respectively. This can be rewritten in operator form as
\begin{equation}\label{eq:Jk-def}
    w_{k+1} = J_k\, w_k,\;\;\;
    J_k \;=\; I \;-\; \eta(1-\alpha)\frac{1}{B}\sum_{i\in D_{r,k}} H_i \;+\; \eta \alpha\,\frac{1}{B}\sum_{i\in D_{f,k}} H_i\,.
\end{equation}
The random linear operator $J_k$ captures the combined effect of the retain and forget gradients at step $k$. Note that $J_k$ is itself random due to sampling of a mini-batch from each set.

A central question in linear stability analysis is whether the iterates remain near the original optimum or diverge away. To quantify this, we examine the expected squared norm of the parameters after $k$ steps, $\mathbb{E}\|w_k\|^2 = \mathbb{E}[w_k^T w_k]$. Starting from an isotropic small perturbation $w_0$ (we assume $w_0 \sim \mathcal{N}(0, I)$ without loss of generality), one can expand $w_k = J_{k-1}\cdots J_0\, w_0$. This yields
\begin{equation}
    \begin{split}
    \label{eq:Ek-norm}
    \mathbb{E}\|w_k\|^2 \;&=\; \mathbb{E}[w_0^T (J_0^T \cdots J_{k-1}^T J_{k-1}\cdots J_0) w_0]\\
    &\;=\; \mathbb{E}\Tr\!\big(J_{k-1}\cdots J_0 J_0^T \cdots J_{k-1}^T\big)\,,
    \end{split}
\end{equation}
where we used $\mathbb{E}[w_0 w_0^T] = I$ in the final equality. Eq~\eqref{eq:Ek-norm} is the key quantity we will analyze to determine stability: if $\mathbb{E}\|w_k\|^2$ remains bounded (or decays) as $k\to\infty$, the unlearning process is \emph{stable} (convergent) around $w^*$, whereas if $\mathbb{E}\|w_k\|^2 \to \infty$, the process is \emph{unstable} (diverges, escaping $w^*$). For more discussion regarding the assumptions used in our work, please refer Appendix \ref{sec: assmpution}, and for more discussion regarding divergence and unlearning, please refer to Appendix \ref{sec:divergence and unlearning}.

\subsection{Coherence Measures for Unlearning}
\paragraph{Coherence in single-dataset SGD.}
Before introducing our new coherence measures tailored to unlearning, we briefly review the original notion of \emph{Hessian coherence} from \citet{dexter2024precisecharacterizationsgdstability} for standard (single dataset) learning. The coherence quantifies the alignment between per-sample Hessians in the training set. Intuitively, if all samples induce very aligned curvature directions, SGD will experience less “randomness” and more stable updates, whereas if each sample's loss landscape is oriented differently, the optimization dynamics are more erratic.

\begin{definition}[Coherence, single set \citep{dexter2024precisecharacterizationsgdstability}]\label{def:coherence-single}
    Given a collection of positive semidefinite (PSD) Hessian matrices $\{H_i: i\in[n]\}$ for $n$ training samples, the \emph{coherence matrix} $S\in\mathbb{R}^{n\times n}$ is defined by
    \[
    S_{ij}^{\text{single}} \;=\; \|\,H_i^{1/2} \,H_j^{1/2}\,\|_F\,,
    \]
    the Frobenius norm of the product of the square-root Hessians of sample $i$ and $j$. The associated \emph{coherence measure} is
    \[
    \sigma^{\text{single}} \;=\; \frac{\lambda_{\max}(S^{\text{single}})}{\max_{i\in[n]}\lambda_{\max}(H_i)}\,,
    \]
    i.e. the largest eigenvalue of $S^{\text{single}}$ normalized by the largest individual sample Hessian eigenvalue.
\end{definition}

\noindent Intuitively, $\sigma^{\text{single}}$ close to 1 indicates that the top curvature directions of all samples are closely aligned (high coherence), whereas a small $\sigma^{\text{single}}$ indicates disparate or orthogonal curvatures across samples. Prior work showed that higher coherence $\sigma^{\text{single}}$ correlates with greater stability of SGD. In other words, when the loss landscapes of different samples “point” in similar directions, gradient steps reinforce each other and it is more resistant for SGD to diverge away from the optimum. For discussion about the practical feasibility of calculation for coherence matrix, please refer to Appendix \ref{sec:Feasibility of computation}.

\paragraph{Coherence with retain and forget sets.}
In an unlearning scenario, we have two disjoint sets of samples: the retain set $D_r$ (of size $n_r$) and the forget set $D_f$ (of size $n_f$). Coherence within each set (retain vs. forget) is not sufficient to describe the behavior of the combined ascent-descent dynamics. We need to also quantify the interaction \emph{between} the two sets. We therefore introduce a series of definitions that extend coherence to the multi-set setting.

First, we define a weighted combination of Hessians from a retain-forget pair, which will serve as an effective “mixing Hessian:”
{\small
\begin{definition}[Mix-Hessian]
    \label{def:mixHessian}
    \begin{equation}
        \begin{split}
            D &\coloneqq \frac{1}{n_rn_f}\sum_{r\in D_r, f\in D_f} \frac{C_r^{\frac{1}{2}}}{C_r^{\frac{1}{2}}+C_f^{\frac{1}{2}}} H_{r} + \frac{C_f^{\frac{1}{2}}}{C_r^{\frac{1}{2}}+C_f^{\frac{1}{2}}}H_{f}\\
            &= \frac{1}{n_rn_f} \sum_{rf} D_{rf},
        \end{split}
    \end{equation}
    \label{def:mix hessian}
\end{definition}
}
Here $C_r = \eta^2 (1-\alpha)^2 \frac{1}{n_r}(\frac{1}{B}-\frac{1}{n_r})$, $C_f = \eta^2 \alpha^2 \frac{1}{n_f}(\frac{1}{B}-\frac{1}{n_f})$, $D_{rf} = \frac{C_r^{\frac{1}{2}}}{C_r^{\frac{1}{2}}+C_f^{\frac{1}{2}}} H_{r} + \frac{C_f^{\frac{1}{2}}}{C_r^{\frac{1}{2}}+C_f^{\frac{1}{2}}}H_{f}$. The constants $C_r$ and $C_f$ reflect the relative contribution of retain vs forget Hessians to the second-moment dynamics of $w_k$ (these arise from the SGD noise analysis in Lemma~\ref{lemma:stability-unlearning} later). The mix-Hessian $D$ aggregates the pairwise influence of retain and forget sets; it effectively summarizes how the two sets jointly affect curvature when considered together in the update. Next, analogous to the single-set case, we define a coherence matrix that captures alignment across \emph{pairs} of retain/forget examples: 

\begin{definition}[Mix-coherence matrix]\label{def:mix-coherence-mat}
    Construct an index set for all retain–forget pairs: $\mathcal{P} = \{(r,f): r\in D_r,\,f\in D_f\}$ of size $|\mathcal{P}| = n_r n_f$. The \emph{mix-coherence matrix} $S \in \mathbb{R}^{|\mathcal{P}|\times |\mathcal{P}|}$ is defined entrywise by
    \[
    S_{(r,f), (r'\!,f')} \;=\; \big\|\, D_{rf}^{1/2}\; D_{r'\!f'}^{1/2}\big\|_F\,,
    \]
    for any $(r,f), (r'\!,f') \in \mathcal{P}$.
\end{definition}

\noindent In words, $S$ measures the alignment between every pair of mixed Hessians $D_{rf}$ and $D_{r'\!f'}$. Finally, we define an overall coherence measure for unlearning, generalizing the single-set $\sigma$:

\begin{definition}[Unlearning Coherence Measure]\label{def:unlearning-coherence}

    The {unlearning coherence} is
    \[
    \sigma \;=\; \frac{\lambda_{\max}(S)}{\max_{(r,f)\in\mathcal{P}}\;\lambda_{\max}(D_{rf})}\,,
    \]
    the leading eigenvalue of the mix-coherence matrix $S$ normalized by the largest eigenvalue among all individual $D_{rf}$ matrices.
\end{definition}

This definition reduces to the original coherence measure in the limit where one of the sets is absent (e.g. $n_f=0$ or $\alpha=0$ yields only a retain set). It simultaneously captures the within-set coherence and the cross-set coupling. Intuitively, if the retain and forget sets are highly \emph{aligned} in terms of curvature directions, the mix-coherence $\sigma$ will be large. In that case, performing ascent on forget and descent on retain will tend to cancel out: the update directions from the two sets are similar but with opposite sign, leading to minimal movement away from $w^*$. This predicts stability for the current optimum (i.e. resistance to unlearning). Conversely, if the two sets are incoherent (small $\sigma$), their Hessians push in different directions; the ascent on forget set will not be canceled by descent on retain, making it easier for the iterates $w_k$ to escape the original minimum. In summary, our multi-set coherence measure $\sigma$ quantifies how conducive the data geometry is to either divergence or convergence during unlearning. To our knowledge, this is the first work to explicitly incorporate multiple data subsets into a stability analysis of optimization.

\subsection{Linear Stability Analysis of Unlearning}
In the theory section, we study a more fundamental problem regarding whether a set is unlearnable in asymptotic manner through optimization instead of in a fine-tuning regime.
We now leverage the above framework to derive conditions under which the unlearning dynamics \eqref{eq:unlearn-update} will converge or diverge. In our work,  A key technical challenge is that, unlike in the single-set case, the influence of SGD noise cannot be captured by a simple closed-form recursion. This is because the gradient noise now comes from two interleaved sources (retain and forget sets) with potentially different magnitudes.

We begin with a lemma that describes the evolution of the second moment $\mathbb{E}\|w_k\|^2$ in terms of a recursive sequence of matrices $N_k$. This lemma generalizes the stability condition from \citet{dexter2024precisecharacterizationsgdstability} to account for the alternating ascent/descent updates.

\begin{lemma}[Stability recurrence for unlearning]\label{lemma:stability-unlearning}
    Consider the unlearning update operator $J_k$ defined in \eqref{eq:Jk-def}. Define a sequence of PSD matrices $\{N_k\}_{k\ge0}$ by $N_0 = I$ and for $k\ge 1$:
    \begin{equation}\label{eq:Nk-recursion}
        N_k \;=\; C_f \sum_{i\in D_f} H_i\, N_{k-1}\, H_i \;+\; C_r \sum_{i\in D_r} H_i\, N_{k-1}\, H_i\,,
    \end{equation}
    with $C_r, C_f$ as given in Definition~\ref{def:mixHessian}. Also let $M_k = J^{2k} + N_k$. ($J= I -\eta (1 - \alpha) H_R + \eta \alpha H_F$ where $H_R$ and $H_F$ ar full Hessian of retain and forget set. See definition~\ref{def: full forget and retain hessian}) Then:
    \begin{enumerate}
        \item \textbf{(Lower bound)} $\displaystyle \mathbb{E}\Tr\!\big(J_{0}^T\cdots J_{k-1}^T J_{k-1}\cdots J_{0}\big) \;\ge\; \Tr(M_k)$. Moreover, if $\Tr(N_k) \to \infty$ as $k\to\infty$, then $\mathbb{E}\|w_k\|^2 \to \infty$ as well.
        \item \textbf{(Upper bound)} If at each step $J_k$ is spectrally bounded as $(1-\epsilon)I \succeq J \succeq -(1-\epsilon)I$ for some $\epsilon\in(0,1)$ (i.e. all eigenvalues of $J$ lie in $[-(1-\epsilon),\,1-\epsilon]$), then
        {\small
        \begin{equation}
        \begin{split}
            \mathbb{E}\Tr\!\big(J_{0}^T\cdots &J_{k-1}^T J_{k-1}\cdots J_{0}\big) \;\le\; \\&\sum_{r=0}^{k-1} \binom{k}{r} (1-\epsilon)^{2(k-r)}\, \Tr(N_r)\,.
        \end{split}
        \end{equation}}
        If in addition $\Tr(N_r) \le \epsilon$ for all $r$, then $\mathbb{E}\|w_k\|^2 \to 0$ as $k\to\infty$ (the unlearning update converges in mean square).
    \end{enumerate}
\end{lemma}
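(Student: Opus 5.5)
The plan is to follow the second-moment argument of \citet{dexter2024precisecharacterizationsgdstability}, with two independent Bernoulli-sampled sources in place of one. Write $J_k = J + E_k$, where $J = \mathbb{E}J_k$. With Bernoulli selection at rate $B/n_r$ and $B/n_f$, the noise is
\[
E_k = -\eta(1-\alpha)\frac{1}{B}\sum_{i\in D_r}(\xi_i - \tfrac{B}{n_r})H_i + \eta\alpha\frac{1}{B}\sum_{i\in D_f}(\zeta_i-\tfrac{B}{n_f})H_i,
\]
and all indicators $\xi_i,\zeta_i$ are independent. The key computation is the conditional second moment: for any fixed PSD $X$,
\[
\mathbb{E}[J_k X J_k^T] = J X J + \mathbb{E}[E_k X E_k].
\]
Because the indicators are independent and centered, the cross terms vanish. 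Hence $\mathbb{E}[E_k X E_k] = \sum_{i\in D_r} c_r H_i X H_i + \sum_{i\in D_f} c_f H_i X H_i$, with $c_r = \eta^2(1-\alpha)^2 \mathrm{Var}(\xi_i)/B^2$ and the analogous $c_f$. I would check that this normalization reproduces exactly $C_r=\eta^2(1-\alpha)^2\frac{1}{n_r}(\frac1B-\frac1{n_r})$ and $C_f$. Define the linear map $\mathcal{T}(X)=C_f\sum_{D_f}H_iXH_i + C_r\sum_{D_r}H_iXH_i$, so that $N_k=\mathcal{T}^k(I)$. The map $\mathcal{T}$ is monotone: $X\succeq Y\Rightarrow \mathcal{T}(X)\succeq\mathcal{T}(Y)$.

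\textbf{Lower bound.} Set $P_k=\mathbb{E}[J_{k-1}\cdots J_0J_0^T\cdots J_{k-1}^T]$. Conditioning on the first $k-1$ factors gives $P_k = J P_{k-1} J + \mathcal{T}(P_{k-1})$; the ordering of the product is immaterial for the trace since the steps are i.i.d. By induction I would show $P_k \succeq J^{2k} + N_k = M_k$. Unrolling the recursion, $P_k$ is a sum over all words in $\{J\cdot J,\ \mathcal{T}\}$ of length $k$ applied to $I$, and every term is PSD. Keeping only the all-$J$ word and the all-$\mathcal{T}$ word gives the bound, and taking the trace yields part 1. Divergence follows because $\mathbb{E}\|w_k\|^2=\Tr P_k\ge \Tr N_k$.

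\textbf{Upper bound.} Expand the same word sum. A word with $r$ applications of $\mathcal{T}$ and $k-r$ conjugations by $J$ is handled in two steps. First, conjugation by $J$ with $\|J\|\le 1-\epsilon$ satisfies $\Tr(JXJ)\le(1-\epsilon)^2\Tr X$ for PSD $X$. Second, to move all $J$-conjugations to the outside I would use $JXJ\preceq (1-\epsilon)^2\lambda_{\max}$-type bounds; more cleanly, I would use $JXJ\preceq \|J\|^2 X$ in the Loewner sense, which holds because $J$ is symmetric and commutes only trivially — this needs care. Combined with monotonicity of $\mathcal{T}$, each word is then bounded by $(1-\epsilon)^{2(k-r)}\mathcal{T}^r(I)$ in trace. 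Counting words gives $\binom{k}{r}$ and hence the stated sum.

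I expect the main obstacle to be the second step. $JXJ\preceq\|J\|^2X$ is false in general for non-commuting $J,X$, so I would instead bound traces inductively. One option is to use $\Tr(\mathcal{T}(X))\le \|\mathcal{T}^*(I)\|\Tr X$. Another is to prove the inequality in the form $\Tr(\text{word})\le(1-\epsilon)^{2(k-r)}\Tr\mathcal{T}^r(I)$ only when the $J$'s appear first, and bound the other orders by commuting past via $\mathcal{T}(JXJ)\preceq \|J\|^2\,\mathcal{T}(\|X\| I)$-type estimates. If no exact argument is available, I would accept the statement as the paper's heuristic ordering argument.

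\textbf{Convergence.} Given $\Tr N_r\le\epsilon$, the sum is at most $\epsilon\sum_r\binom kr(1-\epsilon)^{2(k-r)}$. This tends to zero as $k\to\infty$ once $N_r$ in fact decays geometrically. I would strengthen the hypothesis to $\Tr N_r\le\epsilon^r$, which gives the bound $((1-\epsilon)^2+\epsilon)^k\to0$. This is the intended reading.
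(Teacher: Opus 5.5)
\textbf{Part 1 (lower bound).} Your argument is correct and is the paper's proof in unrolled form. The paper inducts on $P_k\succeq M_k$ using $\mathbb{E}[J_kXJ_k]=JXJ+\mathcal{T}(X)$ and discards the PSD terms $JN_{k-1}J+\mathcal{T}(J^{2(k-1)})$; that is the same as keeping only the all-$J$ and all-$\mathcal{T}$ words.

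\textbf{Part 2 (binomial upper bound): the gap.} You are right that $JXJ\preceq\|J\|^2X$ fails for non-commuting $J,X$. But this is exactly the step the paper's induction takes: it bounds $J\bigl(\sum_r\binom{k-1}{r}(1-\epsilon)^{2(k-1-r)}N_r\bigr)J$ by $\sum_r\binom{k-1}{r}(1-\epsilon)^{2(k-r)}N_r$. So ``accepting the paper's ordering argument'' means accepting an invalid inequality. Your fallbacks do not recover the stated form either. Option (a) produces powers of $\|N_1\|$, where $N_1=\mathcal{T}(I)=\mathcal{T}^*(I)$, and option (b) produces $\|X\|\Tr N_1$; neither yields $\Tr N_r$.

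No ordering argument can work, because the binomial bound is false without extra structure. Put $\mathcal{J}(X)=JXJ$. Since $\mathcal{T}$ is self-adjoint for $\langle A,B\rangle=\Tr(AB)$, the four length-$4$ words containing one $\mathcal{J}$ contribute $2\Tr(J^2N_3)+2\Tr(N_1JN_2J)$ to $\Tr P_4$. The bound allots them only $4(1-\epsilon)^2\Tr N_3=4(1-\epsilon)^2\Tr(N_1N_2)$.

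A concrete instance: take $d=2$, $n_r=2$, $n_f=B=1$ (so $C_f=0$), and retain Hessians $t\,a\,uu^\top$, $t\,b\,vv^\top$ with unit $u,v$, $a\neq b$, and $0<|u^\top v|<1$. Then $N_1$ and $N_2$ have different principal axes. For large $t$ the forget Hessian can be chosen to make $J=(1-\epsilon)R$, where $R$ is the reflection across the bisector of those axes. This gives $\Tr(N_1RN_2R)=\sum_i\lambda_i(N_1)\lambda_i(N_2)>\Tr(N_1N_2)$, an excess of order $t^6$. Meanwhile the $r=4$ terms agree and every other discrepancy is $O(t^4)$, so the inequality fails for large $t$.

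The bound does hold under commutation. If $J$ commutes with every $H_i$ (e.g.\ simultaneously diagonalizable Hessians, as in all of the paper's constructions), then $\mathcal{J}\mathcal{T}=\mathcal{T}\mathcal{J}$. Hence $(\mathcal{J}+\mathcal{T})^k=\sum_{r=0}^{k}\binom{k}{r}\mathcal{J}^{k-r}\mathcal{T}^r$, and $\Tr\mathcal{J}^{k-r}\mathcal{T}^r(I)=\Tr(J^{2(k-r)}N_r)\le(1-\epsilon)^{2(k-r)}\Tr N_r$. Note that the sum must run to $r=k$. The all-$\mathcal{T}$ word contributes $\Tr N_k$, and dropping it, as the statement's upper limit $k-1$ does, is incompatible with Part~1.

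\textbf{Convergence claim.} This part needs neither the binomial bound nor your strengthened hypothesis. Apply your option (a) to the whole recursion rather than word by word:
$\Tr P_k=\Tr\bigl((J^2+N_1)P_{k-1}\bigr)\le\bigl((1-\epsilon)^2+\Tr N_1\bigr)\Tr P_{k-1}$.
So $\Tr N_1\le\epsilon$ alone gives $\mathbb{E}\|w_k\|^2\le d\,(1-\epsilon+\epsilon^2)^k\to0$, with no commutation assumption. The same identity, $\Tr N_r=\Tr(N_1N_{r-1})\le\|N_1\|\Tr N_{r-1}$, shows that $\Tr N_1\le\epsilon$ already forces $\Tr N_r\le\epsilon^r$ for $r\ge1$. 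The geometric decay you proposed to assume is therefore automatic; it is also what the paper's substitution of $\epsilon^r$ for $\Tr N_r$ implicitly relies on.
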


\noindent \textit{Discussion.} Part (1) of Lemma~\ref{lemma:stability-unlearning} provides a sufficient condition for divergence: if the “noise accumulation” matrices $N_k$ (which capture how SGD variance builds up over iterations) have unbounded trace, then the model will eventually blow up (escape the optimum). Part (2) gives a sufficient condition for convergence: if each $J$ is a contraction (spectral norm $<1$ by a margin $\epsilon$) and the accumulated noise remains small, then the model’s parameter norm will vanish (meaning $w_k$ returns to the optimum). These statements generalize classical stability results to the unlearning case. Importantly, the recursion \eqref{eq:Nk-recursion} for $N_k$ does not admit a simple closed form because $N_{k-1}$ appears inside sums over both sets $D_r$ and $D_f$. This coupling between retain and forget sets is what makes analyzing unlearning challenging. By introducing the coherence measures (Definition~\ref{def:unlearning-coherence} and related definition), we overcome this hurdle: the coherence will allow us to relate $\Tr(N_k)$ to data-dependent quantities like $\lambda_{\max}(D)$ and thereby derive interpretable stability criteria.

Using the coherence framework, we can now state our main stability thresholds. The first result is a condition under which the unlearning dynamics \emph{diverge} (fail to stay at the original minimum):

\begin{theorem}[Divergence criterion for unlearning]\label{thm:divergence}
    Under the setup of Lemma~\ref{lemma:stability-unlearning}, the unlearning process will diverge if the mix-Hessian eigenvalue exceeds a threshold determined by the coherence. In particular, if
    \begin{equation}\label{eq:diverge-cond}
        \lambda_{\max}(D) \;\ge\; \frac{\sqrt{2\,}\sigma}{\;\eta \Big((1-\alpha)\,n_f\,\sqrt{\frac{n_r}{B}-1}\;+\;\alpha\,n_r\,\sqrt{\frac{n_f}{B}-1}\Big)}\,,
    \end{equation}
    then $\displaystyle \lim_{k\to\infty}\mathbb{E}\|w_k\|^2 = \infty$. Equivalently, condition \eqref{eq:diverge-cond} guarantees the unlearning algorithm will \emph{escape} the original minima (diverge) due to the stochastic dynamics.
\end{theorem}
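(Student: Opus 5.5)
We will reduce the claim to Part (1) of Lemma~\ref{lemma:stability-unlearning}. It suffices to show that, under \eqref{eq:diverge-cond}, $\Tr(N_k)\to\infty$, because then $\mathbb{E}\|w_k\|^2\ge \Tr(M_k)\ge \Tr(N_k)\to\infty$. The natural route, following \citet{dexter2024precisecharacterizationsgdstability}, is to lower bound $\Tr(N_k)$ by a geometric sequence $\rho^k$ with a ratio $\rho\ge 1$ expressed through $\lambda_{\max}(D)$ and $\sigma$.

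\textbf{Step 1: rewrite the recursion over retain--forget pairs.} Write the per-step map as $\mathcal{T}(N)=C_f\sum_{f}H_fNH_f+C_r\sum_r H_rNH_r$. Each retain term is repeated $n_f$ times and each forget term $n_r$ times, which gives
\[
\mathcal{T}(N)=\sum_{(r,f)\in\mathcal{P}}\Big(\tfrac{C_r}{n_f}H_rNH_r+\tfrac{C_f}{n_r}H_fNH_f\Big).
\]
For PSD $N$ and any $a,b>0$ we have $aXNX+bYNY\succeq \frac{1}{a^{-1}+b^{-1}}\,(X+Y)N(X+Y)$, since $(\sqrt{a}X-\ldots)$-type cross terms form a PSD square (Cauchy--Schwarz for operators). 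We will choose the weights so that $X+Y$ is proportional to $D_{rf}$, whose mixing coefficients are $C_r^{1/2}/(C_r^{1/2}+C_f^{1/2})$ and $C_f^{1/2}/(\cdot)$. This yields $\mathcal{T}(N)\succeq c\sum_{(r,f)}D_{rf}ND_{rf}$ with an explicit constant $c$ depending on $C_r,C_f,n_r,n_f$. Substituting the definitions of $C_r,C_f$ should turn $c^{1/2}$ into a multiple of $\eta\big((1-\alpha)n_f\sqrt{n_r/B-1}+\alpha n_r\sqrt{n_f/B-1}\big)$ divided by $\sqrt{n_rn_f}$-type factors. The $\sqrt2$ in \eqref{eq:diverge-cond} is where slack from this inequality is expected to appear, for instance via $(x+y)^2\le 2(x^2+y^2)$.

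\textbf{Step 2: single-set lower bound with the mix-coherence.} Having reduced the problem to $\tilde N_k=c\sum_{p\in\mathcal{P}}D_p\tilde N_{k-1}D_p$, we will reuse the argument of \citet{dexter2024precisecharacterizationsgdstability} with the $D_p$ in place of the $H_i$. Define $v_k(p)=\Tr(D_p\tilde N_k)$, or alternatively the form $\Tr(D_p^{1/2}\tilde N_kD_p^{1/2})$, and bound
\[
\Tr\big(D_q D_p\tilde N D_p\big)\ \ge\ \frac{\lambda_{\max}(D_{p})}{\lambda_{\max}(D_{p})}\cdots
\]
using the Frobenius quantities $S_{pq}=\|D_p^{1/2}D_q^{1/2}\|_F$. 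This gives a linear inequality $v_k\ge c\,A v_{k-1}$ with a nonnegative matrix $A$. Its Perron eigenvalue is then compared to $\lambda_{\max}(D)$ scaled by the normalization $\lambda_{\max}(S)/\max_p\lambda_{\max}(D_p)=\sigma$. Iterating from $N_0=I$ with a Perron eigenvector gives growth rate $\rho\ge c\,n_rn_f\lambda_{\max}(D)^2/\sigma^2$, up to the constant. Condition \eqref{eq:diverge-cond} is exactly $\rho\ge1$, and with a strict margin, or via the $\binom{k}{r}$ accumulation from $J^{2k}$, this gives $\Tr(N_k)\to\infty$.

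\textbf{Main obstacle.} The hard step is Step 2: obtaining a lower bound on the top growth rate in terms of $\lambda_{\max}(D)$, the average of the $D_p$, divided by $\sigma$, rather than in terms of individual $\lambda_{\max}(D_p)$. I would first try the test-matrix approach: take $N$ proportional to the top eigenprojector $uu^T$ of $D$. By Jensen's inequality, $\sum_p (u^TD_pu)^2\ge |\mathcal{P}|\,\lambda_{\max}(D)^2$. Coherence then enters when controlling how $D_pND_p$ feeds back onto $u$ across iterations, via the Perron vector of $S$. Getting the constants to produce exactly $\sqrt2\sigma$ is the bookkeeping risk; I would accept a slightly weaker constant first and then tighten it.
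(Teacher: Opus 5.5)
Your reduction to Part (1) of Lemma~\ref{lemma:stability-unlearning} is fine, but Step~1 cannot deliver the stated threshold. You split $C_r\sum_r H_rNH_r$ into $n_f$ copies of weight $C_r/n_f$, and $C_f\sum_f H_fNH_f$ into $n_r$ copies of weight $C_f/n_r$. Forcing $X+Y\propto D_{rf}$ then pins the constant at $c=(C_r^{1/2}+C_f^{1/2})^2/(n_r+n_f)$. The harmonic constant $ab/(a+b)$ is the best choice valid for all Hessians, with equality when $aX=bY$, and this loss is incurred at every one of the $k$ steps.

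The single-set argument you have in mind then gives $\rho\ge c\max_p\lambda_{\max}(D_p)^2\ge c\,(n_rn_f)^2\lambda_{\max}(D)^2/\sigma^2$. Note the square on $n_rn_f$, which your stated rate omits. So you get divergence only when $\lambda_{\max}(D)\ge \sqrt{n_r+n_f}\,\sigma/\big(\eta((1-\alpha)n_f\sqrt{n_r/B-1}+\alpha n_r\sqrt{n_f/B-1})\big)$. That is off from \eqref{eq:diverge-cond} by a factor $\sqrt{(n_r+n_f)/2}$, which is not a constant. Hence \eqref{eq:diverge-cond} is not ``exactly $\rho\ge 1$'', and this cannot be tightened later.

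The reduced recursion itself is too weak, not just its analysis. Take one retain Hessian $\lambda e_1e_1^T$, and all other retain and all forget Hessians zero, with $n_f>B$ so that $C_f>0$. Then $\max_p\lambda_{\max}(D_p)=C'_r\lambda$, $\sigma=n_f$, $\lambda_{\max}(D)=C'_r\lambda/n_r$, and \eqref{eq:diverge-cond} reads $C_r\lambda^2\ge 2$. The true $\Tr(N_k)=(C_r\lambda^2)^k$ diverges. Your $\Tr(\tilde N_k)=\big(\tfrac{n_f}{n_r+n_f}C_r\lambda^2\big)^k$ tends to $0$ for, e.g., $C_r\lambda^2=3$, $n_f=2$, $n_r=10$.

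The missing idea is to pay for the retain--forget pairing once rather than at every step. The paper proceeds as follows:
\begin{enumerate}
\item Expand $\Tr(N_k)$ over words $L_k\in\{r,f\}^k$, each carrying its full weight (a product of $C_r$'s and $C_f$'s).
\item Lower bound $\|H_{a_k}\cdots H_{a_1}\|_F^2\ge\frac{1}{d}\Tr(H_{a_k}\cdots H_{a_1})^2$.
\item Restrict the index tuples to those generated by a single pair $(a_r,a_f)$. Only the all-$r$ and all-$f$ words are overcounted, and one division by $n_rn_f$ absorbs this.
\item Apply Cauchy--Schwarz over the $2^k$ words. For each pair the sum collapses to $2^{-k}(C_r^{1/2}+C_f^{1/2})^{2k}\big(\Tr(D_{rf}^k)\big)^2$, and this $2^{-k}$ is the source of the $\sqrt2$.
\end{enumerate}
At the operator level this is your own inequality with $a=b=1$, $X=C_r^{1/2}H_{r^*}$, $Y=C_f^{1/2}H_{f^*}$, for the pair maximizing $\lambda_{\max}(D_{rf})$, after discarding every other (PSD) term. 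It gives $N_k\succeq\big(\tfrac12(C_r^{1/2}+C_f^{1/2})^2\big)^k D_{r^*f^*}^{2k}$.

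Coherence then enters only through $\max_p\lambda_{\max}(D_p)=\lambda_{\max}(S)/\sigma$ and $\lambda_{\max}(S)\ge n_rn_f\lambda_{\max}(D)$. The paper obtains the latter from $\Tr(D^k)\le d^2(n_rn_f)^{-k}\lambda_{\max}(S)^k$; alternatively, evaluate $S$ on $x_p=\|D_p^{1/2}u\|$ with $u$ the top eigenvector of $D$. Finally, $n_rn_f(C_r^{1/2}+C_f^{1/2})=\eta\big((1-\alpha)n_f\sqrt{n_r/B-1}+\alpha n_r\sqrt{n_f/B-1}\big)$ turns the growth condition into \eqref{eq:diverge-cond}.

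So the step you flagged as the main obstacle is actually the easy one: a bound through the individual $\max_p\lambda_{\max}(D_p)$ is exactly what $\sigma$ normalizes. By contrast, your Jensen test-matrix route yields the rate $c\,n_rn_f\lambda_{\max}(D)^2$, which contains no $\sigma$ and so cannot reproduce \eqref{eq:diverge-cond}.
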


\noindent In plain terms, Theorem~\ref{thm:divergence} says that if the influence of the forget–retain interaction (measured by $\lambda_{\max}(D)$) is sufficiently large relative to the stabilizing effect of coherence $\sigma$ (and other factors like batch size $B$ and relative set sizes), then the gradient ascent on the forget set will overpower the descent on the retain set, leading to instability. The inequality \eqref{eq:diverge-cond} can be viewed as a quantitative stability limit or “edge of chaos” for unlearning: beyond this point, the original solution $w^*$ cannot hold.

Our next theorem establishes a matching lower bound, showing that the above divergence condition is essentially tight. It guarantees that when $\lambda_{\max}(D)$ is below a certain threshold (of the same order as in \eqref{eq:diverge-cond}), one can find a scenario where the unlearning process converges, thereby demonstrating that the threshold cannot be significantly improved in general:

\begin{theorem}[Convergence condition (matching lower bound)]\label{thm:convergence}
    Suppose $\lambda_{\max}(D)$ and $\sigma$ satisfy
    \begin{equation}\label{eq:converge-cond}
        \lambda_{\max}(D) \;\le\; \frac{2\,\sigma}{\;\eta\, C'_r \,\big(\sigma + n_f\big(\frac{n_r}{B}-1\big)\big)}\,,
    \end{equation}
    where $C'_r = \sqrt{C_r}/(\sqrt{C_r}+\sqrt{C_f})$ (with $C_r,C_f$ from Definition~\ref{def:mixHessian}). Then there exists a choice of PSD Hessians $\{H_i\}$ for the retain and forget sets such that the unlearning update converges (i.e. $\lim_{k\to\infty}\mathbb{E}\|w_k\|^2 = 0$) under those Hessians.
\end{theorem}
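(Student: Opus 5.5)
The plan is to prove Theorem~\ref{thm:convergence} by an explicit construction and then verify the two hypotheses of part (2) of Lemma~\ref{lemma:stability-unlearning}. As in the tightness results of \citet{dexter2024precisecharacterizationsgdstability}, I would take Hessians with an orthogonal block structure. This makes the coherence matrix $S$ and the recursion \eqref{eq:Nk-recursion} explicitly computable.

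\textbf{Construction.} Fix an integer $m\ge 1$ related to $\sigma$; it will be a number of coherent groups, so that $\sigma\approx n_r n_f/m$ or its analogue. Choose orthonormal vectors $u_1,\dots,u_m$. Split the retain set into $m$ equal groups and give each retain sample in group $j$ the Hessian $H_i=a\,u_ju_j^T$. To keep the forget contribution subdominant in $D_{rf}$, give the forget samples Hessians $b\,u_ju_j^T$ with matched indexing and $b$ small, or set $b=0$ in the limiting case. Then every $D_{rf}$ is rank one, $D_{rf}=c\,u_ju_j^T$ with $c=C'_r a$ (plus a forget term). Consequently $S_{(r,f),(r',f')}=c\,\mathbf 1[\text{same group}]$, so $S$ is block diagonal with constant blocks and $\lambda_{\max}(S)=c\cdot(\text{block size})$. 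This gives $\sigma=\text{block size}$. It also gives $D=\frac{c}{m}\sum_j u_ju_j^T$, so $\lambda_{\max}(D)=c/m$. Through these two identities, $a$ and $m$ are parametrized by the pair $(\lambda_{\max}(D),\sigma)$.

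\textbf{Deterministic contraction.} Next I would compute $J=I-\eta(1-\alpha)H_R+\eta\alpha H_F$. On each $u_j$ this has eigenvalue $1-\eta(1-\alpha)a/m+\eta\alpha b/m$, and it equals $1$ on the orthogonal complement. The complement is a problem, so I would restrict the dynamics to $\mathrm{span}\{u_j\}$, equivalently take $d=m$. I would then require $|1-\eta(1-\alpha)a/m|\le 1-\epsilon$, which is the condition $\eta(1-\alpha)\lambda_{\max}(H_R)<2$. The factor $2$ in \eqref{eq:converge-cond} should come from here.

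\textbf{Noise term.} Since the $u_j$ are orthogonal rank-one projectors, $H_iN H_i=a^2(u_j^TNu_j)u_ju_j^T$. Hence $N_k$ stays diagonal in the $u$ basis and satisfies the scalar recursion $\nu_k=(C_r a^2 n_r/m+C_f b^2 n_f/m)\nu_{k-1}$. Therefore $\Tr(N_k)\le m\rho^k$ with $\rho=C_r a^2 n_r/m+\cdots$. Substituting into the upper bound of Lemma~\ref{lemma:stability-unlearning}(2) gives $\sum_r\binom{k}{r}(1-\epsilon)^{2(k-r)}\rho^r\,m=m\,((1-\epsilon)^2+\rho)^k$. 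This tends to $0$ if $\rho<1-(1-\epsilon)^2\approx 2\epsilon$, and I would use this directly instead of the cruder hypothesis $\Tr(N_r)\le\epsilon$.

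\textbf{Main obstacle.} The remaining step is to combine the contraction constraint ($a$ not too large) with $\rho<2\epsilon$, and to show that both hold whenever \eqref{eq:converge-cond} holds. Using $C_r=\eta^2(1-\alpha)^2\frac{1}{n_r}(\frac1B-\frac1{n_r})$, the noise condition reads $\eta^2(1-\alpha)^2a^2(\frac{n_r}{B}-1)/(n_r m)\lesssim\epsilon$. Choosing $\epsilon$ to balance the two constraints produces a harmonic combination in which $\sigma$ and $n_f(\frac{n_r}{B}-1)$ appear additively in the denominator, matching the form of \eqref{eq:converge-cond}. Getting exactly the stated constants, including the factor $n_f$ and the normalization by $C'_r$, is where I expect difficulty. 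I would first try treating the forget Hessians as $b\to0$ and tuning the group sizes (blocks of $n_f$ pairs per retain sample) so that $\sigma$ and $\lambda_{\max}(D)$ line up with the bound. Failing that, I would settle for matching it up to an absolute constant.
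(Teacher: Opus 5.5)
Your construction is the paper's construction, up to cosmetics. The paper puts the Hessian $(C_r')^{-1}\frac{\lambda_{\max}(D)\,n_rn_f}{\sigma}\,e_1e_1^T$ on $\sigma/n_f$ retain samples and the zero matrix on every other retain sample and on every forget sample. It checks that $D=\lambda_{\max}(D)\,e_1e_1^T$ and that the normalized $S$ is an all-ones block of size $\sigma$. It then reads off convergence from the scalar $e_1^T\mathbb{E}[J_1^TJ_1]e_1$, tacitly with $d=1$.

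Your $m=n_rn_f/\sigma$ orthogonal groups of size $\sigma/n_f$ use the same per-sample scale $a=\lambda_{\max}(D)\,n_rn_f/(\sigma C_r')$, so they are just $m$ tiled copies of that block. Tiling additionally requires $\sigma/n_f$ to divide $n_r$; the paper's zero-padding avoids this. Your choice $d=m$ makes explicit what the paper leaves tacit; otherwise $\mathbb{E}\|w_k\|^2\to d-m\neq 0$. Keep $b=0$ exactly: since $D_{rf}$ ranges over all pairs, $b>0$ makes $D_{rf}$ rank two whenever $r$ and $f$ lie in different groups, and $S$ is no longer block-constant.

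The step you flag as the main obstacle needs no balancing. All matrices are simultaneously diagonal in the $u_j$ basis, so $\mathbb{E}\|w_k\|^2=m\big((1-x)^2+\rho\big)^k$ exactly. Here $x=\eta(1-\alpha)\lambda_{\max}(D)/C_r'$ and $\rho=x^2K$ with $K=\frac{n_f}{\sigma}\big(\frac{n_r}{B}-1\big)$. You get the same expression from the lemma by taking $1-\epsilon=|1-x|$ and keeping $1-(1-\epsilon)^2=x(2-x)$ rather than $\approx 2\epsilon$. The lemma's step $JN_rJ\preceq(1-\epsilon)^2N_r$ is valid here only because everything commutes, so computing directly, as the paper does, is cleaner.

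Convergence therefore holds iff $x(1+K)<2$. The factor $2$ comes from weighing $1-(1-x)^2=x(2-x)$ against $x^2K$, not from the contraction constraint alone. This is exactly the inequality $2\ge\eta(1-\alpha)(C_r')^{-1}\lambda_1(D)\big(1+\frac{n_f}{\sigma}(\frac{n_r}{B}-1)\big)$ that the paper's proof reaches, i.e. $\lambda_{\max}(D)<\frac{2\sigma C_r'}{\eta(1-\alpha)(\sigma+n_f(\frac{n_r}{B}-1))}$.

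That is not the displayed hypothesis: it differs from it by the factor $C_r'^2/(1-\alpha)$. When $C_r'^2<1-\alpha$, some values of $\lambda_{\max}(D)$ satisfy the displayed condition yet make this construction (yours or the paper's) fail to converge. Your fallback of matching the displayed condition up to an absolute constant is also unavailable. As $\alpha\to 1$ one has $C_r'=O(1-\alpha)$, so $C_r'^2/(1-\alpha)\to 0$.

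The discrepancy lies in the statement, not in your argument. What this construction actually proves is the $C_r'/(1-\alpha)$ threshold with strict inequality. At equality $\mathbb{E}\|w_k\|^2=m$ for every $k$, so the ``$\le$'' in the statement (and the ``$\le 1$'' in the paper's proof) must be ``$<$''.
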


\noindent The convergence condition \eqref{eq:converge-cond} mirrors the divergence condition in its dependence on $\sigma$, $n_r,n_f$, and $B$. The existence of a construction that achieves convergence when \eqref{eq:converge-cond} holds indicates that our divergence criterion in Theorem~\ref{thm:divergence} is tight up to constant factors. In summary, Theorems~\ref{thm:divergence} and \ref{thm:convergence} together pin down a theoretical threshold curve in the space of data coherence and algorithm parameters that separates stable (convergent) unlearning from unstable (divergent) unlearning. We can now interpret some common unlearning strategies:

    \textbf{Naive negative gradient.} A straightforward unlearning baseline is to set $\alpha=1$ and run gradient ascent on the forget set alone. Our framework explains why this often fails. If the forget set has high internal coherence, its gradients align with the curvature at $w^*$, so ascent follows a single stable direction and does not escape the minimum due to lack of stochasticity. Without rendering of stochasticity, the small learning rate can give slow diverging behavior. If the forget and retain sets are also highly coherent, the overall coherence stays large even without the retain set. In both cases divergence is inhibited or slowed down, matching empirical reports that naive negative-gradient unlearning typically stagnates or oscillates, hurting retained data while barely reducing forget-set performance \citep{ding2025understandingfinetuningapproximateunlearning, fan2025imuinfluenceguidedmachineunlearning, ding2025understandingfinetuningapproximateunlearning}.
    
    \textbf{Random label perturbation.} Another strategy is to add randomness to the forgetting process, for instance by using mislabeled data or injecting noise into the forget set's gradients (see, e.g., random label unlearning). In our terms, this deliberately \emph{breaks the coherence} of the forget set: if labels are randomized, the gradients from forget-set samples become effectively uncorrelated, dramatically lowering the forget-set's internal $\sigma$. This, in turn, allows the model to escape the original minimum much faster. Moreover, randomizing forget labels also reduces the coupling between forget and retain sets (since the forget-set gradient is now essentially random noise orthogonal to the retain-set Hessians). Thus, random label methods improve unlearning by driving the coherence measure $\sigma$ downward, so the divergence criterion is more easily satisfied. \citep{graves2020amnesiacmachinelearning} 
    
    \textbf{Min-Max (targeted forget) methods.} More sophisticated approaches pick a subset of model weights or directions that are most “responsible” for the forget set's performance, and then apply ascent/descent on those components. This can be seen as applying projection matrices $P_F$ and $P_R$ to the Hessians $H_f, H_r$ respectively, focusing updates on certain eigen-directions. Such projections effectively reduce the overlap between forget-set and retain-set update directions (since $P_F H_f$ and $P_R H_r$ act in different subspaces), thereby reducing the cross-coherence between the two sets. In our framework, this corresponds to a smaller overall $\sigma$ as well. By isolating the forgetting dynamics, Min-Max methods thus decrease the ability of the retain set to interfere with forgetting (and vice versa), making the unlearning process more effective. \citep{tang2025sharpnessawaremachineunlearning, fan2024salunempoweringmachineunlearning}

\subsection{Memorization and Forgetting}
So far, our analysis has focused on the role of stochastic gradient noise (from mini-batch sampling). We now turn to another key factor: the inherent \emph{signal vs. noise} structure of the data itself. We ask: if a model has \emph{memorized} certain training examples (as opposed to learning a shared signal), does that make it easier or more resistant to forget those examples? We will show a theoretical connection between a model's tendency to memorize (which occurs when data has low signal-to-noise ratio) and the ease of unlearning. Our work aims to identify the memorization resulting from highly orthogonal component (noise, outlier features) and its relationship to forgetting (see Appendix \ref{sec:more discussion about memorization and forgetting} for further discussion.) To make this concrete, we consider a specific data model and network, inspired by the theoretical construction by \citet{kou2023benignoverfittingtwolayerrelu}. The data distribution is designed so that each example contains a mixture of a common signal and independent noise. This is formalized as follows:

\begin{definition}[Data Setup]\label{def:kou-data}
    Let $\mu \in \mathbb{R}^d$ be a fixed unit-norm signal vector. Each training example consists of a feature pair $x = [x^{(1)};\, x^{(2)}] \in \mathbb{R}^{2d}$ (concatenation of two $d$-dimensional parts) and a label $y \in \{-1,+1\}$. The example is generated by:
    \begin{enumerate}
        \item Sample $y$ as a Rademacher random variable ($\Pr(y=+1)=\Pr(y=-1)=\frac{1}{2}$).
        \item Sample a noise vector $\xi \sim \mathcal{N}(0,\sigma^2 I_d)$ in $\mathbb{R}^d$, where $\sigma^2$ is the noise variance.
        \item With equal probability, set either $x^{(1)} = y\,\mu$ and $x^{(2)} = \xi$, or $x^{(1)} = \xi$ and $x^{(2)} = y\,\mu$. In other words, one of the two halves of $x$ carries the signal $y\,\mu$ and the other carries independent noise.
    \end{enumerate}
\end{definition}

We then consider a two-layer convolutional neural network (CNN) with ReLU activations operating on this data.. The network has two sets of convolutional filters (for the positive and negative class) and outputs a score $f(W,x)$ whose sign determines the predicted label. Specifically, let $W^{(+1)}$ and $W^{(-1)}$ be the weight matrices for the two classes, each of shape $m \times d$ (with $m$ filters). The network's output is
{\small
\begin{equation}
    \begin{split}
        &f(W,x) \\
        &\;=\; \frac{1}{m}\sum_{r=1}^m \Big(\text{ReLU}(\langle w^{(+1)}_{r}, x^{(1)}\rangle) + \text{ReLU}(\langle w^{(+1)}_{r}, x^{(2)}\rangle)\Big)\;\\ &-\frac{1}{m}\sum_{r=1}^m \Big(\text{ReLU}(\langle w^{(-1)}_{r}, x^{(1)}\rangle) + \text{ReLU}(\langle w^{(-1)}_{r}, x^{(2)}\rangle)\Big)
    \end{split}
\end{equation}
}
and the model is trained with logistic loss $L_S(W) = \frac{1}{n}\sum_{i=1}^n \log\!\big(1 + \exp(-y_i\, f(W,x_i))\big)$. We focus on the case where the network can fit the training data perfectly (interpolating regime) and potentially overfits.

In this setting, we can analyze the coherence of the Hessians at the trained solution. The following result provides an upper bound on the coherence in terms of the \emph{signal-to-noise ratio} (SNR) of the data, defined as $\mathrm{SNR} = \frac{\|\mu\|}{\sigma \sqrt{d}}$ (which measures the strength of the common signal relative to noise in each example):

\begin{theorem}[Coherence bound in the CNN memorization model]\label{thm:coherence-snr}
    Under the data model of Definition~\ref{def:kou-data} and the two-layer ReLU CNN defined above, suppose the network is trained to near-zero training loss. Then with probability at least $1 - 8\delta$ (over the random draw of the dataset), the largest eigenvalue of the coherence matrix $S$ for the retain/forget split satisfies
    {\small
    \begin{equation}\label{eq:snr-bound}
        \lambda_{\max}(S) \;\le\; \mathcal{O}\!\Big({n_r\, n_f\, d \sigma^2}\Big[(\sqrt{C'_r} + \sqrt{C'_f})^2\,(\mathrm{SNR})^2 + (C'_r + C'_f)\Big]\Big)\,
    \end{equation}
    \begin{equation}
        \max_{rf}\lambda_{\max}(D_{rf})  \leq \mathcal{O} ((C'_r + C'_f)(d \sigma^2(\mathrm{SNR})^2+1)), 
    \end{equation}
    }
    where $C'_r$ and $C'_f$ are the normalized retain/forget weight fractions as defined in Theorem~\ref{thm:convergence}. Consider division of two quantities and we can find that for small SNR limit and large SNR limit:
    \begin{equation}
        \lim_{\text{SNR}\to 0} \frac{\lambda_{\max}(S)^{\text{upper}}}{\max_{rf}D_{rf}^{\text{upper}}} =  \mathcal{O}(n_rn_f)\;\;, 
    \end{equation}
    \begin{equation}
        \lim_{\text{SNR}\to \infty} \frac{\lambda_{\max}(S)^{\text{upper}}}{\max_{rf}D_{rf}^{\text{upper}}} =  \mathcal{O}(n_rn_f(1 + \frac{2\sqrt{C_r'C_f'}}{C_r'+C_f'})).
    \end{equation}
\end{theorem}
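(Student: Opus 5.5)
The proof computes the per-sample Hessians of the two-layer CNN at the trained solution, expresses them through the input patches, and then bounds the entries of the mix-coherence matrix $S$ and the eigenvalues of the $D_{rf}$ using concentration of the Gaussian noise vectors.

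\textbf{Step 1: per-sample Hessians.} For logistic loss, $\nabla^2\ell_i = \ell''_i\,\nabla f_i\nabla f_i^\top + \ell'_i\,\nabla^2 f_i$. Since ReLU is piecewise linear, $\nabla^2 f_i=0$ almost everywhere, so $H_i = \ell''_i\, g_i g_i^\top$ with $g_i=\nabla_W f(W,x_i)$. Each block of $g_i$ has the form $\frac{1}{m}\big(\mathbb{1}[\langle w_r, y_i\mu\rangle>0]\,y_i\mu + \mathbb{1}[\langle w_r,\xi_i\rangle>0]\,\xi_i\big)$, with a sign fixed by the class. Near zero loss, $\ell''_i\le \ell'_i$ is small and bounded uniformly across samples, and this factor is absorbed into the $\mathcal{O}$. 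So each $H_i$ is rank one, $H_i = c_i\, u_i u_i^\top$, where $u_i$ is a combination of $\mu$ and $\xi_i$ with activation-pattern coefficients of order one.

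\textbf{Step 2: bounding $S$.} Each $D_{rf}=C'_r H_r + C'_f H_f$ is rank at most two, so $D_{rf}^{1/2}$ is explicit. I will use $\|A^{1/2}B^{1/2}\|_F^2=\Tr(AB)$ and $\sqrt{a+b}\le\sqrt a+\sqrt b$ to get
\[
S_{(r,f),(r',f')}\le \sum_{a\in\{r,f\},\,b\in\{r',f'\}} \sqrt{C'_aC'_b}\,|\langle u_a,u_b\rangle|.
\]
This splits into the cross inner products $\langle u_a,u_b\rangle$ for $a\ne b$, which contain the signal term $\|\mu\|^2$, and the diagonal terms, which contain $\|\xi_a\|^2$. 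The standard high-probability bounds from \citet{kou2023benignoverfittingtwolayerrelu} give, each with failure probability $\delta$:
\begin{itemize}
\item $\|\xi_i\|^2\in[\tfrac12,\tfrac32]\sigma^2 d$,
\item $|\langle\xi_i,\xi_j\rangle|\lesssim\sigma^2\sqrt{d\log(n^2/\delta)}$,
\item $|\langle\mu,\xi_i\rangle|\lesssim\sigma\|\mu\|\sqrt{\log(n/\delta)}$.
\end{itemize}
Applying these to the retain and forget sets separately and taking a union bound gives the overall probability $1-8\delta$.

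For $\lambda_{\max}(S)$, I bound the spectral norm by the maximal row sum, or more precisely by writing $S\preceq$ (rank-one signal part) $+$ (noise part) entrywise with nonnegative entries. The signal part contributes $n_rn_f(\sqrt{C'_r}+\sqrt{C'_f})^2\|\mu\|^2 = n_rn_f(\sqrt{C'_r}+\sqrt{C'_f})^2 d\sigma^2\,\mathrm{SNR}^2$. The noise part has diagonal-type contributions of order $(C'_r+C'_f)\sigma^2 d$, and each of the $n_rn_f$ off-diagonal entries is controlled by the near-orthogonality bound $\sigma^2\sqrt{d}$. Entries sharing the same $r$ or $f$ reuse $\|\xi\|^2$, which yields the stated $n_rn_f d\sigma^2(C'_r+C'_f)$.

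\textbf{Step 3: bounding $\lambda_{\max}(D_{rf})$ and the limits.} Here $\lambda_{\max}(D_{rf})\le C'_r\|u_r\|^2+C'_f\|u_f\|^2 \lesssim (C'_r+C'_f)(\|\mu\|^2+\sigma^2 d)$. With the normalization implicit in the theorem, this matches $(C'_r+C'_f)(d\sigma^2\mathrm{SNR}^2+1)$. The two limits then follow directly from the ratio of the upper bounds:
\begin{itemize}
\item as $\mathrm{SNR}\to0$, the $(C'_r+C'_f)$ terms cancel, leaving $n_rn_f$;
\item as $\mathrm{SNR}\to\infty$, the ratio is $n_rn_f(\sqrt{C'_r}+\sqrt{C'_f})^2/(C'_r+C'_f)$, which expands to the stated form.
\end{itemize}

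The main obstacle is Step 2. The off-diagonal noise cross terms must be shown not to inflate $\lambda_{\max}(S)$ beyond $n_rn_f d\sigma^2$, which requires $d$ to be large relative to $n^2\log(n/\delta)$ (the overparameterized regime assumed in \citet{kou2023benignoverfittingtwolayerrelu}). A second difficulty is controlling the activation-pattern coefficients, and the loss-curvature factors $\ell''_i$, uniformly at the trained solution. I would first handle this by bounding all indicators by one and all $\ell''_i$ by a common constant.
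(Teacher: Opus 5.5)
Your proposal is correct. Steps 1 and 3 match the paper's proof: rank-one Gauss--Newton Hessians, indicators and $\ell''_i$ bounded by constants, and $\lambda_{\max}(D_{rf})\le C'_r\lambda_{\max}(H_r)+C'_f\lambda_{\max}(H_f)$ with a union bound on $\max_i\|\xi_i\|^2$.

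Step 2 is where you differ. The paper never uses near-orthogonality. It bounds $|\mu^\top\xi_i|\le\|\mu\|\|\xi_i\|$ and $|\xi_i^\top\xi_k|\le\|\xi_i\|\|\xi_k\|$, so every entry obeys $S_{(r,f),(r',f')}\le v_{rf}v_{r'f'}$ with $v_{rf}=(\sqrt{C'_r}+\sqrt{C'_f})\|\mu\|+\sqrt{C'_r}\|\xi_r\|+\sqrt{C'_f}\|\xi_f\|$. Entrywise domination of a nonnegative matrix by $vv^\top$ then gives $\lambda_{\max}(S)\le\|v\|^2$. After that, only chi and chi-square concentration for $\sum_i\|\xi_i\|$ and $\sum_i\|\xi_i\|^2$ is needed.

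Your split into a rank-one signal part plus a noise part controlled by row sums is finer, because it uses the bounds on $|\langle\xi_i,\xi_j\rangle|$ and $|\langle\mu,\xi_i\rangle|$. The noise contribution becomes of order $(n_fC'_r+n_rC'_f)\sigma^2d+n_rn_f(\sqrt{C'_r}+\sqrt{C'_f})^2\sigma^2\sqrt{d\log(n^2/\delta)}$, which lies genuinely below the $n_rn_f\sigma^2d$ scale for large $d$. This is the version that actually encodes the ``nearly orthogonal noise'' intuition. The paper's bound effectively treats the noise vectors as aligned, so its low- and high-SNR ratios agree up to the constants hidden in $\mathcal{O}$.

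On the other hand, the obstacle you flag is not an obstacle for the theorem as stated. Cauchy--Schwarz alone gives $|\langle\xi_i,\xi_j\rangle|\le\|\xi_i\|\|\xi_j\|\le\tfrac32\sigma^2d$. Since $(\sqrt{C'_r}+\sqrt{C'_f})^2\le2(C'_r+C'_f)$, summing over the $n_rn_f$ entries of a row keeps the noise part within a constant multiple of $n_rn_f(C'_r+C'_f)\sigma^2d$. This needs no overparameterization beyond $d\ge c\log(n/\delta)$ for norm concentration, and it is exactly the paper's route. Even inside your argument, the cross terms fit under the stated bound once $\sqrt{d\log(n^2/\delta)}\le d$. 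The condition $d\gg n^2\log(n/\delta)$ is only needed to make them smaller than the diagonal-type block $(n_fC'_r+n_rC'_f)\sigma^2d$, which is a sharper conclusion than the theorem claims.

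Finally, in Step 3, avoid appealing to an ``implicit normalization''. What you (and the paper) actually prove is $\mathcal{O}\big((C'_r+C'_f)\sigma^2d(\mathrm{SNR}^2+1)\big)$. That is the form in which $\sigma^2d$ cancels in both limits, and it coincides with the displayed $(C'_r+C'_f)(d\sigma^2\mathrm{SNR}^2+1)$ only when $\sigma^2d=\Theta(1)$.
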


\noindent \textbf{Discussion} Theorem~\ref{thm:coherence-snr} shows the surprising role of SNR in stability of the optimizer through its control over the coherence. In particular, if the data has a very low SNR (meaning $\mu$ is small relative to the noise $\sigma$), then the network is likely to memorize the noise. In that regime, high-dimensional random noise vectors are nearly orthogonal to each other, so Hessians for different samples align poorly. Our bound indicates that coherence measure is larger in large SNR limit compared to small SNR limit, so a smaller SNR yields a smaller coherence. Consequently, when the model has memorized (low SNR), the unlearning process becomes easier: the model can move away from the original fit with less resistance, as formalized by our earlier stability criteria. Conversely, if the data has high SNR (dominant signal shared across examples), the model will latch onto that signal, resulting in large coherence, and the the model resists leaving the optimum since all samples agree on the direction.

This gives a rigorous basis for a perhaps counter-intuitive aphorism: \emph{the more you memorize, the easier you forget}. In other words, models that rely heavily on idiosyncratic features of individual data points (memorization) are in fact less stable at those points and can forget them with less effort, whereas models that have learned a strong global structure (signal) are more stable and resistant to having a single sample's influence removed. Our work is the first to formally establish this connection between memorization (in terms of data geometry) and unlearning. We believe this provides valuable insight into the trade-offs inherent in machine unlearning.

%% file: experiments.tex
\section{Experiments}
\subsection{Diverging and converging condition.}
\textbf{Experimental setup.} In this section, we simulate experiments to test Theorems~\ref{thm:divergence} and \ref{thm:convergence}.  We fix $n_f=n_r=50$ and set $\alpha=0.1$. Say $Q$ is a hyper-parameter constant. We will set $Q$ to different values to control various quantities in the experiments. For the retain set, Hessians are defined as $H_i = m e_1e_1^T$ for $i\in[Q]$, and $H_i = m e_{i-Q+1}e_{i-Q+1}^T$ otherwise, with $m=2n_r/Q$; the forget set uses the same construction. This ensures $\lambda_1(H_R)=\lambda_1(H_F)=\lambda_1(D)=2$, controlling sharpness. We choose $\eta \leq 1$ to avoid divergence from the standard criterion $\eta \geq 2/\lambda_1$, so any escaping behavior stems solely from stochasticity, consistent with our theorem. 

To vary coherence, we change $Q$ and compute $(B,\sigma)$ pairs by adjusting batch size. For each pair, we randomly initialize $w$, run 1000 updates, and record $\|w_{1000}\|$. Runs with $\|w_{1000}\|/\|w_0\|\geq 1000$ are marked as diverging. Each experiment is repeated 10 times, and the majority outcome determines convergence/divergence.

\textbf{Bounds on divergence.} Figure~\ref{fig:diverging boundary} shows that both our upper and lower bounds predict the divergence region accurately; the bounds are tighter for batch sizes $\geq 10$. The divergence criterion in particular matches the true boundary, demonstrating that our coherence-based measure captures the essential optimization dynamics accurately. This highlights coherence as a meaningful lens on unlearning dynamics, with potential applications beyond our scope. Please see appendix \ref{sec:discussion about experiments} for further details.

\begin{figure}[t]
\centering
\begin{subfigure}{0.4\textwidth}
  \centering
  \includegraphics[width=0.85\linewidth]{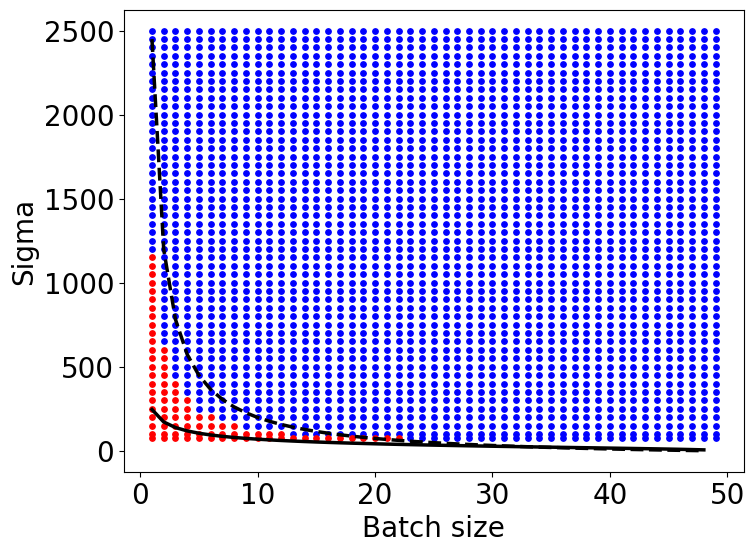}
  \caption{Learning rate $\eta=0.5$}
  \label{fig:lr05}
\end{subfigure}%
\hspace{0.05\textwidth}
\begin{subfigure}{0.4\textwidth}
  \centering
  \includegraphics[width=0.85\linewidth]{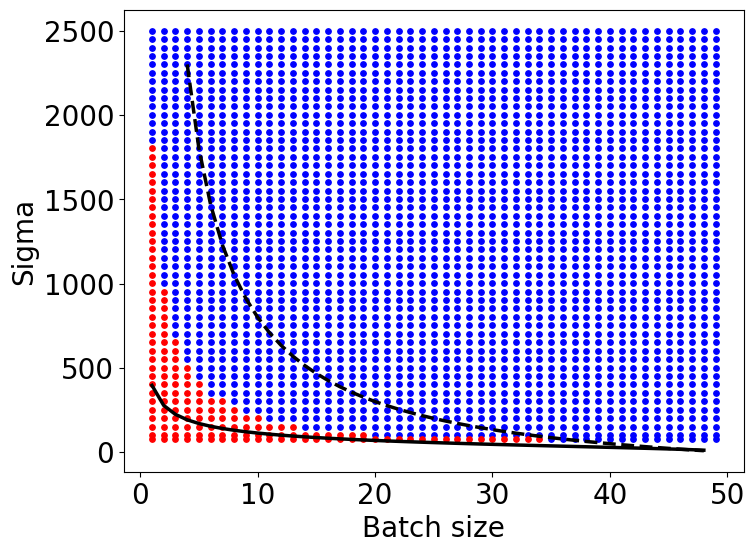}
  \caption{Learning rate $\eta=0.8$}
  \label{fig:lr08}
\end{subfigure}
\caption{\textbf{Tight upper and lower bounds.} Blue = convergence, red = divergence. The dashed line is the lower bound (Theorem~\ref{thm:convergence}), the solid line the divergence criterion (Theorem~\ref{thm:divergence}). Both closely track the true boundary.}
\label{fig:diverging boundary}
\end{figure}

\subsection{Relation between memorization and forgetting.}
\textbf{Experimental setup.} We generate data as in Definition~\ref{def:kou-data} along with the 2 layer CNN. The dataset has 50 training samples without label noise. We set $\boldsymbol{\mu}=\|\boldsymbol{\mu}\|_2[1,0,\dots,0]$ and add Gaussian noise $\xi\sim\mathcal{N}(0,\sigma^2I_d)$ with $\sigma=1.0$. To control the SNR in our experiments, we vary the signal strength to achieve different level of SNR while fixing the noise magnitude. We vary the value of $d\in[100,1100]$ to verify our results with varying levels of over-parametrization. The CNN has $m=10$ filters and is trained by full-batch gradient descent for 100 epochs at learning rate 0.1, ensuring training loss $\leq 0.1$.

We record training loss and test error (on 1000 unseen samples). For unlearning, out of total 50 samples, we use 25 samples to form the forget set and the other 25 to build the retain set. We apply mini-batch unlearning (batch size 5) using the negative-gradient method with learning rate 0.1 and $\alpha=0.3$ for 90 steps, then record the average forget loss. Each experiment is repeated 20 times and averaged.

\textbf{Memorization–forgetting overlap.} Figure~\ref{fig:signal noise ratio} shows heatmaps over signal strength and dimension. Memorization is identified where training loss is low but test error high (left, middle). Strikingly, these regions coincide with high loss on the forget (right), confirming our prediction: memorization corresponds to low coherence, making solutions unstable and easier to escape, thus making unlearning easier. This provides strong evidence for our framework and, to our knowledge, is the first work to connect memorization and forgetting through coherence. For more detailed discussion regarding the setup and its corresponding purpose, please refer appendix \ref{sec:discussion about experiments}.

\begin{figure}[t]
  \centering
  \includegraphics[width=0.8\linewidth]{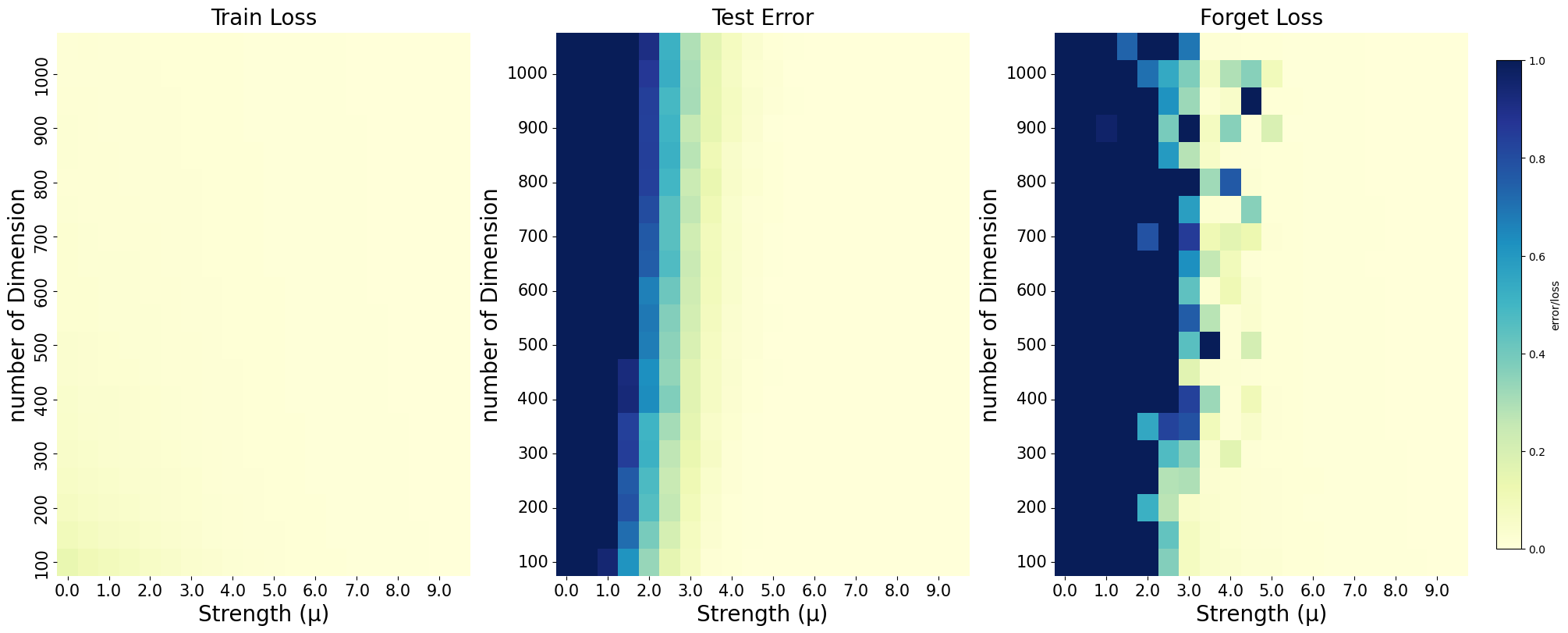}
  \caption{\textbf{(Left)} Training loss. \textbf{(Middle)} Test error. \textbf{(Right)} Forget loss. 
  Memorization and forgetting regions strongly overlap as indicated by the blue region.}
  \label{fig:signal noise ratio}
\end{figure}

\textbf{Larger scale and real world data.}
To validate our results on real world scenarios, we conduct additional experiments on a more realistic setting: CIFAR-10 with a ResNet-18 model. We first train the model to convergence (100 percent training accuracy). We then perform unlearning steps as stated in eq \ref{eq:unlearn-update}. We use step size as 0.01 with forget set being 10 percent of the training set. We set $\alpha$ or weighting between forget set and training set is set to 0.3. We record the loss on the forget set for the first 500 steps at interval of 50 steps. To probe the relationship between memorization and unlearning predicted by our theory, we inject Gaussian noise of varying variance (0.1,0.3,0.5) into the inputs. Higher noise variance produces stronger memorization, since the network overfits the idiosyncratic noise patterns. As shown in Table \ref{table: cifar10} and predicted by our coherence framework, models with higher memorization (larger variance) exhibit \emph{faster loss increase on the forget set} during unlearning.
\begin{table}[h!]

\centering
{
\scriptsize
\setlength{\tabcolsep}{6pt}
\begin{tabular}{rccc}
\toprule
\textbf{Step} &
\textbf{Var = 0.1} &
\textbf{Var = 0.3} &
\textbf{Var = 0.5} \\
\midrule
0   & $0.0016\!\pm\!0.0005$ & $0.0019\!\pm\!0.0004$ & $0.0016\!\pm\!0.0004$ \\
50  & $0.0024\!\pm\!0.0014$ & $0.0016\!\pm\!0.0003$ & $0.0700\!\pm\!0.0730$ \\
100 & $0.0694\!\pm\!0.0858$ & $0.0428\!\pm\!0.0539$ & $0.0715\!\pm\!0.0667$ \\
150 & $0.0477\!\pm\!0.0508$ & $0.0356\!\pm\!0.0401$ & $0.1575\!\pm\!0.1692$ \\
200 & $0.0722\!\pm\!0.0762$ & $0.1582\!\pm\!0.1744$ & $0.6279\!\pm\!0.5556$ \\
250 & $0.1758\!\pm\!0.2054$ & $0.1671\!\pm\!0.1508$ & $1.1366\!\pm\!0.6084$ \\
300 & $0.3888\!\pm\!0.4347$ & $0.5626\!\pm\!0.5598$ & $1.4868\!\pm\!0.9849$ \\
350 & $0.6515\!\pm\!0.7707$ & $0.8544\!\pm\!0.7993$ & $2.4464\!\pm\!1.5124$ \\
400 & $1.4362\!\pm\!1.5605$ & $2.3895\!\pm\!1.2796$ & $5.5148\!\pm\!1.8706$ \\
450 & $2.5029\!\pm\!2.3457$ & $2.9324\!\pm\!1.2882$ & $5.5811\!\pm\!1.8397$ \\
\bottomrule
\end{tabular}
}
\caption{Forget-set loss (mean $\pm$ std) during unlearning across noise levels.}
\label{table: cifar10}
\end{table}


\section{Impact Statement}
This paper provides a principled, stability-based theory of machine unlearning, deriving tight, data-dependent thresholds—via new retain/forget coherence metrics—that predict when standard gradient-based unlearning will converge versus diverge (i.e., succeed versus fail). By rigorously linking memorization to forgettability (lower SNR → lower coherence → easier forgetting) and validating the predicted stability frontier empirically, it offers a mechanistic account of the trade-offs among data geometry, memorization, and effective unlearning

%% file: appendix.tex
\section{Appendix}

\subsection{Additional related works}
\label{additional related works}
\textbf{Machine unlearning and memorization.} Many unlearning methods are proposed to effectively erase information of selected samples. Several basic but well-known methods such as random labeling of forget set \citep{graves2020amnesiacmachinelearning} and explicit gradient ascent on the forget set \citep{warnecke2023machineunlearningfeatureslabels} lay foundation for current unlearning methods. More recent works extend on those works to improve overall performance of unlearning. For example, SCRUB \citep{kurmanji2023unboundedmachineunlearning} simultaneously perform gradient ascent on forget set and gradient descent in the retain set to better preserve performance of retain during unlearning. Influence based \citep{izzo2021approximatedatadeletionmachine} unlearning propose idea that takes into account of the Hessian information of datasets to perform update of the model weights. Saliency Unlearning \citep{fan2024salunempoweringmachineunlearning} identity weights that react strongly to forget set through magnitude of gradient and perform unlearning only on those weight to achieve better performance. There are several theoretical studies about unlearning through the lens of the differential privacy and provide performance guarantee. For example, Langevin Unlearning \cite{chien2025langevinunlearningnewperspective} study unlearning with privacy guarantee through projected noisy gradient descent. \cite{sekhari2021rememberwantforgetalgorithms} studies unlearning problem and provide performance guarantee and the corresponding sample complexity. There are also works discussing relationship between memorization and generalization. \cite{attias2024informationcomplexitystochasticconvex} discuss the fundamental trade-off between generalization and memorization under information theory framework. \cite{carlini2019distributiondensitytailsoutliers} discuss different metrics for identifying sample of different type (memorized, prototypical and so on). \cite{feldman2021doeslearningrequirememorization} provide theoretical and experimental analysis saying the memorization is necessary to achieve optimal performance. There are also several works studying memorization with different tasks and model architectures (\cite{biderman2023emergentpredictablememorizationlarge,li2025skewedmemorizationlargelanguage,prashanth2025recitereconstructrecollectmemorization}).

\subsection{Discussion about assumptions}
\label{sec: assmpution}
\textbf{Linearized dynamics and quadratic approximation:} In this section, we provide a consolidated and detailed discussion for why this modeling choice is (1) standard, (2) empirically grounded, and (3) necessary for theoretical progress in unlearning.

First, the local quadratic approximation is a well-established and widely validated modeling framework in the theory of deep learning. A large body of recent work successfully explains diverse optimization behaviors using this approximation, including the Edge-of-Stability phenomenon \citep{andreyev2025edgestochasticstabilityrevisiting, lee2023a}, implicit bias and generalization \citep{wu2023implicitregularizationdynamicalstability, wu2022alignmentpropertysgdnoise}, eigenvalue dynamics and curvature structures \citep{agarwala2023samoperatesfarhome}, and stability versus divergence of SGD \citep{dexter2024precisecharacterizationsgdstability, chang2025unifiedstabilityanalysissam}. While the approximation does impose limitations as (all theoretical frameworks do) it also provides meaningful insights into real-world systems that alternative approaches currently cannot offer. The success of these works illustrates that quadratic modeling is both theoretically fruitful and empirically relevant. Use of the same approximation follows this tradition and enables us to reveal new connections between the forget set and retain set via their geometric interactions.

Second, multiple empirical studies show that the loss landscape near well-trained minima is smooth and locally well-approximated by a quadratic form \citep{li2018visualizinglosslandscapeneural,singh2020woodfisherefficientsecondorderapproximation,ghorbani2019investigationneuralnetoptimization}. This has been repeatedly observed across architectures, datasets, and training regimes. Importantly, unlearning is a fine-tuning procedure that begins from an already-converged minimum, precisely the region where such local approximations are the most accurate. Thus, the theoretical foundation is not only supported by prior empirical evidence but is also particularly appropriate for modeling unlearning dynamics.

Third, a key question is: why do we need approximations at all in theoretical unlearning? Existing theoretical works on machine unlearning also rely on simplified or linearized systems in order to obtain analyzable results. For example, \cite{golatkar2020eternalsunshinespotlessnet} study unlearning under a quadratic model and analyze the drift of optimization trajectories which is exactly the type of measure we employ. \cite{ding2025understandingfinetuningapproximateunlearning} use linear feature–weight dot product models and weight-space distances to characterize approximate unlearning. The recent work \citep{mavrothalassitis2025ascentfailsforget} operates in linear logistic regression, enabling closed-form solutions and a theoretical characterization of forgetting difficulty. \cite{thudi2022unrollingsgdunderstandingfactors} analyze unlearning by unrolling the SGD recursion, which again relies on linearization of the gradient dynamics.

One commonality across all these theoretical frameworks is that they require simplifying assumptions to make the problem analyzable. Our quadratic approximation fits within this established methodology. Although no single abstraction captures the full complexity of deep networks, each sheds light on a different aspect of the phenomenon. Our framework contributes by offering a principled geometric description of how the retain and forget sets interact through curvature and coherence, revealing a new mechanistic explanation for when gradient-based unlearning becomes difficult or feasible.

\textbf{Sample wise gradient} This assumption used in our work required that sample wise gradient at $w^*$ to be zero i.e.,
\begin{equation}
    \begin{split}
       \nabla l_i(w) &= \nabla l_i (w^*) + H_i (w - w^*) \\
       &= H_i (w - w^*)
    \end{split}
\end{equation}

This means that all gradients around the optimum strictly depend on curvatures. This aligns with the focus of our work which is to study how curvatures influence the optimization behavior. This assumption follows the standard linear interpolating regime used in prior works such as \citep{dexter2024precisecharacterizationsgdstability, wu2023implicitregularizationdynamicalstability} widely observed in overparameterized networks, where training loss and sample-wise losses approach zero near convergence. Empirical and theoretical studies \citep{tang2023dpadambcdpadamactuallydpsgd, chizat2020implicitbiasgradientdescent} have shown that modern neural networks can fit all training labels exactly, implying that the close-to-zero loss results from vanishing per-sample gradients rather than balanced large gradients. In this regime, the local dynamics are dominated by the Hessian curvature, making the linear approximation around minima both standard and justified.

\subsection{More discussion about the experiments}

\label{sec:discussion about experiments}
Below we clarify the design and purpose of our experiments and will add the discussion into the updated version of our work. The experiments serve two distinct goals: (1) verifying the theoretical divergence criterion and matching lower bound under a controlled synthetic setting, and (2) empirically testing the predicted link between memorization and forgetting using a two-layer CNN.

\textbf{Synthetic verification.}
We construct sample-wise Hessians such that the overall dataset Hessian is constrained by 
2 / $\eta$, ensuring that any divergence behavior arises purely from stochasticity and data geometry. We vary the sample-wise Hessian coherence while maintaining the same global sharpness, then run training under different learning rates. Divergence is detected when the weight norm grows 1000× larger than initialization (repeated over five runs). As shown in Fig. 1, the empirically observed divergence boundary (red/blue transition) aligns closely with both our theoretical criterion and the matching lower bound, confirming that the theory precisely predicts optimization stability.

\textbf{Memorization–forgetting test.}
For the second part, we want to verify the whether or not stronger memorization will lead to stronger forgetting or unlearning as our theory indicates that when model memorize the data, it will map the data to space where it is orthogonal to the main dataset in terms of loss curvatures and give low coherence measure. This low coherence measure will therefore lead to easier forgetting process as predicted by our theory. To control memorization strength, we generate datasets with varying signal-to-noise ratio (SNR). Low-SNR data force the model to memorize noise (learn orthogonal noise directions in the loss curvature space) yielding low coherence. After training, we apply our unlearning procedure and track forget losses. The training loss indicates that all models properly learn the data and converge. The testing loss is to demonstrate whether or not the model memorize the data. This is due to the fact that the when the model perform well in train but bad in the testing, it indicate there exist memorization. We therefore can observe that small SNR regime show strong memorization. Lastly, the forget loss indicate that whether or not we can escape the current minima and perform successful unlearning. Our results show that it is consistence with our theoretical prediction. The regime of memorization will also give stronger forgetting results aligning with our coherence measure.

\subsection{Discussion about divergence and unlearning.}

\label{sec:divergence and unlearning}
In this section, we discuss how the divergence and unlearning are related under different scopes of unlearning and how is this relationship being explored in prior works. Further more, we want to answer why the loss or distance based metric is still valuable and remain one stander for unlearning problem.

{First,} divergence and distance is widely used unlearning metrics in many prior works. while there is currently no single universal definition of unlearning across domains: some applications emphasize privacy (MIA resistance), others focus on confusion removal, bias removal, safety, or utility preservation. Despite this diversity, loss and distance based metrics remain among the most commonly used evaluation tools, and our work is consistent with this long line of literature. Our use of the metric is consistent with many established works that continue to evaluate unlearning effectiveness using forget-set loss or accuracy. Examples include unlearning accuracy \citep{fan2024salunempoweringmachineunlearning}, forgetting error \citep{kurmanji2023unboundedmachineunlearning}, forget performance \citep{kurmanji2023unboundedmachineunlearning}, forgetting loss \citep{graves2020amnesiacmachinelearning, golatkar2020eternalsunshinespotlessnet} and distance-based measures \citep{thudi2022unrollingsgdunderstandingfactors}. They are also work characterizing the distance between solutions resulting from different algorithms \citep{mavrothalassitis2025ascentfailsforget}. These metrics are still used because they capture the effect of removing the forget set and they correlate strongly with practical privacy risks (e.g., MIA success). In this sense, analyzing divergence or distance quantities that govern how the model leaves the old solution is aligned with standard practice.

{Second,} why divergence is theoretically meaningful and aligns with prior unlearning theory? Several foundational works analyze unlearning by studying optimization trajectory and its reaction to different component involved in unlearning. For example, \cite{golatkar2020eternalsunshinespotlessnet} study unlearning through analyzing drift of optimization trajectories built on quadratic loss and determine the weight difference at infinity time limit. \cite{ding2025understandingfinetuningapproximateunlearning} use linear models and weight-space distances to characterize approximate unlearning. Specifically, they use loss as criterion to quantify the difference brought by unlearning process. The recent work \citep{mavrothalassitis2025ascentfailsforget} operates in linear logistic regression, by studying closed solution based on the logistic loss, they describe the unlearning process through the weight difference between original one and the unlearned one. \cite{thudi2022unrollingsgdunderstandingfactors}analyze unlearning by unrolling the SGD recursion, which again relies on linearization of the gradient dynamics. Also their definition of unlearning error directly use the difference in weight space and connect the MIA attack to this theoretically defined quantity.

\subsection{Feasibility of computing data coherence}

\label{sec:Feasibility of computation}

A practical concern raised by reviewers is that the theory’s “central metric relies on per-sample Hessians or Gram matrices,” which are expensive or infeasible to compute on modern large models. This is a valid point. Exact per-sample Hessians in a deep network can be enormous but it is not a fatal flaw of the approach. There is strong precedent in ML theory where initially intractable quantities inspired new insights and eventually yielded practical approximations. The Fisher Information Matrix (FIM) and the full Hessian of a network are classic examples: early theoretical research treated them as important objects despite their size, and this spurred the development of methods to approximate or constrain them\citep{martens2020optimizingneuralnetworkskroneckerfactored, yao2020pyhessianneuralnetworkslens}. Natural gradient methods and second-order optimizers like K-FAC (Kronecker-Factored Approximate Curvature)\citep{martens2020optimizingneuralnetworkskroneckerfactored} explicitly approximate the Fisher/Hessian to achieve near-optimal descent directions. In fact, Sharpness-Aware Minimization (SAM)\citep{foret2021sharpnessawareminimizationefficientlyimproving} (a recent regularizer that improves generalization) was inspired by the idea of penalizing the Hessian’s largest eigenvalues (i.e. minimizing sharpness). SAM doesn’t compute the full Hessian; it uses a clever first-order approximation (perturbing weights to measure curvature indirectly), yet it stemmed from the principle that the Hessian spectrum matters. Likewise, the Neural Tangent Kernel was originally an N×N Gram matrix over data points which is seemingly impractical beyond toy datasets, but it led to kernel proxies and inspired practical diagnostics. For instance, researchers developed ways to estimate the NTK\citep{novak2019neuraltangentsfasteasy} or related Gram matrices for subsets of data to monitor training dynamics, and used the constant-NTK theory to justify why wide networks behave more predictably.

In our case, Hessian alignment/coherence is introduced as a conceptual tool to understand unlearning. While we indeed computed it in a small controlled CNN to validate the theory, this is akin to how many theoretical analyses proceed: first verify on a “toy” setup where the exact metrics can be computed for clarity, then later work on scaling it up. It’s worth noting that many large-scale theoretical studies have found ways to approximate Hessian-based measures. For example, \cite{ghorbani2019investigationneuralnetoptimization} developed numerical linear algebra techniques to estimate the entire Hessian eigenvalue density for ImageNet-scale networks. They used random matrix sketching and power-iteration methods to produce the Hessian spectrum efficiently, a feat that seemed impossible a few years prior. This underscores that what’s “infeasible” with brute force can become feasible with algorithmic ingenuity. The history of deep learning research shows that what starts as “only explanatory” can become actionable. The NTK was once purely theoretical, yet now practitioners talk about “NTK conditioning” and use kernel analogies to choose architectures. Likewise, we anticipate that coherence measures could inspire new diagnostics (perhaps a coherence score computed on a small held-out batch as a proxy) or new training procedures (e.g. encourage decorrelation between forget and retain gradients). In our submission, we acknowledged that directly computing per-sample Hessians for a giant model is impractical today, but we intentionally validated our theory in a setting where we could compute them exactly, thus establishing a clear ground truth. This is a valuable first step. Moving forward, our framework can guide research into scalable approximations: perhaps using low-rank factorization of the Hessian, or computing block-wise coherence (layer-wise, or between specific neural units) as a cheaper metric. Thus, we confidently defend the use of linear stability and local linearization in our analysis: it is a principled approach grounded in prior successes in deep learning theory, and it offers a powerful explanatory framework for understanding when models will – or won’t – let go of what they have learned.

\subsection{More discussion about memorization and forgetting.}
\label{sec:more discussion about memorization and forgetting}
Our definition of memorization is grounded in the observation that, to minimize training loss, models often overfit to highly orthogonal components of the data and the directions that are uncorrelated with the main signal. This view aligns with our coherence-based analysis: in our signal-plus-noise experiments, noise components are orthogonal in expectation, and the theory predicts that such directions are easily forgotten once ascent begins.

Similar notions have been explored in recent work. \cite{wen2023sharpnessminimizationalgorithmsminimize} show that memorized examples correspond to orthogonal activation patterns within the network, which translate into orthogonal Hessian directions, while \cite{yu2024generalizabilitymemorizationneuralnetworks} study memorization in highly orthogonal subspaces. These results support our geometric interpretation that memorization arises from localized, low-coherence modes.

The type of memorization captured by our coherence framework (fitting orthogonal directions or outlier features) is one of the most fundamental and widely studied forms of memorization in modern deep learning. It is closely connected to optimization stability, generalization behavior, and ultimately forgetting. We agree that verbatim sequence memorization in large language models may involve additional mechanisms; however, the underlying causes of such memorization remain an open research question with no corresponding theoretical formulation and/or studies to the best of our knowledge yet. Because our work focuses on the optimization geometry governing gradient-based learning, extending coherence-based analysis to sequential or long-context memorization in LLMs represents an exciting future direction.

\subsection{Lemmas and proofs}
\begin{definition}[Full forget Hessian and retain Hessian]
\begin{equation}
    \begin{split}
       H_R= \frac{1}{n_r} \sum_{r\in D_r} H_r,\;\; H_F= \frac{1}{n_r} \sum_{f\in D_f} H_f,\;\;
    \end{split}
\end{equation}
\label{def: full forget and retain hessian}
\end{definition}
\begin{lemma}
$l_1$-$l_2$ norm inequality: For any $x \in \mathbb{R^d}, ||x||_2\leq ||x||_1 \leq \sqrt{d}||x||_2$
\label{lemma: l1l2 norm}
\end{lemma}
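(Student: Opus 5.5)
The claim is that for every $x\in\mathbb{R}^d$ we have $\|x\|_2\le\|x\|_1\le\sqrt d\,\|x\|_2$. I will prove the two inequalities separately, working directly with the coordinates $x=(x_1,\dots,x_d)$. Both sides of each inequality are nonnegative, so each can be checked after squaring where that is convenient.

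\textbf{Lower inequality.} I will square the $\ell_1$ norm:
\[
\|x\|_1^2=\Big(\sum_{i=1}^d |x_i|\Big)^2=\sum_{i=1}^d |x_i|^2+\sum_{i\neq j}|x_i||x_j|.
\]
Every cross term $|x_i||x_j|$ is nonnegative, so dropping them gives $\|x\|_1^2\ge\sum_i|x_i|^2=\|x\|_2^2$. Taking square roots yields $\|x\|_2\le\|x\|_1$.

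\textbf{Upper inequality.} I will apply the Cauchy--Schwarz inequality to the vector $(|x_1|,\dots,|x_d|)$ and the all-ones vector $\mathbf{1}\in\mathbb{R}^d$:
\[
\|x\|_1=\sum_{i=1}^d 1\cdot|x_i|\le\Big(\sum_{i=1}^d 1^2\Big)^{1/2}\Big(\sum_{i=1}^d |x_i|^2\Big)^{1/2}=\sqrt d\,\|x\|_2 .
\]
As an alternative route, one can use Jensen's inequality (convexity of $t\mapsto t^2$) applied to the average $\frac1d\sum_i|x_i|$.

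Neither step poses a real obstacle. The only care needed is to take absolute values before invoking Cauchy--Schwarz, and to note that both inequalities are tight. The lower one is attained when $x$ is a multiple of a standard basis vector. The upper one is attained when all $|x_i|$ are equal. In the later proofs this lemma will presumably be applied with $x$ taken to be the vector of singular values or eigenvalues of a matrix, to convert between the Frobenius norm and the trace (nuclear) norm of products such as $D_{rf}^{1/2}D_{r'f'}^{1/2}$.
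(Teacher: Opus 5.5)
Your proof is correct: dropping the nonnegative cross terms gives $\|x\|_2\le\|x\|_1$, and Cauchy--Schwarz against the all-ones vector gives $\|x\|_1\le\sqrt d\,\|x\|_2$. This is the standard argument, which the paper takes for granted since it states the lemma without proof. Besides the singular-value use you anticipate (Lemma~\ref{lemma:matrix norm1}), the paper also applies it directly in the proof of Theorem~\ref{thm:divergence} to the $2^k$ terms indexed by strings $L_k$, giving $\bigl(\sum_{L_k} a_{L_k}\bigr)^2\le 2^k\sum_{L_k} a_{L_k}^2$.
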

\begin{lemma}
\label{lemma:binomial}
\textbf{Binomial coefficient:} For all $n,k \in \mathbb{N}$ such that $k\leq n$, the binomial coefficients satisfy that 
\begin{equation}
    \begin{split}
       {n \choose k} = {n-1 \choose k-1} + {n-1 \choose k}.
    \end{split}
\end{equation}
\end{lemma}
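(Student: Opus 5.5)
The statement immediately above is Lemma~\ref{lemma:binomial}, Pascal's rule ${n \choose k} = {n-1 \choose k-1} + {n-1 \choose k}$. I will prove it by a direct counting argument, with an algebraic check as a fallback. The intended range is $1 \le k \le n$. For $k=0$ I will use the standard convention ${n-1 \choose -1}=0$, which makes the case trivial, since both sides then equal $1$.

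For the combinatorial route, fix an $n$-element set $\{1,\dots,n\}$ and count its $k$-element subsets. Split them according to whether they contain the element $n$.
\begin{itemize}
\item Subsets containing $n$ correspond bijectively to $(k-1)$-subsets of $\{1,\dots,n-1\}$, via removing $n$. This contributes ${n-1 \choose k-1}$.
\item Subsets avoiding $n$ are exactly the $k$-subsets of $\{1,\dots,n-1\}$. This contributes ${n-1 \choose k}$, which is zero when $k=n$.
\end{itemize}
The two classes partition all $k$-subsets, so summing their sizes gives the identity.

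For the algebraic route, when $1\le k\le n-1$ I will write both terms over the common denominator $k!\,(n-k)!$:
\[
\frac{(n-1)!}{(k-1)!\,(n-k)!} + \frac{(n-1)!}{k!\,(n-1-k)!} = \frac{(n-1)!\,\big(k + (n-k)\big)}{k!\,(n-k)!} = {n \choose k}.
\]
The boundary case $k=n$ gives $1 = 1 + 0$ directly.

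There is no real obstacle here. The only care needed is the edge cases $k=0$ and $k=n$, where one term must be read as zero by convention. I will state that convention explicitly, because the lemma will presumably be used inside the inductive expansion of $\mathbb{E}\Tr(\cdot)$ in the upper bound of Lemma~\ref{lemma:stability-unlearning}, where the sums run over the full range $r=0,\dots,k$.
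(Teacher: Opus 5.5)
Your proof is correct and complete. The subset-splitting count and the common-denominator computation are the standard proofs of Pascal's rule; the paper gives no proof at all and simply records the identity as a known fact. In the paper it is invoked only in the inductive step of Lemma~\ref{lemma:stability-unlearning}, and only for interior indices $1\le r\le k-1$, since the $r=0$ and $r=k$ terms are split off beforehand, so your edge-case conventions are harmless but not needed there. The one degenerate case your argument does not literally cover is $n=k=0$ (if $0\in\mathbb{N}$), which requires reading $\binom{-1}{-1}=0$ and $\binom{-1}{0}=1$.
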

\begin{lemma}
\label{lemma:matrix norm1}
For any matrix $M\in \mathbb{R}^{n \times n}$, $||M||_F \leq ||M||_{S_1} \leq \sqrt{n}||M||_F$, where $||M||_{S_p}$ is p norm of the spectrum of M, and the inequality is obtain through $l_1$-$l_2$ norm inequality.
\end{lemma}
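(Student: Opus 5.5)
The plan is to reduce the statement to the vector $l_1$-$l_2$ inequality of Lemma~\ref{lemma: l1l2 norm}, applied to the vector of singular values of $M$. I will read ``$p$ norm of the spectrum'' as the Schatten $p$-norm, i.e.\ the $l_p$ norm of the singular values. This coincides with the $l_p$ norm of the absolute eigenvalues when $M$ is symmetric, and in particular when $M$ is PSD, which is the case for every Hessian-type matrix used later.

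First, take a singular value decomposition $M = U\Sigma V^T$ with $U,V$ orthogonal and $\Sigma=\mathrm{diag}(s_1,\dots,s_n)$, $s_i\ge 0$. Collect the singular values into $s=(s_1,\dots,s_n)\in\mathbb{R}^n$. By definition of the Schatten norm, $\|M\|_{S_1}=\sum_i s_i=\|s\|_1$.

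Second, identify the Frobenius norm with $\|s\|_2$. Using orthogonal invariance of the trace,
\[
\|M\|_F^2=\Tr(M^TM)=\Tr(V\Sigma^2V^T)=\Tr(\Sigma^2)=\sum_i s_i^2,
\]
so $\|M\|_F=\|s\|_2$.

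Third, apply Lemma~\ref{lemma: l1l2 norm} with $x=s$ and $d=n$. This gives $\|s\|_2\le\|s\|_1\le\sqrt{n}\,\|s\|_2$, which is exactly the claim after substituting the two identifications. If one wants a self-contained argument instead of invoking the lemma:
\begin{itemize}
\item the left inequality follows from $\big(\sum_i s_i\big)^2=\sum_i s_i^2+\sum_{i\ne j}s_is_j\ge\sum_i s_i^2$, since $s_i\ge 0$;
\item the right inequality is Cauchy--Schwarz applied to $s$ and the all-ones vector.
\end{itemize}

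There is no real obstacle. The only point needing care is to state explicitly that the spectrum is taken to be the singular values. For a non-normal $M$ the eigenvalue version of the lower bound can fail: a nonzero nilpotent matrix has all eigenvalues zero but $\|M\|_F>0$. For the PSD matrices $H_i^{1/2}H_j^{1/2}$-type products and $D_{rf}$ to which the lemma is later applied, the singular-value reading is the relevant one. For $D_{rf}$ and other PSD matrices, the singular values coincide with the eigenvalues.
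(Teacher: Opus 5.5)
Your proof is correct and follows the paper's route exactly: the paper justifies the lemma only by pointing to the $l_1$-$l_2$ inequality. You make that precise by applying it to the vector $s$ of singular values, using $\|M\|_F=\|s\|_2$ and $\|M\|_{S_1}=\|s\|_1$. Your clarification that ``spectrum'' must mean singular values is worthwhile, since the eigenvalue reading breaks the lower bound for nilpotent $M$; note, however, that the products $H_{a_k}\cdots H_{a_1}$ to which the paper later applies the lemma are generally neither PSD nor symmetric, so it is the singular-value reading itself, not a coincidence with eigenvalues, that is in play there.
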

\begin{lemma}
\label{lemma:matrix norm2}
For matrices $M_1...M_k\in \mathbb{R}^{n \times n}$, $\Tr[M_1...M_k] \leq ||M_1...M_k||_{S_1}$ (see \cite{bhatia2013matrix})
\end{lemma}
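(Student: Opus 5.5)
Since the statement only concerns the product $P \coloneqq M_1\cdots M_k$, the plan is to forget the factorization entirely and prove the single-matrix inequality $\Tr[P] \le \|P\|_{S_1}$ for an arbitrary $P\in\mathbb{R}^{n\times n}$. Here $\|P\|_{S_1}=\sum_i s_i(P)$ is the sum of the singular values of $P$ (the Schatten-1, or nuclear, norm). This is the quantity meant by ``$p$ norm of the spectrum'' in Lemma~\ref{lemma:matrix norm1}. Reading ``spectrum'' as eigenvalues would make the statement fail for non-normal products, so the singular-value reading is the one I will prove. The product structure matters only later, when the lemma is combined with submultiplicativity or H\"older-type bounds on Schatten norms to control $\Tr(N_k)$ and the trace in \eqref{eq:Ek-norm}.

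The argument goes through the singular value decomposition. Write $P = U\Sigma V^T$ with $U,V$ orthogonal and $\Sigma=\mathrm{diag}(s_1,\dots,s_n)$, $s_i\ge 0$. By cyclicity of the trace,
\[
\Tr[P] = \Tr[\Sigma V^T U] = \sum_{i=1}^n s_i\,(V^TU)_{ii}.
\]
The matrix $Q=V^TU$ is orthogonal, so each of its columns is a unit vector and hence $|Q_{ii}|\le 1$. Combined with $s_i\ge 0$, this gives
\[
\Tr[P]\le \Big|\sum_i s_i Q_{ii}\Big| \le \sum_i s_i = \|P\|_{S_1}.
\]
The same computation shows the stronger statement $|\Tr[P]|\le\|P\|_{S_1}$, which is what later applications would actually use.

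An alternative route is via eigenvalues: $|\Tr P| = |\sum_i\lambda_i(P)| \le \sum_i|\lambda_i(P)|$, followed by Weyl's majorant inequality $\sum_i|\lambda_i|\le\sum_i s_i$. This is the textbook route in \cite{bhatia2013matrix}. I prefer the SVD argument because it is self-contained.

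There is no genuine obstacle in this proof. The only step needing care is the bound $|Q_{ii}|\le 1$, which follows immediately from orthonormality of the columns of $Q$. Beyond that, one only has to keep the convention straight that $S_1$ refers to singular values rather than eigenvalues.
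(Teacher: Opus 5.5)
Your proof is correct. The paper gives no argument of its own for this lemma; it only cites \cite{bhatia2013matrix}. There the inequality is a standard consequence of either Weyl's majorant theorem $\sum_i|\lambda_i(P)|\le\sum_i s_i(P)$ (your ``alternative route'') or the trace-norm duality $\|P\|_{S_1}=\max\{|\Tr(XP)| : X \text{ orthogonal}\}$ specialized to $X=I$. Your SVD computation $\Tr P=\sum_i s_i Q_{ii}$ with $Q=V^TU$ orthogonal is a direct proof of that $X=I$ case, so it makes the lemma self-contained at no cost.

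The more useful point is that your argument yields $|\Tr P|\le\|P\|_{S_1}$, and this two-sided form is what the paper actually needs. In the proof of Theorem~\ref{thm:divergence} the inequality is squared, and $\Tr[H_{a_k}\cdots H_{a_1}]$ can be negative for $k\ge 3$ even when every $H_i$ is PSD. For example, three rank-one projections onto unit vectors at angles $0^\circ,60^\circ,120^\circ$ give trace $(1/2)(1/2)(-1/2)=-1/8$. So the one-sided version as stated would not justify that step, while yours does.

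One of your side remarks is inaccurate. Reading $\|\cdot\|_{S_1}$ as $\sum_i|\lambda_i|$ does not make \emph{this} lemma fail: $\Tr P=\sum_i\lambda_i\le\sum_i|\lambda_i|$ holds for every square matrix, and by Weyl it is even sharper than the singular-value version. What breaks for non-normal matrices under the eigenvalue reading is the left inequality $\|M\|_F\le\|M\|_{S_1}$ of Lemma~\ref{lemma:matrix norm1}; any nonzero nilpotent matrix is a counterexample. The singular-value convention is still the right one to adopt, but for that reason rather than the one you give.
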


\begin{lemma}[\cite{Vershynin_2018}]
Consider n random gaussian vectors $x_1...x_n$ sampled i.i.d from $N(0,\sigma^2I_d)$, there exist a constant $C_1$ such that with probability $1 -\delta$,
\begin{equation}
    \begin{split}
        \sum_{i=1}^n \|x_i\| \leq n \sqrt{2}\sigma \frac{\Gamma((d+1)/2)}{\Gamma(d/2)}+\sqrt{\frac{n\sigma^2}{C_1}\log(\frac{2}{\delta})} \;\;\;\text{for $n,d$ large enough}.
    \end{split}
\end{equation}
\label{chi}
\end{lemma}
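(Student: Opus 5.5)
The plan is to treat $F(x_1,\dots,x_n)=\sum_{i=1}^n\|x_i\|$ as a Lipschitz function of a single Gaussian vector in $\mathbb{R}^{nd}$. Then Gaussian concentration of Lipschitz functions (\cite{Vershynin_2018}) controls the deviation above the mean. The mean itself can be computed exactly through the chi distribution.

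\textbf{Step 1 (the mean).} Write $x_i=\sigma g_i$ with $g_i\sim\mathcal{N}(0,I_d)$. Then $\|g_i\|$ is chi-distributed with $d$ degrees of freedom, so
\[
\mathbb{E}\|x_i\| = \sigma\,\mathbb{E}\|g_i\| = \sqrt{2}\,\sigma\,\frac{\Gamma((d+1)/2)}{\Gamma(d/2)}.
\]
This follows from integrating $r\cdot r^{d-1}e^{-r^2/2}$ against the chi density normalization. Summing over $i$ gives $\mathbb{E}F = n\sqrt{2}\,\sigma\,\Gamma((d+1)/2)/\Gamma(d/2)$, which is exactly the leading term in the claim.

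\textbf{Step 2 (Lipschitz constant).} Stack $G=(g_1,\dots,g_n)\in\mathbb{R}^{nd}$, which is standard Gaussian, and view $F$ as a function of $G$. For two stacked vectors $G$ and $G'$,
\[
|F(G)-F(G')|\le \sigma\sum_i \|g_i-g_i'\| \le \sigma\sqrt{n}\,\|G-G'\|.
\]
The first inequality is the triangle inequality. The second is the $\ell_1$--$\ell_2$ inequality of Lemma~\ref{lemma: l1l2 norm} applied to the vector $(\|g_i-g_i'\|)_{i\le n}$. Hence $F$ is $L$-Lipschitz with $L=\sigma\sqrt{n}$.

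\textbf{Step 3 (concentration and solving for $t$).} Gaussian concentration for Lipschitz functions gives $\Pr\big(F-\mathbb{E}F\ge t\big)\le \exp\!\big(-c\,t^2/L^2\big)$, and the two-sided version is bounded by $2\exp(-c\,t^2/L^2)$. The sharp Tsirelson--Ibragimov--Sudakov form has $c=1/2$; Vershynin's version has an absolute constant $c$. Setting the two-sided bound equal to $\delta$ gives $t=\sqrt{n\sigma^2\log(2/\delta)/c}$. Taking $C_1=c$ yields, with probability at least $1-\delta$,
\[
\sum_i\|x_i\|\le n\sqrt2\,\sigma\,\frac{\Gamma((d+1)/2)}{\Gamma(d/2)}+\sqrt{\frac{n\sigma^2}{C_1}\log\frac{2}{\delta}}.
\]

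\textbf{Main obstacle.} The delicate points are only bookkeeping: making sure the Lipschitz constant scales as $\sqrt n$ rather than $n$, and matching the constant $C_1$ to the cited concentration inequality. An alternative route would center each $\|x_i\|$, note that each is sub-Gaussian with $\psi_2$-norm $O(\sigma)$, and apply general Hoeffding; this gives the same form with a different absolute constant.

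The phrase ``for $n,d$ large enough'' is not actually needed along this route, since the bound holds for all $n,d$. It becomes relevant only if one then replaces the Gamma ratio by its asymptotic $\sqrt{d}$ in later uses (Theorem~\ref{thm:coherence-snr}).
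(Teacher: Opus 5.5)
Your proof is correct. The paper gives no argument for this lemma beyond the citation. The only indication of its intended route is in the proof of Theorem~\ref{thm:coherence-snr}, where each $\|\xi_i\|$ is called chi-distributed and ``sub-exponential'' and a tail bound for sums of independent sub-exponential variables is invoked. That Bernstein-type route gives $\Pr\big(|\sum_i(\|x_i\|-\mathbb{E}\|x_i\|)|\ge t\big)\le 2\exp\big(-c\min\{t^2/(nK^2),\,t/K\}\big)$. So the pure $\sqrt{n\sigma^2\log(2/\delta)/C_1}$ deviation appears only in the regime $\log(2/\delta)\le c\,n$, which is plausibly the origin of the ``$n,d$ large enough'' qualifier. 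It also requires the per-sample scale $K$ of $\|x_i\|-\mathbb{E}\|x_i\|$ to be $O(\sigma)$ independently of $d$, which is itself a norm-concentration fact.

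You instead apply Gaussian Lipschitz concentration once, to the stacked vector in $\mathbb{R}^{nd}$. This produces the dimension-free scale and the $\sqrt{n}$ factor in one step. It gives a genuinely sub-Gaussian tail valid for all $n,d,\delta$, with $C_1=1/2$ from Tsirelson--Ibragimov--Sudakov, and so shows the qualifier is unnecessary here.

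What the sum-of-sub-exponentials viewpoint buys is uniformity with the companion Lemma~\ref{chi-square}. There $\sum_i\|x_i\|^2$ has a genuinely sub-exponential upper tail, so some two-regime bound cannot be avoided: either an extra $\log(2/\delta)$ term or, equivalently, a largeness condition.
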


\begin{lemma}[\cite{Vershynin_2018}]
Consider n random gaussian vectors $x_1...x_n$ sampled i.i.d from $N(0,\sigma^2I_d)$, there exist a constant $C_1$ such that with probability $1 -\delta$,
\begin{equation}
    \begin{split}
        \sum_{i=1}^n \|x_i\|^2 \leq n \sigma^2d +\sqrt{\frac{n\sigma^4d}{C_2}\log(\frac{2}{\delta})} \;\;\;\text{for $n,d$ large enough}.
    \end{split}
\end{equation}
\label{chi-square}
\end{lemma}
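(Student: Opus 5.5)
The plan is to reduce the statement to a single application of Bernstein's inequality for sums of independent sub-exponential random variables, as in \cite{Vershynin_2018}. Write $x_i=(x_{i1},\dots,x_{id})$. By the i.i.d.\ Gaussian assumption, the $nd$ coordinates $x_{ij}/\sigma$ are i.i.d.\ standard normal. Hence
\[
\sum_{i=1}^n\|x_i\|^2=\sigma^2\sum_{i=1}^n\sum_{j=1}^d Z_{ij}^2,\qquad Z_{ij}\sim\mathcal N(0,1)\ \text{i.i.d.},
\]
so the quantity of interest is $\sigma^2$ times a $\chi^2_{nd}$ variable, and its mean is exactly $n\sigma^2 d$. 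This accounts for the leading term of the bound. It remains to control the centered sum $\sum_{i,j}(Z_{ij}^2-1)$.

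Each summand $Z_{ij}^2-1$ is centered and sub-exponential, with $\|Z_{ij}^2-1\|_{\psi_1}\le K$ for an absolute constant $K$. This follows because $Z_{ij}$ is sub-Gaussian, the square of a sub-Gaussian variable is sub-exponential, and centering changes the $\psi_1$ norm only by a constant factor. Bernstein's inequality then gives, for some absolute constant $c>0$ and all $t\ge0$,
\[
\Pr\Big(\sum_{i,j}(Z_{ij}^2-1)\ge t\Big)\le 2\exp\!\Big(-c\min\Big(\tfrac{t^2}{ndK^2},\tfrac{t}{K}\Big)\Big).
\]

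Next we choose $t=\sqrt{nd\log(2/\delta)/C_2}$ with $C_2=c/K^2$. Provided the quadratic branch of the minimum is the active one, the right-hand side equals $\delta$. Multiplying the event back by $\sigma^2$ turns the deviation into $\sigma^2 t=\sqrt{n\sigma^4 d\log(2/\delta)/C_2}$, which is exactly the claimed bound. Applying the one-sided form of Bernstein instead, the factor $2$ becomes $1$; this only adjusts constants.

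The only point requiring care is the regime condition. We need $t^2/(ndK^2)\le t/K$, which is equivalent to $t\le ndK$, i.e.\ $\log(2/\delta)\lesssim nd$. This is precisely what the hypothesis ``for $n,d$ large enough'' supplies, for any fixed $\delta$. I would state this explicitly as $nd\ge C\log(2/\delta)$, so that the implicit threshold is transparent. Otherwise the lemma would pick up an additional linear term $\sigma^2\log(2/\delta)/c$ in the large-deviation regime. The same argument, with $\|x_i\|$ in place of $\|x_i\|^2$, underlies Lemma~\ref{chi}.
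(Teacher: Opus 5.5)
Your proof is correct. The paper gives no argument beyond the citation to \cite{Vershynin_2018}, and your route is the standard one that citation supplies: reduce to $\sigma^2\chi^2_{nd}$, then apply Bernstein's inequality to the centered sub-exponential sum $\sum_{i,j}(Z_{ij}^2-1)$ with $C_2=c/K^2$. Reading ``$n,d$ large enough'' explicitly as $nd\ge \log(2/\delta)/c$, so that the quadratic branch of Bernstein is the active one, is a worthwhile clarification of the statement's implicit hypothesis.
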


\subsection{Proof of lemma~\ref{lemma:stability-unlearning}}

\begin{lemma}[Restated]
    Consider the unlearning update operator $J_k$ defined in \eqref{eq:Jk-def}. Define a sequence of PSD matrices $\{N_k\}_{k\ge0}$ by $N_0 = I$ and for $k\ge 1$:
    \begin{equation}
        N_k \;=\; C_f \sum_{i\in D_f} H_i\, N_{k-1}\, H_i \;+\; C_r \sum_{i\in D_r} H_i\, N_{k-1}\, H_i\,,
    \end{equation}
    with $C_r, C_f$ as given in Definition~\ref{def:mixHessian}. Also let $M_k = J^{2k} + N_k$. Then:
    \begin{enumerate}
        \item \textbf{(Lower bound)} $\displaystyle \mathbb{E}\Tr\!\big(J_{0}^T\cdots J_{k-1}^T J_{k-1}\cdots J_{0}\big) \;\ge\; \Tr(M_k)$. Moreover, if $\Tr(N_k) \to \infty$ as $k\to\infty$, then $\mathbb{E}\|w_k\|^2 \to \infty$ as well.
        \item \textbf{(Upper bound)} If at each step $J_k$ is spectrally bounded as $(1-\epsilon)I \succeq J \succeq -(1-\epsilon)I$ for some $\epsilon\in(0,1)$ (i.e. all eigenvalues of $J$ lie in $[-(1-\epsilon),\,1-\epsilon]$), then
        \[
        \mathbb{E}\Tr\!\big(J_{0}^T\cdots J_{k-1}^T J_{k-1}\cdots J_{0}\big) \;\le\; \sum_{r=0}^{k-1} \binom{k}{r} (1-\epsilon)^{2(k-r)}\, \Tr(N_r)\,.
        \]
        If in addition $\Tr(N_r) \le \epsilon$ for all $r$, then $\mathbb{E}\|w_k\|^2 \to 0$ as $k\to\infty$ (the unlearning update converges in mean square).
    \end{enumerate}
\end{lemma}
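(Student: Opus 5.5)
We will follow the second-moment approach of \citet{dexter2024precisecharacterizationsgdstability}, adapted to the two independent Bernoulli sampling sources. Since $\mathbb{E}\|w_k\|^2=\mathbb{E}\Tr(J_{k-1}\cdots J_0J_0^T\cdots J_{k-1}^T)$ by \eqref{eq:Ek-norm}, we define the linear operator $\mathcal{T}(X)=\mathbb{E}[J_kXJ_k^T]$ on symmetric matrices. Because the $J_k$ are i.i.d. across steps, we have $\mathbb{E}[J_{k-1}\cdots J_0J_0^T\cdots J_{k-1}^T]=\mathcal{T}^k(I)$.

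\textbf{Step 1: compute $\mathcal{T}$.} Write $J_k=J+E_k$, where $J=\mathbb{E}J_k$ and $E_k=-\eta(1-\alpha)\sum_{i\in D_r}\frac{\beta_i-p_r}{B}H_i+\eta\alpha\sum_{i\in D_f}\frac{\beta_i-p_f}{B}H_i$, with independent Bernoulli indicators $\beta_i$. (Some care is needed with the normalization: with inclusion probability $p=B/n$, the mean of $\frac1B\sum\beta_iH_i$ is the set average, which gives $J$.) Cross terms vanish by independence and zero mean, so
\[
\mathcal{T}(X)=JXJ+\eta^2(1-\alpha)^2\tfrac{p_r(1-p_r)}{B^2}\sum_{i\in D_r}H_iXH_i+\eta^2\alpha^2\tfrac{p_f(1-p_f)}{B^2}\sum_{i\in D_f}H_iXH_i .
\]
With $p=B/n$, the coefficient is $\frac{1}{nB}(1-\frac Bn)=\frac1n(\frac1B-\frac1n)$, which is exactly $C_r$ and $C_f$. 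Hence $\mathcal{T}(X)=JXJ+\mathcal{N}(X)$ with $\mathcal{N}(X)=C_f\sum_{D_f}H_iXH_i+C_r\sum_{D_r}H_iXH_i$, and $N_k=\mathcal{N}^k(I)$.

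\textbf{Step 2: lower bound.} Both $X\mapsto JXJ$ and $\mathcal{N}$ are positive maps, since they send PSD matrices to PSD matrices. Expanding $\mathcal{T}^k=(\mathcal{A}+\mathcal{N})^k$ with $\mathcal{A}(X)=JXJ$ gives a sum of $2^k$ words applied to $I$, each a PSD matrix. Keeping only the two extreme words gives $\mathcal{T}^k(I)\succeq J^{2k}+N_k=M_k$. Taking traces yields part (1). Divergence follows since $\Tr(J^{2k})\ge0$.

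\textbf{Step 3: upper bound.} This is the main obstacle: the operators $\mathcal{A}$ and $\mathcal{N}$ do not commute, so the words cannot simply be reordered. The plan is to prove, by induction on $k$ using Pascal's rule (Lemma~\ref{lemma:binomial}), that
\[
\Tr\mathcal{T}^k(X)\le\sum_{r}\binom{k}{r}(1-\epsilon)^{2(k-r)}\Tr\mathcal{N}^r(X)\quad\text{for PSD }X .
\]
The key inequality is $\Tr(JYJ)\le(1-\epsilon)^2\Tr Y$ for PSD $Y$, which follows from the spectral bound on $J$. A trace bound of this form may not propagate through $\mathcal{N}$ directly, so we will instead bound each word: in any word, each $\mathcal{A}$ factor can be peeled off with a factor $(1-\epsilon)^2$ using $\Tr(H_iJYJH_i)=\Tr(J H_i^2 J\,Y)$, together with the fact that the $\mathcal{N}$-part is monotone under $\preceq$ on PSD arguments. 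The cleanest route is to establish the operator inequality $JYJ\preceq(1-\epsilon)^2\|Y\|\,I$-type bounds, or to work with the adjoint maps acting on $I$ and use $J^2\preceq(1-\epsilon)^2I$, which makes $\mathcal{A}^*(Z)\preceq(1-\epsilon)^2$-contractive in the appropriate order sense. We would try the adjoint formulation first: $\Tr\mathcal{T}^k(I)=\Tr\,(\mathcal{T}^*)^k(I)$, and $\mathcal{A}^*(I)=J^2\preceq(1-\epsilon)^2I$, so by monotonicity each $\mathcal{A}^*$ factor can be replaced by the scalar $(1-\epsilon)^2$ when it acts on something dominated by a multiple of $I$. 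Counting the words with $r$ factors of $\mathcal{N}$ gives $\binom kr$. The summation range should then be adjusted to the stated one, absorbing the $r=k$ term as needed.

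\textbf{Step 4: convergence.} If $\Tr N_r\le\epsilon$ for all $r$, the bound is at most $\epsilon\sum_r\binom kr(1-\epsilon)^{2(k-r)}$, which must be shown to vanish as $k\to\infty$. Note that the plain binomial sum equals $(1+(1-\epsilon)^2)^k$ and does not vanish, so this step needs either decay of $\Tr N_r$ or a sharper normalization. We would handle this via the recursion $N_r=\mathcal{N}(N_{r-1})$: a uniform bound of this kind implies the spectral radius of $\mathcal{N}$ is small, giving geometric decay $\Tr N_r\le c\rho^r$, and hence a total bound $c\,((1-\epsilon)^2+\rho)^k\to0$ once $\rho<1-(1-\epsilon)^2$. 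We expect this to be exactly where the condition must be interpreted with care.
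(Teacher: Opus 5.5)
Your Steps 1 and 2 are correct and agree with the paper. The paper proves the lower bound by induction, $\mathbb{E}[J_kM_{k-1}J_k]\succeq M_k$ after discarding PSD cross terms. That is the same positivity argument as keeping the two extreme words of $(\mathcal{A}+\mathcal{N})^k(I)$.

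The genuine gap is Step 3. It remains a list of candidate routes, and none of them, as described, yields the binomial bound. An $\mathcal{A}$ factor can be peeled off only at the ends of a word. On the right this works because $\mathcal{A}(I)=J^2\preceq(1-\epsilon)^2I$ and $\mathcal{N}$ is monotone; on the left it works because $\Tr(JYJ)\le(1-\epsilon)^2\Tr Y$. For an interior factor, self-adjointness of $\mathcal{N}$ under $\langle X,Y\rangle=\Tr(XY)$ gives $\Tr\,\mathcal{N}^a\mathcal{A}\mathcal{N}^b(I)=\Tr(N_aJN_bJ)$. You would need this to be at most $(1-\epsilon)^2\Tr(N_aN_b)=(1-\epsilon)^2\Tr N_{a+b}$. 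For general PSD $P,Q$ the inequality $\Tr(PJQJ)\le\|J\|^2\Tr(PQ)$ is false: take $P=e_1e_1^T$, $Q=e_2e_2^T$, and $J$ a scaled swap. Nothing in your plan explains why the pair $(N_a,N_b)$ should be special.

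The adjoint reformulation does not help. $\mathcal{A}$ and $\mathcal{N}$ are already self-adjoint, so passing to adjoints just reverses the word. Dominating an intermediate argument $Y$ by $\lambda_{\max}(Y)I$ throws away exactly the structure needed to recover $\Tr N_r$. Be aware that the paper's induction passes this point only by asserting the Loewner inequality $JN_rJ\preceq(1-\epsilon)^2N_r$. That inequality fails unless $J$ and $N_r$ commute, so you cannot borrow that step.

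A route that does work is a norm argument on symmetric matrices with the Frobenius norm:
\begin{itemize}
\item $\|\mathcal{A}\|\le(1-\epsilon)^2$.
\item $\mathcal{N}$ is self-adjoint, so $\|\mathcal{N}\|=\rho(\mathcal{N})$.
\item A Perron eigenvector $V\succeq0$ of the positive map $\mathcal{N}$, together with $V\preceq\lambda_{\max}(V)I$, gives $\rho(\mathcal{N})^r\le\Tr N_r$.
\end{itemize}
Submultiplicativity over words then gives $\Tr\mathcal{T}^k(I)\le d\,((1-\epsilon)^2+\rho(\mathcal{N}))^k$. This is the binomial bound summed to $r=k$, with an extra factor $d$ on the $r\ge1$ terms.

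This also makes your Step 4 rigorous. $\Tr N_1\le\epsilon$ forces $\rho(\mathcal{N})\le\epsilon$, which gives the bound $d(1-\epsilon+\epsilon^2)^k\to0$. You are right that $\Tr N_r\le\epsilon$ cannot simply be inserted into the binomial sum; the paper does so only by silently using $\epsilon^r$ in place of $\epsilon$.

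Finally, the $r=k$ term cannot be absorbed. Take $J=0$ (e.g.\ $\alpha=0$, $\eta H_R=I$, $B<n_r$). Then the left side equals $\Tr N_k>0$, while the right side with upper limit $k-1$ tends to $0$ as $\epsilon\to1$. The correct statement, which is what the paper's induction actually produces, sums to $r=k$.
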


\begin{proof}
\label{proof basic}

As we are taking the expectation value over the calculation, we can effectively transform the $J_k$ into following with random variables involved:

\begin{equation}
    \begin{split}
        J_k = (I - \eta(1 - \alpha)\frac{1}{B}  \sum_{i \in D_{r,k}}H_i + \eta\alpha\frac{1}{B}  \sum_{i \in D_{f,k}}H_i) = (I - \eta(1 - \alpha)\frac{1}{B}  \sum_{r \in D_{r}}x_rH_r + \eta\alpha\frac{1}{B}  \sum_{i \in D_{f}}x_fH_f),
    \end{split}
\end{equation}

where $x_r, x_f$ are the corresponding Bernoulli random variables with probability $P(x_r=1) = \frac{B}{n_r}$ and $P(x_f=1) = \frac{B}{n_f}$ and 0 otherwise.

To initiate the first step in characterize the difference between the unlearning and usually learning process, we first calculate the $E[J_1^TJ_1]$ as follows:

\begin{equation}
    \begin{split}
        E[J_1^TJ_1] &= E[(I - \eta(1 - \alpha)\frac{1}{B}  \sum_{r \in D_{r}}x_rH_r + \eta\alpha\frac{1}{B}  \sum_{i \in D_{f}}x_fH_f)^T(I - \eta(1 - \alpha)\frac{1}{B}  \sum_{r \in D_{r}}x_rH_r + \eta\alpha\frac{1}{B}  \sum_{i \in D_{f}}x_fH_f)] \\
        &=E[(I - \eta(1 - \alpha)\frac{1}{B}  \sum_{r \in D_{r}}x_rH_r)^T(I - \eta(1 - \alpha)\frac{1}{B}  \sum_{r \in D_{r}}x_rH_r) \\+ &2 (I - \eta(1 - \alpha)\frac{1}{B}  \sum_{r \in D_{r}}x_rH_r)^T(\eta\alpha\frac{1}{B}  \sum_{i \in D_{f}}x_fH_f) + (\eta\alpha\frac{1}{B}  \sum_{i \in D_{f}}x_fH_f)^T(\eta\alpha\frac{1}{B}  \sum_{i \in D_{f}}x_fH_f)].
    \end{split}
\end{equation}

Here, we separate the above equation into three part and take the expectation accordingly:

\begin{equation}
    \begin{split}
        &E[(I - \eta(1 - \alpha)\frac{1}{B}  \sum_{r \in D_{r}}x_rH_r)^T(I - \eta(1 - \alpha)\frac{1}{B}  \sum_{r \in D_{r}}x_rH_r)] \\&
        =
        E[(I - 2\eta(1 - \alpha)\frac{1}{B} \sum_{r \in D_{r}}x_rH_r+ \eta^2(1 - \alpha)^2\frac{1}{B}^2  \sum_{r \in D_{r}}x_rH_r\sum_{r \in D_{r}}x_rH_r)],
        \\
        &=I - 2\eta(1 - \alpha)\frac{1}{n_r} \sum_{r \in D_{r}}H_r+E [\eta^2(1 - \alpha)^2(\frac{1}{B})^2  \sum_{r \in D_{r}}x_rH_r\sum_{r \in D_{r}}x_rH_r], \\
        &=I - 2\eta(1 - \alpha)\frac{1}{n_r} \sum_{r \in D_{r}}H_r+E [\eta^2(1 - \alpha)^2(\frac{1}{B})^2  \sum_{{r'} \in D_{r}}\sum_{r \in D_{r}}x_{r'}x_rH_{r'}H_r], \\
        &=I - 2\eta(1 - \alpha)\frac{1}{n_r} \sum_{r \in D_{r}}H_r+  \eta^2(1 - \alpha)^2(\frac{1}{n_r})^2  (\sum_{r \in D_{r}}H_{r})^2 + \eta^2(1 - \alpha)^2\frac{1}{n_r}(\frac{1}{B} - \frac{1}{n_r})\sum_{r \in D_{r}} H_r^2, \\
        &= I -2\eta(1 - \alpha) H_R + \eta^2(1 - \alpha)^2H_R^2 + \eta^2(1 - \alpha)^2\frac{1}{n_r}(\frac{1}{B} - \frac{1}{n_r})\sum_{r \in D_{r}} H_r^2,\\
        &= (I - \eta(1 - \alpha)H_R)^2 + \eta^2(1 - \alpha)^2\frac{1}{n_r}(\frac{1}{B} - \frac{1}{n_r})\sum_{r \in D_{r}} H_r^2.
    \end{split}
\end{equation}

The random variables $x_r$ are independent to each other but not itself and therefore there exist one additional terms in the final line.  Also, Compared to the original sgd there exists additional multiplication of the $(1-\alpha)^2$. Next, we move on to the interaction term:

\begin{equation}
    \begin{split}
        E[2 (I - \eta(1 - \alpha) \frac{1}{B} \sum_{r \in D_{r}}x_rH_r)^T(\eta\alpha\frac{1}{B}  \sum_{i \in D_{f}}x_fH_f)] &= 2 (I - \eta(1 - \alpha) H_R)^T(\eta \alpha H_F). \\
    \end{split}
\end{equation}

We can directly formulate this as above due to the fact that we assume the sampling process of retain set and forget set to be independent. Last, the term arising due to the forget set:

\begin{equation}
    \begin{split}
        E[(\eta\alpha\frac{1}{B}  \sum_{f \in D_{f}}x_fH_f)^T(\eta\alpha\frac{1}{B}  \sum_{f \in D_{f}}x_fH_f)] = \eta^2\alpha^2 H_F^2 + \eta^2\alpha^2\frac{1}{n_f}(\frac{1}{B}-\frac{1}{n_f})\sum_{f\in D_f}^{n_f} H_f^2.\\
    \end{split}
\end{equation}

We then integrate the three part and reformulate the Jacobian:

\begin{equation}
    \begin{split}
        \label{first step}
        E[J_1^TJ_1] &= (I-\eta(1-\alpha)H_R)(I-\eta(1-\alpha)H_R) + 2 (I - \eta(1 - \alpha) H_R)^T(\eta \alpha H_F) + \eta^2\alpha^2 H_F^2 \\&+ \eta^2\alpha^2\frac{1}{n_f}(\frac{1}{B}-\frac{1}{n_f})\sum_{f\in D_f}^{n_f} H_f^2 + \eta^2(1-\alpha)^2\frac{1}{n_r}(\frac{1}{B} - \frac{1}{n_r})\sum_{r\in D_r}^{n_r} H_r^2, \\
        &= (I-\eta(1-\alpha)H_R+\eta \alpha H_F)(I-\eta(1-\alpha)H_R+\eta \alpha H_F) \\&+ \eta^2\alpha^2\frac{1}{n_f}(\frac{1}{B}-\frac{1}{n_f})\sum_{f\in D_f}^{n_f} H_f^2 + \eta^2(1-\alpha)^2\frac{1}{n_r}(\frac{1}{B} - \frac{1}{n_r})\sum_{r\in D_r}^{n_r} H_r^2, \\
        &= J^2 + \eta^2\alpha^2\frac{1}{n_f}(\frac{1}{B}-\frac{1}{n_f})\sum_{f\in D_f}^{n_f} H_f^2 + \eta^2(1-\alpha)^2\frac{1}{n_r}(\frac{1}{B} - \frac{1}{n_r})\sum_{r\in D_r}^{n_r} H_r^2,
    \end{split}
\end{equation}

where we define $J=I-\eta(1-\alpha)H_R+\eta \alpha H_F$. During the whole work, we will be analyzing on these terms to characterize the behavior of unlearning process.

We use inductive proof for both first and second part of the theory and we begin the proof as following:

\textbf{First part:}

\textbf{Base case: k = 1}

\begin{equation}
    \begin{split}
        M_1 = J^2 + C_f \sum_{f\in D_f}^{n_f} H_f^2 + C_r \sum_{r\in D_r}^{n_r} H_r^2 = J^2 + N_1  \preceq E[J_1^TJ_1],
    \end{split}
\end{equation}

where the left term match the equation \ref{first step} and therefore the basis case is set. Now we go further with inductive step.\\

\textbf{Inductive step: k-1}

\begin{equation}
    \begin{split}
        &E[J^T_kJ^T_{k-1}...J^T_1J_1...J_{k-1}] \succeq E[J^T_kM_{k-1}J_k], \\
        &=E[(I-\eta(1-\alpha)\frac{1}{B}\sum_{i\in D_{r}}x_rH_r+ \eta\alpha\frac{1}{B}\sum_{f\in D_{f}}x_fH_f) M_{k-1}(I-\eta(1-\alpha)\frac{1}{B}\sum_{i\in D_{r}}x_rH_r + \eta\alpha\frac{1}{B}\sum_{i\in D_{f}}x_fH_f)], \\
        &= JM_{k-1}J + C_f \sum_{f\in D_f}^{n_f} H_f M_{k-1}H_f + C_r \sum_{r\in D_r}^{n_r} H_rM_{k-1}H_r, \\
        &=J(J^{2(k-1)} + N_{k-1})J + C_f \sum_{f\in D_f}^{n_f} H_f (J^{2(k-1)} + N_{k-1})H_f + C_r \sum_{r\in D_r}^{n_r} H_r(J^{2(k-1)} + N_{k-1})H_r, \\
        &= J^{2k} + C_f \sum_{f\in D_f}^{n_f} H_f N_{k-1}H_f + C_r \sum_{r\in D_r}^{n_r} H_rN_{k-1}H_r +JN_{k-1}J + C_f \sum_{f\in D_f}^{n_f} H_f J^{2(k-1)}H_f + C_r \sum_{r\in D_r}^{n_r} H_rJ^{2(k-1)}H_r,\\
        &= M_k + JN_{k-1}J + C_f \sum_{f\in D_f}^{n_f} H_f J^{2(k-1)}H_f + C_r \sum_{r\in D_r}^{n_r} H_rJ^{2(k-1)}H_r, \\
        &\succeq M_k.
    \end{split}
\end{equation}

The last equality is due to the later three terms are both PSD by assumption as they are symmetric in terms of left and right half of whole multiplication. As we can lower bound through $M_k$, diverging of $N_k$ will lead to $M_k$ and cause the whole product to diverge.

\textbf{Second part:}

\textbf{Base step: k=1.}
\begin{equation}
    \begin{split}
        E[J^T_1J_1] =J^2 + N_1 \preceq (1-\epsilon)^2I + N_1 =\sum_{r=0}^1 {1\choose r} (1 - \epsilon)^{2(1-r)}N_r.
    \end{split}
\end{equation}
The $J^2$ is bounded by $(1-\epsilon)^2I$ due to our assumption. Now, we start with the inductive step\\

\textbf{Inductive step: k-1.}
\begin{equation}
    \begin{split}
        &E[J^T_kJ^T_{k-1}...J^T_1J_1...J_{k-1}J_{k-1}] \preceq E[J^T_k (\sum_{r=0}^{k-1} {k-1\choose r} (1 - \epsilon)^{2(k-1-r)}N_r) J_{k}], \\
        &= J(\sum_{r=0}^{k-1} {k-1\choose r} (1 - \epsilon)^{2(k-1-r)}N_r)J \\&+ C_f \sum_{i\in D_f}^{n_f} H_i (\sum_{r=0}^{k-1} {k-1\choose r} (1 - \epsilon)^{2(k-1-r)}N_r)H_i + C_r \sum_{i\in D_r}^{n_r} H_i (\sum_{r=0}^{k-1} {k-1\choose r} (1 - \epsilon)^{2(k-1-r)}N_r)H_i, \\
        &=J(\sum_{r=0}^{k-1} {k-1\choose r} (1 - \epsilon)^{2(k-1-r)}N_r)J +  \sum_{r=0}^{k-1} {k-1\choose r} (1 - \epsilon)^{2(k-1-r)} (C_f \sum_{i\in D_f}^{n_f} H_iN_rH_i + C_r \sum_{i\in D_r}^{n_r} H_iN_rH_i), \\
        &\preceq \sum_{r=0}^{k-1} {k-1\choose r} (1 - \epsilon)^{2(k-r)}N_r+  \sum_{r=0}^{k-1} {k-1\choose r} (1 - \epsilon)^{2(k-1-r)}N_{r+1},\\
        &=\sum_{r=0}^{k-1} {k-1\choose r} (1 - \epsilon)^{2(k-r)}N_r+  \sum_{r=0}^{k-1} {k-1\choose r} (1 - \epsilon)^{2(k-1-r)}N_{r+1},\\
        &= (1-\epsilon)^2N_o + \sum_{r=1}^{k-1} ({k-1\choose r} + {k-1\choose r - 1}) N_r + N_k ,\\
        &= (1-\epsilon)^2N_o + \sum_{r=1}^{k-1} {k\choose r} N_r + N_k,\\
        &= \sum_{r=0}^k {k\choose r} (1 - \epsilon)^{2(k-r)}N_r.
    \end{split}
\end{equation}
\end{proof}
The first and second inequality is due to the assumption in induction on previous step and we merge the coefficient in the last step through lemma \ref{lemma:binomial}.
Finally, if we further have that $\Tr[N_r] \leq \epsilon \; \forall r$, then 
\begin{equation}
    \begin{split}
        &E[\Tr[J^T_kJ^T_{k-1}...J^T_1J_1...J_{k-1}J_{k-1}]], \\
        &= \sum_{r=0}^k {k\choose r} (1 - \epsilon)^{2k-r}\Tr [N_r],\\
        &\leq \sum_{r=0}^k {k\choose r} (1 - \epsilon)^{2(k-r)} \epsilon^r,\\
        &\leq ((1-\epsilon)^2 + \epsilon)^k \leq (1-\epsilon)^k,
    \end{split}
\end{equation}
which will converge to zero when $k\to \infty$.

\clearpage




\subsection{Proof of theorem~\ref{thm:divergence}}
\begin{theorem}[Restated]
    Under the setup of Lemma~\ref{lemma:stability-unlearning}, the unlearning process will diverge if the mix-Hessian eigenvalue exceeds a threshold determined by the coherence. In particular, if
    \begin{equation}
        \lambda_{\max}(D) \;\ge\; \frac{\sqrt{2\,}\sigma}{\;\eta \Big((1-\alpha)\,n_f\,\sqrt{\frac{n_r}{B}-1}\;+\;\alpha\,n_r\,\sqrt{\frac{n_f}{B}-1}\Big)}\,,
    \end{equation}
    then $\displaystyle \lim_{k\to\infty}\mathbb{E}\|w_k\|^2 = \infty$. Equivalently, condition \eqref{eq:diverge-cond} guarantees the unlearning algorithm will \emph{escape} the original minima (diverge) due to the stochastic dynamics.
\end{theorem}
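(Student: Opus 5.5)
We will use part (1) of Lemma~\ref{lemma:stability-unlearning}: it suffices to show that $\Tr(N_k)\to\infty$. The plan follows the single-set argument of \citet{dexter2024precisecharacterizationsgdstability}. The recursion for $N_k$ is a sum of congruences $H_i N_{k-1} H_i$ over both sets. Our aim is to lower bound $\Tr(N_k)$ by $c^k$ times a positive quantity, with $c>1$ under condition~\eqref{eq:diverge-cond}.

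\textbf{Step 1: rewrite the recursion in terms of the mix-Hessians.}
Write $a_r=\sqrt{C_r}$ and $a_f=\sqrt{C_f}$. Since all $H_i$ are PSD, $C_r H_r N H_r + C_f H_f N H_f$ can be compared with the symmetrized product $(a_rH_r+a_fH_f)N(a_rH_r+a_fH_f)$. By the parallelogram-type inequality $XNX+YNY\succeq \tfrac12 (X+Y)N(X+Y)$ for PSD $N$, we get
\[
C_rH_rNH_r + C_fH_fNH_f \succeq \tfrac12 (a_r+a_f)^2 D_{rf} N D_{rf}.
\]
This is where the factor $\sqrt2$ in the threshold comes from.

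Summing over pairs requires counting multiplicities. Each retain index appears $n_f$ times in $\sum_{(r,f)\in\mathcal P}$ and each forget index appears $n_r$ times. We therefore reweight so that
\[
N_k \succeq \tfrac{1}{2}\,\kappa \sum_{(r,f)\in\mathcal P} D_{rf}N_{k-1}D_{rf},
\]
with $\kappa$ chosen to match. This is the step where $\sqrt{C_r}\,n_f$ and $\sqrt{C_f}\,n_r$ combine. Using $\sqrt{C_r}=\eta(1-\alpha)\tfrac{1}{n_r}\sqrt{n_r/B-1}$ and the analogous expression for $C_f$, this should produce the denominator $\eta\big((1-\alpha)n_f\sqrt{n_r/B-1}+\alpha n_r\sqrt{n_f/B-1}\big)$, up to normalization by $n_rn_f$.

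\textbf{Step 2: iterate and take the trace.}
Unrolling the reduced recursion $k$ times gives
\[
\Tr(N_k)\ge (\kappa/2)^k\sum_{p_1,\dots,p_k}\Tr\big(D_{p_k}\cdots D_{p_1}D_{p_1}\cdots D_{p_k}\big)
=(\kappa/2)^k\sum \|D_{p_1}\cdots D_{p_k}\|_F^2 .
\]
Choose $p_1=\dots=p_k=p^\star$ to be the pair attaining $\max\lambda_{\max}(D_{rf})$. Alternatively, and more faithfully to the coherence, lower bound the sum of squared path products via the Perron vector of $S$. Cauchy--Schwarz/Hölder over paths, using Lemma~\ref{lemma:matrix norm1} and Lemma~\ref{lemma:matrix norm2}, converts $\sum\|D_{p_1}^{1/2}\cdots\|$ into $\lambda_{\max}(S)^{k}$-type quantities.

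\textbf{Step 3: relate to $\lambda_{\max}(D)$.}
We need $\sum_p D_p = n_rn_f D$. Using the Rayleigh quotient of $\sum_p D_pND_p$ along the top eigenvector $v$ of $D$, together with Jensen, we get
\[
v^\top\Big(\sum_p D_pND_p\Big)v \ge \frac{1}{n_rn_f}\big(v^\top(\textstyle\sum_p D_p)\,\cdots\big)^2 .
\]
The coherence $\sigma$ enters as the normalization that controls how much of $\sum_p D_p^2$ is lost when passing to $\lambda_{\max}(D)^2$: $\lambda_{\max}(S)$ upper bounds cross-alignment, and dividing by $\max\lambda_{\max}(D_{rf})$ gives $\sigma$. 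The target is a per-step growth factor $c=\big(\eta(\cdots)\lambda_{\max}(D)\big)^2/(2\sigma^2)$, which is $\ge1$ exactly under \eqref{eq:diverge-cond}. If needed, we would use strict inequality, or note that the extra positive terms dropped in Lemma~\ref{lemma:stability-unlearning} give strict growth.

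\textbf{Main obstacle.}
The hard part is Step 3: obtaining a lower bound for $\Tr(N_k)$ that grows like $\lambda_{\max}(D)^{2k}/\sigma^{2k}$ rather than being only an upper bound in $\sigma$, since coherence naturally appears in upper bounds. My first attempt would mirror Dexter et al.'s divergence proof. There one restricts $N_{k-1}$ to the top eigenvector direction of $D$, tracks the scalar $v^\top N_k v$, and bounds $\sum_p (v^\top D_p v)^2 \ge (n_rn_f\lambda_{\max}(D))^2/(\text{effective number of aligned pairs})$. The effective count is then bounded by $\lambda_{\max}(S)/\max\lambda_{\max}(D_{rf})=\sigma$ via a Gershgorin/Perron argument on $S$.
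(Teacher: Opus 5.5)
Your proposal has two genuine gaps: the all-pairs reweighting in Step~1 loses a factor at every iteration, and Step~3 is left open with a mechanism that fails. The good parts first. Your Step~1 inequality $XNX+YNY\succeq\tfrac12(X+Y)N(X+Y)$ is correct, and it is a neater source of the $\sqrt2$ than the paper's Cauchy--Schwarz over the $2^k$ strings in $\{r,f\}^k$. Choosing a constant path at the maximizing pair in Step~2 is also the right move.

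\textbf{Step 1 loses a factor per step.} We have $\sum_{(r,f)\in\mathcal P}(C_rH_rNH_r+C_fH_fNH_f)=n_fC_r\sum_rH_rNH_r+n_rC_f\sum_fH_fNH_f$. Hence any $\kappa$ valid for all Hessians is at most $2(\sqrt{C_r}+\sqrt{C_f})^2/\max(n_r,n_f)$; to see this, take all $H_f=0$, or all $H_r=0$. You therefore pay a factor of order $\max(n_r,n_f)$ at \emph{every} step. Your threshold ends up too large by roughly $\sqrt{\max(n_r,n_f)}$, and the $n_rn_f$ in the denominator does not come from multiplicities at all. The paper instead averages over pairs only once, at the level of whole paths, at a $k$-independent cost $1/(n_rn_f)$. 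In your language: fix $p^\star=(r^\star,f^\star)$ and drop every other PSD term,
\[
N_k\succeq C_rH_{r^\star}N_{k-1}H_{r^\star}+C_fH_{f^\star}N_{k-1}H_{f^\star}\succeq\tfrac12(\sqrt{C_r}+\sqrt{C_f})^2\,D_{p^\star}N_{k-1}D_{p^\star},
\]
so that $\Tr(N_k)\ge\big(\tfrac12(\sqrt{C_r}+\sqrt{C_f})^2\lambda_{\max}(D_{p^\star})^2\big)^k$.

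\textbf{Step 3 is the real gap, and your proposed mechanism fails.} First, the scalar $v^\top N_kv$ does not obey a closed recursion, since in general $N_{k-1}\not\succeq(v^\top N_{k-1}v)\,vv^\top$. Second, the bound $\sum_p(v^\top D_pv)^2\ge(n_rn_f\lambda_{\max}(D))^2/\sigma$ is false. Take three pairs (e.g.\ $n_r=3$, $n_f=1$, $H_f=0$) with $D_1=ce_1e_1^\top$, $D_2=ce_2e_2^\top$, $D_3=\epsilon ww^\top$, where $w=(e_1+e_2)/\sqrt2$ and $0<\epsilon<c$. Then:
\begin{itemize}
\item $v=w$ and $\lambda_{\max}(S)=c+\epsilon=3\lambda_{\max}(D)$;
\item $\sigma=(c+\epsilon)/c$;
\item yet $\sum_p(v^\top D_pv)^2=c^2/2+\epsilon^2<c(c+\epsilon)$.
\end{itemize}

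\textbf{The missing idea.} You do not need coherence in a lower bound. Since $\sigma=\lambda_{\max}(S)/\max_p\lambda_{\max}(D_p)$, a lower bound on $\max_p\lambda_{\max}(D_p)$ by $n_rn_f\lambda_{\max}(D)/\sigma$ is the same as an \emph{upper} bound $n_rn_f\lambda_{\max}(D)\le\lambda_{\max}(S)$. This is exactly the paper's step.
\begin{itemize}
\item With cyclic indices, $|\Tr(D_{p_1}\cdots D_{p_k})|\le\prod_i\|D_{p_i}^{1/2}D_{p_{i+1}}^{1/2}\|_F$.
\item Hence $\lambda_{\max}(D)^k\le\Tr(D^k)\le(n_rn_f)^{-k}\Tr(S^k)\le(n_rn_f)^{1-k}\lambda_{\max}(S)^k$.
\item Letting $k\to\infty$ gives $\max_p\lambda_{\max}(D_p)\ge n_rn_f\lambda_{\max}(D)/\sigma$.
\end{itemize}
Substituting this into the single-pair bound gives per-step growth $(\sqrt{C_r}+\sqrt{C_f})^2(n_rn_f)^2\lambda_{\max}(D)^2/(2\sigma^2)$. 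Since $(\sqrt{C_r}+\sqrt{C_f})n_rn_f=\eta\big((1-\alpha)n_f\sqrt{n_r/B-1}+\alpha n_r\sqrt{n_f/B-1}\big)$, this exceeds $1$ exactly under the stated condition, taken strictly. Repaired this way, your route reaches the paper's threshold without its $k$-independent $1/d$ losses.
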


\begin{proof}
\label{proof: diverging}
To simplify the notation, we use the following 
\begin{equation}
    \begin{split}
        L_k &\in \{r,f\}^k 
        \;\; \text{(a string of length $k$ over the alphabet $\{r,f\}$)}, \\
        L_k[i] &\;\;\mapsto\;\; \text{the $i$-th symbol of $L_k$}, 
        \;\; 1 \le i \le k.
    \end{split}
\end{equation}

We know that based on part one lemma \ref{lemma:stability-unlearning}, we can lower bound the $N_k$ to lower bound the $E[\Tr[J^T_kJ^T_{k-1}...J^T_1J_1...J_{k-1}J_k]]$. First, We can write the overall sum as follows:
\begin{equation}
    \begin{split}
        \Tr[N_k] &= \sum_{L_k\in \{r,f\}^k} C_r^{\sum_{i=1}^k \mathbf{1}\{L_k[i]=r\}}
       \,C_f^{\sum_{i=1}^k \mathbf{1}\{L_k[i]=f\}}  (\sum_{a_k\in D_{L_k[k]}}\sum_{a_{k-1}\in D_{L_k[k-1]}}...\sum_{a_1\in D_{L_k[k]}}) \Tr[H_{a_k}...H_{a_1}H_{a_1}...H_{a_k}], \\
        &= \sum_{L_k\in \{r,f\}^k}C_r^{\sum_{i=1}^k \mathbf{1}\{L_k[i]=r\}}
       \,C_f^{\sum_{i=1}^k \mathbf{1}\{L_k[i]=f\}}  (\sum_{a_k\in D_{L_k[k]}}\sum_{a_{k-1}\in D_{L_k[k-1]}}...\sum_{a_1\in D_{L_k[k]}}) \|H_{a_k}...H_{a_1}\|_F^2 ,\\
        &\geq\frac{1}{d}\sum_{L_k\in \{r,f\}^k}C_r^{\sum_{i=1}^k \mathbf{1}\{L_k[i]=r\}}
       \,C_f^{\sum_{i=1}^k \mathbf{1}\{L_k[i]=f\}}  (\sum_{a_k\in D_{L_k[k]}}\sum_{a_{k-1}\in D_{L_k[k-1]}}...\sum_{a_1\in D_{L_k[k]}}) \|H_{a_k}...H_{a_1}\|_{\mathcal{S}_1}^2 ,\\
       &\geq \frac{1}{d} \sum_{L_k\in \{r,f\}^k}C_r^{\sum_{i=1}^k \mathbf{1}\{L_k[i]=r\}}
       \,C_f^{\sum_{i=1}^k \mathbf{1}\{L_k[i]=f\}}  (\sum_{a_k\in D_{L_k[k]}}\sum_{a_{k-1}\in D_{L_k[k-1]}}...\sum_{a_1\in D_{L_k[k]}}) \Tr[H_{a_k}...H_{a_1}]^2 ,\\
       &\geq \frac{1}{n_rn_f} \frac{1}{d} \sum_{L_k\in \{r,f\}^k}C_r^{\sum_{i=1}^k \mathbf{1}\{L_k[i]=r\}}
       \,C_f^{\sum_{i=1}^k \mathbf{1}\{L_k[i]=f\}}  (\sum_{a_r\in D_r}\sum_{a_f\in D_f}) \Tr[H_{a_{L_k[k]}}...H_{a_{L_k[1]}}]^2 ,\\
       &= \frac{1}{n_rn_f} \frac{1}{d} (\sum_{a_r\in D_r}\sum_{a_f\in D_f}) \sum_{L_k\in \{r,f\}^k}C_r^{\sum_{i=1}^k \mathbf{1}\{L_k[i]=r\}}
       \,C_f^{\sum_{i=1}^k \mathbf{1}\{L_k[i]=f\}}   \Tr[H_{a_{L_k[k]}}...H_{a_{L_k[1]}}]^2 ,\\
        &\geq \frac{1}{n_rn_f} \frac{1}{d} (\sum_{a_r\in D_r}\sum_{a_f\in D_f}) \frac{1}{2^k}  (\sum_{L_k\in \{r,f\}^k}\Tr[C_r^{\frac{\sum_{i=1}^k \mathbf{1}\{L_k[i]=r\}}{2}}
       \,C_f^{\frac{\sum_{i=1}^k \mathbf{1}\{L_k[i]=f\}}{2}}   H_{a_{L_k[k]}}...H_{a_{L_k[1]}}] )^2,\\
       &= \frac{1}{n_rn_f} \frac{1}{d} (\sum_{a_r\in D_r}\sum_{a_f\in D_f}) \frac{1}{2^k}  (  \Tr[\sum_{L_k\in \{r,f\}^k}C_r^{\frac{\sum_{i=1}^k \mathbf{1}\{L_k[i]=r\}}{2}}
       \,C_f^{\frac{\sum_{i=1}^k \mathbf{1}\{L_k[i]=f\}}{2}} H_{a_{L_k[k]}}...H_{a_{L_k[1]}}] )^2 ,\\
       &= \frac{1}{n_rn_f} \frac{1}{d} (\sum_{a_r\in D_r}\sum_{a_f\in D_f}) \frac{1}{2^k} (C_r^{\frac{1}{2}}+C_f^{\frac{1}{2}})^{2k}  (  \Tr[(\frac{C_r^{\frac{1}{2}}}{C_r^{\frac{1}{2}}+C_f^{\frac{1}{2}}} H_{a_r} + \frac{C_f^{\frac{1}{2}}}{C_r^{\frac{1}{2}}+C_f^{\frac{1}{2}}}H_{a_f})^k])^2,\\
       &= (\frac{1}{n_rn_f})^2 \frac{1}{d}  \frac{1}{2^k} (C_r^{\frac{1}{2}}+C_f^{\frac{1}{2}})^{2k} (\sum_{a_r\in D_r}\sum_{a_f\in D_f}  \Tr[(\frac{C_r^{\frac{1}{2}}}{C_r^{\frac{1}{2}}+C_f^{\frac{1}{2}}} H_{a_r} + \frac{C_f^{\frac{1}{2}}}{C_r^{\frac{1}{2}}+C_f^{\frac{1}{2}}}H_{a_f})^k])^2.\\
    \end{split}
\end{equation}

For the first and second inequality, we use lemma \ref{lemma:matrix norm1} and \ref{lemma:matrix norm2}. For the third inequality, we reduce the summation to $\sum_{a_r\in D_r}\sum_{a_f\in D_f}$. As there are terms without $D_r$ or $D_f$ involved, we divided the whole equation by $n_fn_r$ to ensure inequality. For the forth inequality, we use the lemma~\ref{lemma: l1l2 norm}.

Before we try to connect the relationship between the quantity to the above, we first reindex the following:
\begin{equation}
    \begin{split}
    \frac{1}{n_rn_f}\sum_{a_r\in D_r, a_f\in D_f} \frac{C_r^{\frac{1}{2}}}{C_r^{\frac{1}{2}}+C_f^{\frac{1}{2}}} H_{a_r} + \frac{C_f^{\frac{1}{2}}}{C_r^{\frac{1}{2}}+C_f^{\frac{1}{2}}}H_{a_f} = \frac{1}{n_rn_f} \sum_{rf} D_{rf} = D,
    \end{split}
\end{equation}
where $D_{rf} = \frac{C_r^{\frac{1}{2}}}{C_r^{\frac{1}{2}}+C_f^{\frac{1}{2}}} H_{r} + \frac{C_f^{\frac{1}{2}}}{C_r^{\frac{1}{2}}+C_f^{\frac{1}{2}}}H_{f}$ and the subscript indicates that summing over corresponding subset (retain and forget set). Now, we proceed to relate different quantities
\begin{equation}
    \begin{split}
    &\Tr [(\frac{1}{n_rn_f}\sum_{a_r\in D_r, a_f\in D_f} \frac{C_r^{\frac{1}{2}}}{C_r^{\frac{1}{2}}+C_f^{\frac{1}{2}}} H_{a_r} + \frac{C_f^{\frac{1}{2}}}{C_r^{\frac{1}{2}}+C_f^{\frac{1}{2}}}H_{a_f})^k] = \Tr [(\frac{1}{n_rn_f})^k (\sum_{rf} D_{rf})^k], \\
    &=\Tr [(\frac{1}{n_rn_f})^k \sum_{rf_1}\sum_{rf_2}...\sum_{rf_k} D_{rf_1}D_{rf_2}...D_{rf_{k-1}}D_{rf_k}] ,\\
    &\leq d(\frac{1}{n_rn_f})^k\sum_{rf_1}\sum_{rf_2}...\sum_{rf_k} \|D_{rf_k}^{\frac{1}{2}}D_{rf_1}^{\frac{1}{2}}\|_F\|D_{rf_1}^{\frac{1}{2}}D_{rf_2}^{\frac{1}{2}}\|_F...\|D_{rf_{k-1}}^{\frac{1}{2}}D_{rf_k}^{\frac{1}{2}}\|_F,\\
    &= d(\frac{1}{n_rn_f})^k \sum_{rf_1}\sum_{rf_2}...\sum_{rf_k} S_{rf_k,rf_1}S_{rf_1,rf_2}...S_{rf_{k-1},rf_k},\\
    &= d(\frac{1}{n_rn_f})^k \Tr (S^k), \\
    &\leq d^2(\frac{1}{n_rn_f})^k \lambda_1(S)^k.
    \end{split}
\end{equation}

Therefore, we say that 
\begin{equation}
    \begin{split}
    \Tr [D^k] \leq d^2(\frac{1}{n_rn_f})^k \lambda_1(S)^k,
    \end{split}
\end{equation}

and we can have that
\begin{equation}
    \begin{split}
    \frac{(n_rn_f)^k\Tr [D^k]}{d^2 \sigma^k} &\leq \frac{(n_rn_f)^k\Tr [D^k]}{d^2 \lambda_1(S)^k}\max_{i\in D_r j\in D_f} \lambda_1(\frac{C_r^{\frac{1}{2}}}{C_r^{\frac{1}{2}}+C_f^{\frac{1}{2}}} H_{i} + \frac{C_f^{\frac{1}{2}}}{C_r^{\frac{1}{2}}+C_f^{\frac{1}{2}}}H_{j})^k, \\
    &\leq \sum_{a_r\in D_r}\sum_{a_f\in D_f} \Tr[(\frac{C_r^{\frac{1}{2}}}{C_r^{\frac{1}{2}}+C_f^{\frac{1}{2}}} H_{a_r} + \frac{C_f^{\frac{1}{2}}}{C_r^{\frac{1}{2}}+C_f^{\frac{1}{2}}}H_{a_f})^k].
    \end{split}
\end{equation}

Therefore, we can conclude that 
\begin{equation}
    \begin{split}
    \Tr[N_k] &\geq \frac{1}{d} \frac{1}{n_fn_r} \frac{1}{2^k} (C_r^{\frac{1}{2}}+C_f^{\frac{1}{2}})^{2k} (\frac{(n_rn_f)^k\Tr [D^k]}{d^2 \sigma^k})^2, \\
    &\geq \frac{1}{d^5} \frac{1}{n_fn_r} \frac{1}{2^k} (C_r^{\frac{1}{2}}+C_f^{\frac{1}{2}})^{2k} \frac{(n_rn_f)^{2k}\lambda_1 (D)^{2k}}{ \sigma^{2k}} ,\\
    &\geq \frac{1}{d^5} \frac{1}{n_fn_r} \frac{1}{2^k} (C_r^{\frac{1}{2}}+C_f^{\frac{1}{2}})^{2k} \frac{(n_rn_f)^{2k}\lambda_1 (D)^{2k}}{ \sigma^{2k}}.
    \end{split}
\end{equation}

Lastly, we can see that whether the trace diverge or not depend on those term with power of $k$. Therefore, by rearranging and plug in the definition of the coefficient into those terms, we can have that 

\begin{equation}
    \begin{split}
    \lambda_1(D) \geq \frac{\sqrt{2}\sigma}{\eta} \Big( (1-\alpha)n_f(\frac{n_r}{B}-1)^{\frac{1}{2}}+\alpha n_r (\frac{n_f}{B}-1)^{\frac{1}{2}} \Big)^{-1},
    \end{split}
\end{equation}

which is the condition for diverging behavior
\end{proof}
\subsection{Proof of theorem \ref{thm:convergence}}
\begin{theorem}[(Restate) Matching lower bound.]
    Suppose $\lambda_{\max}(D)$ and $\sigma$ satisfy
    \begin{equation}
        \lambda_{\max}(D) \;\le\; \frac{2\,\sigma}{\;\eta\, C'_r \,\big(\sigma + n_f\big(\frac{n_r}{B}-1\big)\big)}\,,
    \end{equation}
    where $C'_r = \sqrt{C_r}/(\sqrt{C_r}+\sqrt{C_f})$ (with $C_r,C_f$ from Definition~\ref{def:mixHessian}). Then there exists a choice of PSD Hessians $\{H_i\}$ for the retain and forget sets such that the unlearning update converges (i.e. $\lim_{k\to\infty}\mathbb{E}\|w_k\|^2 = 0$) under those Hessians.
\end{theorem}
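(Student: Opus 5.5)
This is an existence statement, so we will build an explicit family of Hessians and check convergence with part (2) of Lemma~\ref{lemma:stability-unlearning}. We mimic the synthetic construction of the experiments section and the lower-bound construction of \citet{dexter2024precisecharacterizationsgdstability}: all per-sample Hessians are rank one along standard basis vectors, $H_i = m\,e_{j(i)}e_{j(i)}^T$. We give $Q$ samples of each set a common direction $e_1$ and put the remaining samples on mutually orthogonal directions. Because everything commutes, $D$, $S$, $J$ and the recursion \eqref{eq:Nk-recursion} all diagonalize in the standard basis. In this setting $Q$ controls the coherence $\sigma$: $S$ is block structured, with a dense $Q$-block coming from pairs sharing $e_1$ and near-diagonal entries elsewhere. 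The scale $m$ controls $\lambda_{\max}(D)$. The first step is to compute $\lambda_{\max}(D_{rf})$, $\lambda_{\max}(S)$ and $\lambda_{\max}(D)$ in closed form as functions of $(m,Q,n_r,n_f,C'_r,C'_f)$, and then to solve for $m,Q$ so that a prescribed pair $(\lambda_{\max}(D),\sigma)$ satisfying \eqref{eq:converge-cond} is realized.

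The second step is to verify the hypotheses of Lemma~\ref{lemma:stability-unlearning}(2).

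\emph{Spectral hypothesis.} $J = I-\eta(1-\alpha)H_R+\eta\alpha H_F$ is diagonal. To keep $J\succeq -(1-\epsilon)I$ and $J\preceq (1-\epsilon)I$, we arrange that each forget direction is also covered by retain curvature of at least comparable weight. The simplest way is to pair forget samples with retain samples on the same directions, so the net coefficient on each coordinate is $\eta((1-\alpha)h_R-\alpha h_F)>0$. We then need this coefficient to lie in $(0,2)$; the factor $2$ in \eqref{eq:converge-cond} comes precisely from $\eta\lambda<2$. If some coordinate is untouched, so that $J=1$ there, we instead restrict to the span of the curvature directions, where the null directions contribute nothing to growth beyond a constant. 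Equivalently, we will state convergence modulo the kernel of all $H_i$.

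\emph{Noise hypothesis.} The recursion for $N_k$ acts diagonally: on coordinate $j$,
\[
(N_k)_{jj}=\Big(C_r\sum_{i\in D_r} h_{ij}^2+C_f\sum_{i\in D_f} h_{ij}^2\Big)(N_{k-1})_{jj}.
\]
So $\Tr(N_k)$ decays geometrically as soon as every per-coordinate multiplier is below one. We then combine this with the binomial bound of Lemma~\ref{lemma:stability-unlearning}(2).

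The third step, which is the main obstacle, is to show that the per-coordinate multiplier on $e_1$ is below one exactly when \eqref{eq:converge-cond} holds. On $e_1$ the multiplier is roughly $C_r Q m^2 + C_f Q m^2$, while $\lambda_{\max}(D)\approx C'_r Q m/n_r+\dots$ and $\sigma$ behaves like $Q$ plus a diagonal contribution. Substituting $C_r=\eta^2(1-\alpha)^2\frac1{n_r}(\frac1B-\frac1{n_r})$ should turn the condition into an inequality in which $n_f(\frac{n_r}{B}-1)$ appears next to $\sigma$, which is the shape of the stated bound. This bookkeeping, including how $C'_r$ versus $C'_f$ enters, is where the constants may not match exactly. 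If they do not, we will simplify by putting all the mass on the retain side, taking the forget Hessians to be small or parallel, so that only $C'_r$ matters. We will also use the fact that $\Tr(N_r)\le\epsilon$ only needs to hold after normalizing the initial perturbation, or after a finite number of steps, since the decay is geometric.
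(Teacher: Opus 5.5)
\textbf{Main gap: you test the wrong quantity.} In your third step you aim to show that the noise multiplier on $e_1$ is below one. That is neither sufficient for convergence nor the quantity that produces the stated threshold.

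In a diagonal construction the $J_k$ are i.i.d.\ and commute, so coordinatewise $\mathbb{E}[(w_k)_j^2]=(J_{jj}^2+\rho_j)^k$ exactly, with $\rho_j=C_r\sum_{i\in D_r}(H_i)_{jj}^2+C_f\sum_{i\in D_f}(H_i)_{jj}^2$. Convergence needs $J_{jj}^2+\rho_j<1$. Having $|J_{jj}|\le 1-\epsilon$ and $\rho_j<1$ separately does not give this: for instance $J_{jj}=0.9$, $\rho_j=0.5$ diverges.

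Going through Lemma~\ref{lemma:stability-unlearning}(2) does not rescue the argument. With $\Tr(N_r)\le d\rho^r$, the binomial bound is $d((1-\epsilon)^2+\rho)^k$, which again needs $(1-\epsilon)^2+\rho<1$. The clause ``$\Tr(N_r)\le\epsilon$ for all $r$'' cannot be used as written: $\Tr(N_0)=d$, and a uniform bound makes the binomial sum grow like $\epsilon(1+(1-\epsilon)^2)^k$. Normalizing $w_0$ or discarding finitely many steps does not change that.

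The paper skips the lemma and computes $e_1^T\mathbb{E}[J_1^TJ_1]e_1$ directly. This coupling is exactly what shapes the bound. Let $x=\eta(1-\alpha)\lambda_{\max}(D)/C'_r$, the eigenvalue of $\eta(1-\alpha)H_R$ in the construction below. The condition is
\[
(1-x)^2+x^2\,\frac{n_f}{\sigma}\Big(\frac{n_r}{B}-1\Big)<1
\iff
x\Big(1+\frac{n_f}{\sigma}\Big(\frac{n_r}{B}-1\Big)\Big)<2 .
\]
So the factor $2$ and the additive $\sigma+n_f(\frac{n_r}{B}-1)$ come from a single quadratic inequality. They do not come from $\eta\lambda<2$ plus a separate noise condition; the noise condition alone gives a square-root threshold.

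Solving gives $\lambda_{\max}(D)<\frac{2\sigma C'_r}{\eta(1-\alpha)(\sigma+n_f(n_r/B-1))}$, which is what the paper's own derivation yields. It differs from the displayed hypothesis by the factor $(C'_r)^2/(1-\alpha)$. Hence the displayed condition guarantees convergence of this construction only when $(C'_r)^2\ge 1-\alpha$. Your suspicion about the constants is well founded; prove the form you actually obtain rather than forcing the displayed one.

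\textbf{Your construction also needs repair.} $S$ is indexed by pairs, and two pairs sharing a retain (or forget) sample share its curvature. With $Q$ samples of each set on $e_1$ and the rest on orthogonal directions, every row of $S$ has $n_f-1$ off-diagonal entries at least $C'_rm$ and $n_r-1$ at least $C'_fm$. So $\sigma$ is not ``$Q$ plus a diagonal contribution'', and calibrating $(m,Q)$ from that heuristic would be wrong.

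Your fallback is the paper's construction and should be the main one:
\begin{itemize}
\item all forget Hessians are zero;
\item $Q=\sigma/n_f$ retain Hessians equal $me_1e_1^T$ with $m=\lambda_{\max}(D)\,n_rn_f/(C'_r\sigma)$, assuming as the paper does that $\sigma/n_f$ is an integer at most $n_r$;
\item the remaining retain Hessians are zero.
\end{itemize}
Then $D=C'_rH_R=\lambda_{\max}(D)e_1e_1^T$ and every nonzero $D_{rf}$ equals $C'_rme_1e_1^T$. The normalized $S$ is an all-ones block of size $Qn_f=\sigma$. The noise term on $e_1$ is $C_rQm^2=x^2\frac{n_f}{\sigma}(\frac{n_r}{B}-1)$, which is where the $n_f$ in the threshold comes from.

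\textbf{Do not weaken the conclusion} to convergence ``modulo the kernel''. That does not give $\lim_{k\to\infty}\mathbb{E}\|w_k\|^2=0$. Since you are choosing the Hessians, take $d=1$; the paper implicitly does this by analyzing only the $e_1$ axis. The exact coordinate formula above then yields convergence directly.
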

\begin{proof}
We prove by construction in the following manner. We construct the retain set by setting $H_i = m \cdot e_1e_1^T \;\; \forall\;i \
\in [\frac{\sigma}{n_f}]$. ($m = (C_r')^{-1} \frac{\lambda_1(D)n_r}{\frac{\sigma}{n_f}}$ and $C_r' = \frac{C_r^{\frac{1}{2}}}{C_r^{\frac{1}{2}}+C_f^{\frac{1}{2}}}$ and the definition of $C_r$ and $C_f$ are mentioned in definition \ref{def:mix hessian}.) Otherwise, we set the Hessian to be zero matrix. For the forget set, we set all matrix to be zero matrix.

We first verify that the eigenvalue of mix-Hessian is indeed the assigned value $\lambda_1(D)$.

\begin{equation}
\begin{aligned}
   D = \frac{1}{n_rn_f} \sum_{rf}C_r'H_r+C_f'H_f = \frac{1}{n_rn_f} \sum_{rf}C_r'(C_r')^{-1} \frac{\lambda_1(D)n_r}{\frac{\sigma}{n_f}}e_1e_1^T = \lambda_1(D)e_1e_1^T,
\end{aligned}
\end{equation}
and we have that the construction indeed have the corresponding mix-Hessian eigenvalue.

We know verify that the coherence measure is of the assigned value $\sigma$. We first note that the element of the coherence matrix is:

\begin{equation}
\begin{aligned}
   S_{rf,r'f'} = \sqrt{\Tr[(C_r' H_r+ C_f'H_f)(C_r' H_{r'}+ C_f'H_{f'})]} = C_r'm=\frac{\lambda_1(D)n_r}{\frac{\sigma}{n_f}},\;\; \forall r,r' \in [\frac{\sigma}{n_f}].
\end{aligned}
\end{equation}
else it is zero. We know that there is $n_f \cdot \frac{\sigma}{n_f} = \sigma$ nonzero elements for each row and column. We note that we will also need to divide the coherence matrix by $\max_{rf}D_{rf} =\max_{rf} C_r'H_r+C_f'H_f = \frac{\lambda_1(D)n_r}{\frac{\sigma}{n_f}}$. Finally, each element is 1 after this division, and we can get the eigenvalue of the matrix to be $\sigma$ and verify that the construction is valid.

Now, we note that in our construction, we have each step $J_i$ to commute to each other since every matrix involved is diagonal, so we can focus on one step to calculate the condition that lead to diverging or converging and since we only intentionally set our matrix to be one dimensional, we can study behavior on only one axis $e_1$ by plugging in the above as follows:

\begin{equation}
\begin{aligned}
   e_1 E[J_1J_1]e_1 &= e_1 [I - 2 \eta (1-\alpha)  H_R + \eta^2(1-\alpha)^2H_r^2 + \eta^2 (1-\alpha)^2\sum_rH_r^2] e_1, \\
    &= 1 - 2\eta(1-\alpha)(C_r')^{-1}\lambda_1(D) + \eta^2 (1-\alpha)^2(C_r')^{-2}\lambda_1(D)^2 + \frac{(C_r')^{-2}}{\sigma} \lambda_1(D)^2\eta^2(1-\alpha)^2n_f(\frac{n_r}{B}-1).\\
\end{aligned}
\end{equation}

As we want to study the converging behavior, we want the above to be smaller than 1 to have repetitive multiplication lead to converging.

\begin{equation}
\begin{aligned}
    &1 - 2\eta(1-\alpha)(C_r')^{-1}\lambda_1(D) + \eta^2 (1-\alpha)^2(C_r')^{-2}\lambda_1(D)^2 + \frac{(C_r')^{-2}}{\sigma} \lambda_1(D)^2\eta^2(1-\alpha)^2n_f(\frac{n_r}{B}-1) \leq 1,\\
    &\implies 2 \geq \eta(1-\alpha)(C_r')^{-1}\lambda_1(D)(1 + \frac{n_f}{\sigma}(\frac{n_r}{B}-1)),\\
    &\implies 2 \geq \frac{\eta}{\sigma}(1-\alpha)(C_r')^{-1}\lambda_1(D)(\sigma + n_f(\frac{n_r}{B}-1)) ,\\
    &\implies \lambda_1(D) \leq \frac{2\sigma}{\eta} C_r' \Big( (1-\alpha)(\sigma+n_f(\frac{n_r}{B}-1))^{-1}\Big).
\end{aligned}
\end{equation}
\label{proof:matching bound}
\end{proof}

\subsection{Proof of theorem~\ref{thm: CNN eigenvalue}}
\begin{theorem}[Restate]
    Under the data model of Definition~\ref{def:kou-data} and the two-layer ReLU CNN defined above, suppose the network is trained to near-zero training loss. Then with probability at least $1 - 8\delta$ (over the random draw of the dataset), the largest eigenvalue of the coherence matrix $S$ for the retain/forget split satisfies
    \begin{equation}
        \lambda_{\max}(S) \;\le\; \mathcal{O}\!\Big({n_r\, n_f\, d \sigma^2}\Big[(\sqrt{C'_r} + \sqrt{C'_f})^2\,(\mathrm{SNR})^2 + (C'_r + C'_f)\Big]\Big)\,,
    \end{equation}
    \begin{equation}
        \max_{rf}\lambda_{\max}(D_{rf})  \leq \mathcal{O} ((C'_r + C'_f)(d \sigma^2(\mathrm{SNR})^2+1)), 
    \end{equation}
    where $C'_r$ and $C'_f$ are the normalized retain/forget weight fractions as defined in Theorem~\ref{thm:convergence}. Consider division of two quantities and we can find that for small SNR limit and large SNR limit:
    \begin{equation}
        \lim_{\text{SNR}\to 0} \frac{\lambda_{\max}(S)^{\text{upper}}}{\max_{rf}D_{rf}^{\text{upper}}} =  \mathcal{O}(n_rn_f)\;\;, \lim_{\text{SNR}\to \infty} \frac{\lambda_{\max}(S)^{\text{upper}}}{\max_{rf}D_{rf}^{\text{upper}}} =  \mathcal{O}(n_rn_f(1 + \frac{2\sqrt{C_r'C_f'}}{C_r'+C_f'})).
    \end{equation}
\end{theorem}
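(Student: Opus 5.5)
The plan is to compute the per-sample Hessians of the two-layer ReLU CNN explicitly at the trained solution, and then bound the mix-coherence entries by plain norm inequalities.

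\textbf{Step 1: per-sample Hessians.} Since $f(W,x)$ is piecewise linear in $W$, the Hessian of $\ell_i=\log(1+e^{-y_if(W,x_i)})$ away from activation boundaries is the Gauss--Newton term
\[
H_i=\ell_i''\,\nabla_W f(W,x_i)\nabla_W f(W,x_i)^T .
\]
The gradient $\nabla_W f(W,x_i)$ has block entries $\pm\frac{1}{m}\mathbf 1\{\langle w_r^{(j)},x^{(p)}_i\rangle>0\}\,x^{(p)}_i$, and here $x^{(p)}_i\in\{y_i\mu,\xi_i\}$. So each $H_i=g_ig_i^T$ is rank one, with $g_i=u_i+v_i$, where $u_i$ is built from copies of $\pm\mu$ and $v_i$ from copies of $\xi_i$, each masked by activation patterns. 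Near zero training loss we use $\ell_i''\le 1$; this only affects constants.

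\textbf{Step 2: norm bounds.} We have $\|u_i\|^2\lesssim\|\mu\|^2=d\sigma^2\mathrm{SNR}^2$ and $\|v_i\|^2\lesssim\|\xi_i\|^2$. By Lemma~\ref{chi-square} and Lemma~\ref{chi}, with a union bound this gives $\max_i\|\xi_i\|^2\lesssim d\sigma^2$ with high probability, and this is where the probability $1-8\delta$ comes from. The resulting bound is
\[
\lambda_{\max}(D_{rf})\le C'_r\|g_r\|^2+C'_f\|g_f\|^2\lesssim (C'_r+C'_f)\,(d\sigma^2\mathrm{SNR}^2+1),
\]
after normalizing the noise contribution. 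This proves the second inequality.

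\textbf{Step 3: entrywise bound on $S$.} Write $D_{rf}=G_{rf}G_{rf}^T$ with $G_{rf}=[\sqrt{C'_r}g_r,\sqrt{C'_f}g_f]$. Then
\[
\|D_{rf}^{1/2}D_{r'f'}^{1/2}\|_F=\|G_{rf}^TG_{r'f'}\|_F,
\]
which is bounded by the sum of the four inner products $\sqrt{C'_aC'_b}\,|\langle g_a,g_b\rangle|$. Each inner product splits into a signal part $|\langle u_a,u_b\rangle|\le\|\mu\|^2$ and noise parts $\langle v_a,v_b\rangle$, $\langle u_a,v_b\rangle$. The signal parts sum to at most $(\sqrt{C'_r}+\sqrt{C'_f})^2 d\sigma^2\mathrm{SNR}^2$. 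For the noise parts we keep only the diagonal terms, where the same $\xi$ appears on both sides; these give at most $(C'_r+C'_f)d\sigma^2$. The cross terms $\langle\xi_a,\xi_b\rangle$ for $a\ne b$ are $O(\sigma^2\sqrt{d\log(1/\delta)})$ with high probability and are absorbed into constants.

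\textbf{Step 4: eigenvalue bound and limits.} Finally, $\lambda_{\max}(S)\le\max_{\text{row}}\sum_{\text{col}}S\le n_rn_f\max S_{\cdot,\cdot}$ by Gershgorin, which yields the first inequality. The two limits then follow by dividing the two upper bounds and letting $\mathrm{SNR}\to0$ or $\mathrm{SNR}\to\infty$, using $(\sqrt{C'_r}+\sqrt{C'_f})^2=C'_r+C'_f+2\sqrt{C'_rC'_f}$.

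\textbf{Main obstacle.} The hardest part is Step 3's treatment of the noise cross terms and the activation masks. The masks depend on the trained $W$, which in turn depends on the $\xi_i$, so the near-orthogonality of different noise vectors cannot be used naively. I would first try bounding everything deterministically by Cauchy--Schwarz, using only norms; this is loose but already gives the stated $\mathcal O$ form. I would appeal to the random-matrix concentration of $\langle\xi_a,\xi_b\rangle$ only when a sharper constant is needed.
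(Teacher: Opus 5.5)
Your proposal is correct. It shares the paper's skeleton: rank-one Gauss--Newton Hessians $H_i=\ell_i''\nabla_W f\,\nabla_W f^\top$, each coherence entry bounded by $\sum_{a\in\{r,f\},\,b\in\{r',f'\}}\sqrt{C'_aC'_b}\,|\langle g_a,g_b\rangle|$, and each inner product controlled by $\|\mu\|^2$, $|\mu^\top\xi|$ and $|\xi_a^\top\xi_b|$. The paper gets the entry bound from subadditivity of the square root on $\Tr(D_{rf}D_{r'f'})$, which is the same as your $\|G_{rf}^\top G_{r'f'}\|_F$ step.

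The routes differ only at the eigenvalue step. The paper applies Cauchy--Schwarz to every term, so each entry factorizes as $S_{(r,f),(r',f')}\le c\,z_{rf}z_{r'f'}$ with $z_{rf}=\sqrt{C'_r}(\|\mu\|+\|\xi_r\|)+\sqrt{C'_f}(\|\mu\|+\|\xi_f\|)$. Entrywise domination of a nonnegative matrix then gives $\lambda_{\max}(S)\le c\|z\|^2$. The paper controls $\|z\|^2$ by concentrating $\sum_i\|\xi_i\|$ and $\sum_i\|\xi_i\|^2$ (Lemmas~\ref{chi} and~\ref{chi-square}), so it needs only average norms.

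Your bound $\lambda_{\max}(S)\le n_rn_f\max_{p,q}S_{pq}$ needs only the single event $\max_i\|\xi_i\|^2=O(d\sigma^2)$. That follows from a per-sample chi-square tail at level $\delta/n$ plus a union bound, and the paper already uses this event for the second inequality. Your route is more elementary and gives the same order.

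Neither argument, as written, exploits near-orthogonality of the noise, but yours can be sharpened to do so. Sum row $(r,f)$ entry by entry instead of taking $n_rn_f$ times the largest entry. A full $\|\xi\|^2\approx d\sigma^2$ then appears only in the $n_f$ columns with $r'=r$ and the $n_r$ columns with $f'=f$. The noise contribution drops from $n_rn_f(C'_r+C'_f)d\sigma^2$ to roughly $(C'_rn_f+C'_fn_r)d\sigma^2+n_rn_f(\sqrt{C'_r}+\sqrt{C'_f})^2\sigma^2\sqrt{d\log(n/\delta)}$. The rank-one factorization cannot see this improvement.

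The obstacle you flag is not real. An activation mask only multiplies a data-only scalar ($\|\mu\|^2$, $\mu^\top\xi_b$ or $\xi_a^\top\xi_b$) by a count in $[0,2m]$. For example, $|\langle v_a,v_b\rangle|\le\frac{2}{m}|\xi_a^\top\xi_b|$ for every $W$. So standard concentration of these scalars applies even though $W$ depends on the data.

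One step does need repair. In Step~2, ``after normalizing the noise contribution'' is not a derivation. What your bound actually yields is $\lambda_{\max}(D_{rf})=O\big((C'_r+C'_f)(\|\mu\|^2+d\sigma^2)\big)=O\big((C'_r+C'_f)\,d\sigma^2(\mathrm{SNR}^2+1)\big)$. This is exactly what the paper's proof establishes; the ``$+1$'' in the displayed statement is a slip.

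This matters for Step~4. Dividing your first bound by $(C'_r+C'_f)(d\sigma^2\mathrm{SNR}^2+1)$ gives $n_rn_f\,d\sigma^2$ as $\mathrm{SNR}\to0$, not $O(n_rn_f)$. The stated limits hold only with the $d\sigma^2(\mathrm{SNR}^2+1)$ denominator, or under an explicitly stated assumption $d\sigma^2=\Theta(1)$.
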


\begin{proof}
\label{thm: CNN eigenvalue}    

We first calculate the gradient of one sample respective to one of the $w_{j,r}$.

\begin{equation}
    \begin{aligned}
        \frac{\partial \ell(y_i\cdot f(W,x_i))}{\partial w_{j,r}} = \ell'_i\cdot \frac{j }{m}  \cdot (\mathbf{1}_{\{\langle w_{j,r},y_i\cdot\boldsymbol{\mu}\rangle>0\}} \cdot \boldsymbol{\mu} + \mathbf{1}_{\{\langle w_{j,r},\xi_i\rangle>0\}} \cdot  y_i\cdot\xi_i).
    \end{aligned}
\end{equation}

There are several index in the above equation (i.e., $j$ and $r$) which we use to take derivative with respect to a specific feature weight vector. We will continue to use this notation for future calculation. Now, we move to calculate the second derivative with respect to two different feature of weights for data $i$ as follows:

\begin{equation}
    \begin{aligned}
        &\frac{\partial^2 \ell(y_i\cdot f(W,x_i))}{\partial w_{j,r}\partial w_{j',r'}} =\\ &\ell_i''\cdot \frac{j j'}{m^2}  \cdot (\mathbf{1}_{\{\langle w_{j,r},y_i\cdot\boldsymbol{\mu}\rangle>0\}} \cdot \boldsymbol{\mu} + \mathbf{1}_{\{\langle w_{j,r},\xi_i\rangle>0\}} \cdot  y_i\cdot\xi_i)
        (\mathbf{1}_{\{\langle w_{j',r'},y_i\cdot\boldsymbol{\mu}\rangle>0\}} \cdot \boldsymbol{\mu} + \mathbf{1}_{\{\langle w_{j',r'},\xi_i\rangle>0\}} \cdot  y_i\cdot\xi_i)^T.
    \end{aligned}
\end{equation}

The above is one block of the Hessian. In the following, we will simplify the notation for indicator function (derivative of ReLU) to $\mathbf{1}_{j',r',y_i\cdot\boldsymbol{\mu}}$ and $\mathbf{1}_{j',r',\boldsymbol{\xi}}$ to ease the heavy notation. To calculate the coherence matrix, we need to calculate trace of Hessian product for different sample, 

\begin{equation}
    \begin{aligned}
        &\Tr[H_iH_k] = \sum_{j,j',r,r'}\Tr[\frac{\partial^2 \ell(y_i\cdot f(W,x_i))}{\partial w_{j,r}\partial w_{j',r'}} \frac{\partial^2 \ell(y_k\cdot f(W,x_k))}{\partial w_{j',r'}\partial w_{j,r}}], \\
        &=\frac{\ell_i''\ell_k''}{m^4}\sum_{j,j',r,r'} (\mathbf{1}_{{j,r},y_k\cdot\boldsymbol{\mu}} \cdot \boldsymbol{\mu} + \mathbf{1}_{{j,r},\xi_k} \cdot y_k\cdot \xi_k)^T(\mathbf{1}_{{j,r},y_i\cdot\boldsymbol{\mu}} \cdot \boldsymbol{\mu} + \mathbf{1}_{{j,r},\xi_i} \cdot y_i\cdot \xi_i)
        \\&(\mathbf{1}_{{j',r'},y_i\cdot\boldsymbol{\mu}} \cdot \boldsymbol{\mu} + \mathbf{1}_{{j',r'},\xi_i} \cdot  y_i\cdot\xi_i)^T
        (\mathbf{1}_{{j',r'},y_k\cdot\boldsymbol{\mu}} \cdot \boldsymbol{\mu} + \mathbf{1}_{{j',r'},\xi_k} \cdot  y_k\cdot\xi_k),\\
        &=\frac{\ell_i''\ell_k''}{m^4}\Big((\sum_{j,r} \mathbf{1}_{{j,r},y_k\cdot\boldsymbol{\mu}}\mathbf{1}_{{j,r},y_i\cdot\boldsymbol{\mu}})\|\boldsymbol{\mu}\|^2)+(\sum_{j,r}\mathbf{1}_{{j,r},y_k\cdot\boldsymbol{\mu}}\mathbf{1}_{{j,r},\xi_k})\boldsymbol{\mu}^T\xi_k+\\&(\sum_{j,r}\mathbf{1}_{{j,r},y_k\cdot\boldsymbol{\mu}}\mathbf{1}_{{j,r},\xi_i})\boldsymbol{\mu}^T\xi_i +(\sum_{j,r}\mathbf{1}_{{j,r},\xi_k}\mathbf{1}_{{j,r},\xi_i})\xi_k^T\xi_i\Big),\\
        &\Big((\sum_{j,r} \mathbf{1}_{{j',r'},y_k\cdot\boldsymbol{\mu}}\mathbf{1}_{{j',r'},y_i\cdot\boldsymbol{\mu}})\|\boldsymbol{\mu}\|^2)+(\sum_{j',r'}\mathbf{1}_{{j',r'},y_k\cdot\boldsymbol{\mu}}\mathbf{1}_{{j',r'},\xi_k})\boldsymbol{\mu}^T\xi_k+\\&(\sum_{j',r'}\mathbf{1}_{{j',r'},y_k\cdot\boldsymbol{\mu}}\mathbf{1}_{{j',r'},\xi_i})\boldsymbol{\mu}^T\xi_i +(\sum_{j',r'}\mathbf{1}_{{j',r'},\xi_k}\mathbf{1}_{{j',r'},\xi_i})\xi_k^T\xi_i\Big),\\
        &\leq 4\frac{\ell_i''\ell_k''}{m^2} (\|\boldsymbol{\mu}\|^2 + |\boldsymbol{\mu}^T\xi_k| + |\boldsymbol{\mu}^T\xi_i| + |\xi_i^T\xi_k|)^2 ,\\
        &\leq
        \frac{4}{m^2} (\|\boldsymbol{\mu}\|^2 + |\boldsymbol{\mu}^T\xi_k| + |\boldsymbol{\mu}^T\xi_i| + |\xi_i^T\xi_k|)^2.
    \end{aligned}
\end{equation}

We now analyze each term in the coherence matrix.

\begin{equation}
    \begin{aligned}
        S_{r_1f_{1'},r_2f_{2'}} &= \sqrt{\Tr((C_r'H_{r_1}+C_f'H_{f_{1'}})(C_r'H_{r_2}+C_f'H_{f_{2'}}))},\\
        &=\sqrt{\Tr[C_r'^2H_{r_1}H_{r_2}]+\Tr[C_r'C_f'H_{r_1}H_{f_{2'}}]+\Tr[C_r'C_f'H_{f_{1'}}H_{r_2}]+\Tr[C_f'^2H_{f_{1'}}H_{r_{2'}}]},\\
        &\leq \sqrt{\Tr[C_r'^2H_{r_1}H_{r_2}]} + \sqrt{\Tr[C_r'C_f'H_{r_1}H_{f_{2'}}]}+\sqrt{\Tr[C_r'C_f'H_{f_{1'}}H_{r_2}]}+\sqrt{\Tr[C_f'^2H_{f_{1'}}H_{r_{2'}}]},\\
    \end{aligned}
\end{equation}

where the $C_r'$ and $C_f'$ are respectively the normalized coefficient mentioned in the previous section.

As our goat is to estimate the largest eigenvalue of the coherence matrix and its relation between different variables in the design. To estimate the largest eigenvalue, we incur $\epsilon$-net that is used random matrix theory 

\begin{equation}
    \begin{aligned}
       \lambda_1 = \sup_{\|x\|=1}  \langle x, Sx \rangle.
    \end{aligned}
\end{equation}

For one vector $x$, we can write the expression as summation:

\begin{equation}
    \begin{aligned}
         &\langle x, Sx\rangle = \sum_{r_1f_{1'},r_2f_{2'}}S_{r_1f_{1'},r_2f_{2'}}x_{r_1f_{1'}}x_{r_2f_{2'}},\\ 
         &\leq \sum_{r_1f_{1'},r_2f_{2'}} (\sqrt{\Tr[C_r'^2H_{r_1}H_{r_2}]} + \sqrt{\Tr[C_r'C_f'H_{r_1}H_{f_{2'}}]}+\sqrt{\Tr[C_r'C_f'H_{f_{1'}}H_{r_2}]}+\sqrt({\Tr[C_f'^2H_{f_{1'}}H_{r_{2'}}]})x_{r_1f_{1'}}x_{r_2f_{2'}}.
    \end{aligned}
\end{equation}

We can estimate the above through the random matrix theory and upper bound the largest eigenvalue through the elementwise calculation that we set up and use the tail bound for each random variable to provide relationship between controlled variable and the resulting largest eigenvalue. We first separate the discussion into several cases. First case, when we have four different samples $r_1, r_2, f_1', f_2'$, we can have that

\begin{equation}
    \begin{aligned}
         &(\sqrt{\Tr[C_r'^2H_{r_1}H_{r_2}]} + \sqrt{\Tr[C_r'C_f'H_{r_1}H_{f_{2'}}]}+\sqrt{\Tr[C_r'C_f'H_{f_{1'}}H_{r_2}]}+\sqrt{\Tr[C_f'^2H_{f_{1'}}H_{r_{2'}}]})x_{r_1f_{1'}}x_{r_2f_{2'}},\\
         &\leq (C_r'\frac{2}{m} (\|\boldsymbol{\mu}\|^2 + |\boldsymbol{\mu}^T\xi_{r1}| + |\boldsymbol{\mu}^T\xi_{r2}| + |\xi_{r1}^T\xi_{r2}|) + \sqrt{C_r'C_f'}\frac{2}{m} (\|\boldsymbol{\mu}\|^2 + |\boldsymbol{\mu}^T\xi_{r1}| + |\boldsymbol{\mu}^T\xi_{f2'}| + |\xi_{r1}^T\xi_{f2'}|) ,\\ 
         &+ \sqrt{C_r'C_f'}\frac{2}{m} (\|\boldsymbol{\mu}\|^2 + |\boldsymbol{\mu}^T\xi_{r2}| + |\boldsymbol{\mu}^T\xi_{f1'}| + |\xi_{f1'}^T\xi_{r2}|) +C_f'\frac{2}{m} (\|\boldsymbol{\mu}\|^2 + |\boldsymbol{\mu}^T\xi_{f2'}| + |\boldsymbol{\mu}^T\xi_{f1'}| + |\xi_{f1'}^T\xi_{f2'}|))x_{r_1f_{1'}}x_{r_2f_{2'}} ,\\
         &\leq (\sqrt{C_r'}\|\boldsymbol{\mu}\| + \sqrt{C_f'}\|\boldsymbol{\mu}\| + \sqrt{C'_r}\|\xi_{r1}\|+\sqrt{C_f'}\|\xi_{f1'}\|)(\sqrt{C_r'}\|\boldsymbol{\mu}\| + \sqrt{C_f'}\|\boldsymbol{\mu}\| + \sqrt{C'_r}\|\xi_{r2}\|+\sqrt{C_f'}\|\xi_{f2'}\|)x_{r_1f_{1'}}x_{r_2f_{2'}}.\\
    \end{aligned}
\end{equation}

Our aiming in the above is to establish relationship between different variables used in the CNN network. In the above, we can see that we can upper bound the eigenvalue by the cross product of the vector $v_{rf} = \sqrt{C_r'}\|\boldsymbol{\mu}\| + \sqrt{C_f'}\|\boldsymbol{\mu}\| + \sqrt{C'_r}\|\xi_{r1}\|+\sqrt{C_f'}\|\xi_{f1'}\|$ since the coherence matrix is upper bound elementwise by the vector. (i.e., $\lambda_1(S) \leq \lambda_1(v v^T) = \|v^Tv\|^2$) and this turns the estimation of the eigenvalue into estimation of the magnitude of the vector.

Now, we analyze the $v^Tv$,

\begin{equation}
    \begin{aligned}
    &v^Tv = \sum_{rf} (\sqrt{C_r'}\|\boldsymbol{\mu}\| + \sqrt{C_f'}\|\boldsymbol{\mu}\| + \sqrt{C'_r}\|\xi_{r1}\|+\sqrt{C_f'}\|\xi_{f1'}\|)(\sqrt{C_r'}\|\boldsymbol{\mu}\| + \sqrt{C_f'}\|\boldsymbol{\mu}\| + \sqrt{C'_r}\|\xi_{r1}\|+\sqrt{C_f'}\|\xi_{f1'}\|),\\
    &=\sum_{rf} (\sqrt{C_r'}\|\boldsymbol{\mu}\| + \sqrt{C_f'}\|\boldsymbol{\mu}\|)^2 + 2(\sqrt{C_r'}\|\boldsymbol{\mu}\| + \sqrt{C_f'}\|\boldsymbol{\mu}\|) (\sqrt{C'_r}\|\xi_{r1}\|+\sqrt{C_f'}\|\xi_{f1'}\|) + (\sqrt{C'_r}\|\xi_{r1}\|+\sqrt{C_f'}\|\xi_{f1'}\|)^2,\\
    &=n_rn_f(\sqrt{C_r'}\|\boldsymbol{\mu}\| + \sqrt{C_f'}\|\boldsymbol{\mu}\|)^2 + 2(\sqrt{C_r'}\|\boldsymbol{\mu}\| + \sqrt{C_f'}\|\boldsymbol{\mu}\|)\sum_{rf}(\sqrt{C'_r}\|\xi_{r1}\|+\sqrt{C_f'}\|\xi_{f1'}\|) +\\ &\sum_{rf}(C'_r\|\xi_{r1}\|^2+C_f'\|\xi_{f1'}\|^2+\sqrt{C_r'C_f'}'\|\xi_{r1}\|\|\xi_{f1'}\|).
    \end{aligned}
\end{equation}

We analyze different terms as follows:

\begin{equation}
    \begin{aligned}
    2(\sqrt{C_r'}\|\boldsymbol{\mu}\| + \sqrt{C_f'}\|\boldsymbol{\mu}\|)&\sum_{rf}(\sqrt{C'_r}\|\xi_{r1}\|+\sqrt{C_f'}\|\xi_{f1'}\|) = \\& 2(\sqrt{C_r'}\|\boldsymbol{\mu}\| + \sqrt{C_f'}\|\boldsymbol{\mu}\|)(n_f\sum_r\sqrt{C'_r}\|\xi_{r1}\|+n_r\sum_f\sqrt{C'_f}\|\xi_{f1'}\|).
    \end{aligned}
\end{equation}

We know that $\|\xi_{r1}\|, \|\xi_{f1'}\|$ are chi-distribution which is also sub-exponential distribution. We can utilize the tail bound for summation of the sub-exponential random variables to obtain high probability bound on the summation. We can have that with probability $2\delta$,

\begin{equation}
    \begin{aligned}
    &2(\sqrt{C_r'}\|\boldsymbol{\mu}\| + \sqrt{C_f'}\|\boldsymbol{\mu}\|)\sum_{rf}(\sqrt{C'_r}\|\xi_{r1}\|+\sqrt{C_f'}\|\xi_{f1'}\|), \\
    & \leq 2(\sqrt{C_r'}\|\boldsymbol{\mu}\| + \sqrt{C_f'}\|\boldsymbol{\mu}\|) (n_rn_f\sqrt{C_r'}\sigma\sqrt{d}+n_fn_r\sqrt{C_f'}\sigma\sqrt{d}+n_f\sqrt{\frac{n_r\sigma^2}{C_1}\log(\frac{2}{\delta})}+n_r\sqrt{\frac{n_f\sigma^2}{C_1}\log(\frac{2}{\delta})}).
    \end{aligned}
\end{equation}

Now, we move to the next chi-square distribution terms $C_r'\sum\|\xi_{r_1}\|^2$, $C_f'\sum\|\xi_{f'_1}\|^2$. By using the lemma \ref{chi-square}, we can have that with probability $1 -\delta$, 
\begin{equation}
    \begin{aligned}
    C_r'\sum_{rf}\|\xi_{r_1}\|^2 \leq C_r'( n_fn_r \sigma^2d + n_f \sqrt{\frac{n_r\sigma^4d}{C_2}\log(\frac{2}{\delta})}).
    \end{aligned}
\end{equation}

and so is the $C_f'\sum\|\xi_{f'_1}\|^2$,

\begin{equation}
    \begin{aligned}
    C_f'\sum_{rf}\|\xi_{r_1}\|^2 \leq C_f' (n_fn_r \sigma^2d + n_r \sqrt{\frac{n_f\sigma^4d}{C_2}\log(\frac{2}{\delta})}).
    \end{aligned}
\end{equation}

The term $\sum_{rf} \sqrt{C_r'C_f'}\|\xi_{r1}\|\|\xi_{f1'}\|$ can also be dealt with in the same manner,

\begin{equation}
    \begin{aligned}
    &\sqrt{C_r'C_f'}\sum_{rf} \|\xi_{r1}\|\|\xi_{f1'}\| \leq \sqrt{C_r'C_f'} (\sum_{r} \|\xi_{r1}\|)(\sum_{f} \|\xi_{f1'}\|), \\
    &\leq \sqrt{C_r'C_f'} (n_r \sqrt{2}\sigma \frac{\Gamma((d+1)/2)}{\Gamma(d/2)}+\sqrt{\frac{n_r\sigma^2}{C_1}\log(\frac{2}{\delta})})(n_f \sqrt{2}\sigma \frac{\Gamma((d+1)/2)}{\Gamma(d/2)}+\sqrt{\frac{n_f\sigma^2}{C_1}\log(\frac{2}{\delta})}). \\
    \end{aligned}
\end{equation}

To simplify the analysis, we only keep terms with magnitude at least $n_fn_r$. We will reach that with probability $1-6\delta$

\begin{equation}
    \begin{aligned}
    \lambda_1(S) &\leq \mathcal{O}\Big(n_fn_r\big((\sqrt{C_r'}+\sqrt{C_f'})^2\|\boldsymbol{\mu}\|^2+2\sqrt{2}(\sqrt{C_r'}+\sqrt{C_f'})^2\|\boldsymbol{\mu}\|\sigma(\frac{\Gamma((d+1)/2)}{\Gamma(d/2)})\\&+(C_r'+C_f')\sigma^2d+2\sigma^2\sqrt{C_r'C_f'}(\frac{\Gamma((d+1)/2)}{\Gamma(d/2)})^2\big)\Big).
    \end{aligned}
\end{equation}

To see how signal noise ratio ($\text{SNR} = \frac{\|\boldsymbol{\mu}\|}{\sigma\sqrt{d}}$) interact with the right hand side, we extract a factor $\sigma^2d$ from all terms involved:

\begin{equation}
    \begin{aligned}
    \lambda_1(S) &\leq \mathcal{O}\Big(n_fn_r\sigma^2d\big((\sqrt{C_r'}+\sqrt{C_f'})^2(\text{SNR})^2+\frac{2\sqrt{2}}{\sqrt{d}}(\sqrt{C_r'}+\sqrt{C_f'})^2(\frac{\Gamma((d+1)/2)}{\Gamma(d/2)})(\text{SNR})\\&+(C_r'+C_f')+\frac{2}{d}\sqrt{C_r'C_f'}(\frac{\Gamma((d+1)/2)}{\Gamma(d/2)})^2\big)\Big),\\
    &\leq \mathcal{O}\Big(n_fn_r\sigma^2d\big((\sqrt{C_r'}+\sqrt{C_f'})^2(\text{SNR})^2+(C_r'+C_f')\big)\Big).
    \end{aligned}
\end{equation}

where in the last equation, we omit terms with d in the denominator as it tends to be large when we consider larger network. 

For the second part of the proof, we know that $H_i$ have block structures as follows:

\begin{equation}
    \begin{aligned}
        &\frac{\partial^2 \ell(y_i\cdot f(W,x_i))}{\partial w_{j,r}\partial w_{j',r'}} =\\ &\ell_i''\cdot \frac{j j'}{m^2}  \cdot (\mathbf{1}_{\{\langle w_{j,r},y_i\cdot\boldsymbol{\mu}\rangle>0\}} \cdot \boldsymbol{\mu} + \mathbf{1}_{\{\langle w_{j,r},\xi_i\rangle>0\}} \cdot  y_i\cdot\xi_i)
        (\mathbf{1}_{\{\langle w_{j',r'},y_i\cdot\boldsymbol{\mu}\rangle>0\}} \cdot \boldsymbol{\mu} + \mathbf{1}_{\{\langle w_{j',r'},\xi_i\rangle>0\}} \cdot  y_i\cdot\xi_i)^T.
    \end{aligned}
\end{equation}

We can see that the whole $H_i$ matrix can be regarded as outer product of vector $vv^T$ where we have $v_{jr}$ being 
\begin{equation}
    \begin{aligned}
        v_{jr} = \frac{l_i''j}{m}(\mathbf{1}_{\{\langle w_{j,r},y_i\cdot\boldsymbol{\mu}\rangle>0\}} \cdot \boldsymbol{\mu} + \mathbf{1}_{\{\langle w_{j,r},\xi_i\rangle>0\}} \cdot  y_i\cdot\xi_i).
    \end{aligned}
\end{equation}

We can immediately know that the eigenvalue of the $H_i$ will be upper bounded by $v^Tv$ as follows:
\begin{equation}
    \begin{aligned}
        \lambda_{\max}(H_i) \leq v^Tv& = \frac{l_i''}{m^2} \sum_{jr} (\mathbf{1}_{\{\langle w_{j,r},y_i\cdot\boldsymbol{\mu}\rangle>0\}} \cdot \boldsymbol{\mu} + \mathbf{1}_{\{\langle w_{j,r},\xi_i\rangle>0\}} \cdot  y_i\cdot\xi_i)^2, \\
        &\leq 2\frac{l_i''}{m^2} \sum_{jr} \mathbf{1}_{\{\langle w_{j,r},y_i\cdot\boldsymbol{\mu}\rangle>0\}} \|\boldsymbol{\mu}\|^2 + \mathbf{1}_{\{\langle w_{j,r},\xi_i\rangle>0\}}\|\xi_i\|^2,\\
        &\leq 2\frac{l_i''}{m^2} \sum_{jr} \|\boldsymbol{\mu}\|^2 + \|\xi_i\|^2,\\
        &\leq  2\frac{1}{m^2} \sum_{jr} \|\boldsymbol{\mu}\|^2 + \|\xi_i\|^2 ,\\
        &= C(\|\boldsymbol{\mu}\|^2 + \|\xi_i\|^2),
    \end{aligned}
\end{equation}

where we use $C$ to encompass all constants.

To bound the $\max_{rf} \lambda_{\max}(D_{rf}) =\lambda_{\max}( C_r'H_r+C_f'H_f)$, we can use the following:
\begin{equation}
    \lambda_{\max}(D_{rf}) =  \lambda_{\max}( C_r'H_r+C_f'H_f) \leq C_r'\lambda_{\max}( H_r)+C_f'\lambda_{\max}(H_f).
\end{equation}

Then for any $\delta \in (0,1)$, with probability at least $1-\delta$, we can upper bound the the $H_r$ with the following ($\|\xi_i\|$ is subexponential):

\begin{equation}
    \begin{aligned}
    \max_{1 \le i \le n_r} C\big(\|\boldsymbol{\mu}\|^2 + \|\xi_i\|^2\big)\;&\le\;
    C\left(
        \|\boldsymbol{\mu}\|^2 
        + \sigma^2\Big[d \;+\; 2\sqrt{d \log\frac{n_r}{\delta}} \;+\; 2\log\frac{n_r}{\delta}\Big]
    \right), \\
    &\leq \mathcal{O}(\|\boldsymbol{\mu}\|^2 +\sigma^2d).
    \end{aligned}
\end{equation}

Similarly, we can have the bound on $H_f$ which is of same order and jointly we can have that with probability $1-8\delta$
\begin{equation}
    \lambda_{\max}(D_{rf}) \leq \mathcal{O}((C_r'+C_f')(\|\boldsymbol{\mu}\|^2 +\sigma^2d)) = \mathcal{O}((C_r'+C_f')\sigma^2d(\text{SNR}^2+1))
\end{equation}

Last is the division and take the limit and we can have the following:
\begin{equation}
    \lim_{\text{SNR}\to 0} \frac{\lambda_{\max}(S)^{\text{upper}}}{\max_{rf}D_{rf}^{\text{upper}}} =  \mathcal{O}(n_rn_f)\;\;, \lim_{\text{SNR}\to \infty} \frac{\lambda_{\max}(S)^{\text{upper}}}{\max_{rf}D_{rf}^{\text{upper}}} =  \mathcal{O}(n_rn_f(1 + \frac{2\sqrt{C_r'C_f'}}{C_r'+C_f'})).
\end{equation}

\end{proof}